\renewcommand*{\backrefalt}[4]{%
    \ifcase #1 \footnotesize{(Not cited.)}%
    \or        \footnotesize{(Cited on page~#2.)}%
    \else      \footnotesize{(Cited on pages~#2.)}%
    \fi}
\long\def\comment#1{}
\newtheorem{theorem}{Theorem}[section]
\newtheorem{lemma}[theorem]{Lemma}
\newtheorem{proposition}[theorem]{Proposition}
\newtheorem{definition}{Definition}[section]
\newtheorem{example}{Example}[section]
\newtheorem{remark}[theorem]{Remark}
\newtheorem{assumption}[theorem]{Assumption}
\newcommand{\argmin}{\mathop{\rm argmin}}
\newcommand{\dist}{\textnormal{dist}}
\newcommand{\supp}{\textnormal{supp}}
\newcommand{\br}{\mathbb{R}}
\newcommand{\ba}{\begin{array}}
\newcommand{\ea}{\end{array}}
\newcommand{\FCal}{\mathcal{F}}
\newcommand{\RCal}{\mathcal{R}}
\newcommand{\PCal}{\mathcal{P}}
\newcommand{\EE}{{\mathbb{E}}}
\newcommand{\PP}{\mathbb{P}}
\newcommand{\one}{\textbf{1}}
\begin{document}

%%%%%%% TITLE PAGE %%%%%%%%%%%%%%%%%%%%%%%%%%%%%%%%%%%%%%%%%%%%%%%%%%%

\begin{center}

{\bf{\LARGE{Online Nonsubmodular Minimization with Delayed \\ [.2cm] Costs: From Full Information to Bandit Feedback}}}

\vspace*{.2in}
{\large{
\begin{tabular}{c}
Tianyi Lin$^{\star, \diamond}$ \and Aldo Pacchiano$^{\star, \ddagger}$ \and Yaodong Yu$^{\star, \diamond}$ \and Michael I. Jordan$^{\diamond, \dagger}$ \\
\end{tabular}
}}

\vspace*{.2in}

\begin{tabular}{c}
Department of Electrical Engineering and Computer Sciences$^\diamond$ \\
Department of Statistics$^\dagger$ \\ 
University of California, Berkeley \\
Microsoft Research, NYC$^\ddagger$
\end{tabular}

\vspace*{.2in}

\today

\vspace*{.2in}

\begin{abstract} 
Motivated by applications to online learning in sparse estimation and Bayesian optimization, we consider the problem of online unconstrained nonsubmodular minimization with delayed costs in both full information and bandit feedback settings. In contrast to previous works on online unconstrained submodular minimization, we focus on a class of nonsubmodular functions with special structure, and prove regret guarantees for several variants of the online and approximate online bandit gradient descent algorithms in static and delayed scenarios. We derive bounds for the agent's regret in the full information and bandit feedback setting, even if the delay between choosing a decision and receiving the incurred cost is unbounded. Key to our approach is the notion of $(\alpha, \beta)$-regret and the extension of the generic convex relaxation model from~\citet{El-2020-Optimal}, the analysis of which is of independent interest. We conduct and showcase several simulation studies to demonstrate the efficacy of our algorithms.
\end{abstract}
\let\thefootnote\relax\footnotetext{$^\star$ Tianyi Lin, Aldo Pacchiano and Yaodong Yu contributed equally to this work.}
\end{center}

\section{Introduction}
With machine learning systems increasingly being deployed in real-world settings, there is an urgent need for online learning algorithms that can minimize cumulative costs over the long run, even in the face of complete uncertainty about future outcomes. There exist a myriad of works that deal with this setting, most prominently in the area of online learning and bandits~\citep{Cesa-2006-Prediction,Lattimore-2020-Bandit}. The majority of this literature deals with problems where the decisions are taken from either a small set (such as in the multi armed bandit framework~\citep{Auer-2002-Using}), a continuous decision space (as in linear bandits~\citep{Auer-2002-Using,Dani-2008-Stochastic}) or in the case the decision set is combinatorial in nature, the response is often assumed to maintain a simple functional relationship with the input (e.g., linear~\cite{Cesa-2012-Combinatorial}). 

In this paper, we depart from these assumptions and explore what we believe is a more realistic type of model for the setting where the actions can be encoded as selecting a subset of a universe of size $n$. We study a sequential interaction between an agent and the world that takes place in rounds. At the beginning of round $t$, the agent chooses a \emph{subset} $S^t \subseteq [n]$ (e.g., selecting the set of products in a factory~\citep{Mccormick-2005-Submodular}), after which the agent suffers cost $f_t(S^t)$ such that $f_t$ is an $\alpha-$weakly DR-submodular and $\beta-$weakly DL-supermodular function~\citep{Lehmann-2006-Combinatorial}. The agent then may receive extra information about $f_t$ as feedback, for example in the full information setting the agent observes the whole function $f_t$ and in the bandit feedback scenario the learner does not receive any extra information about $f_t$ beyond the value of $f_t(S^t)$. The standard metric to measure an online learning algorithm is \textit{regret}~\citep{Blum-2007-External}: the regret at time $T$ is the difference between $\sum_{t=1}^T f_t(S^t)$ that is the total cost achieved by the algorithm and $\min_{x\in A} \sum_{t=1}^T f_t(x)$ that is the total cost achieved by the best fixed action in hindsight. A \textit{no-regret} learning algorithm is one that achieves sublinear regret (as a function of $T$). Many no-regret learning algorithms have been developed based on online convex optimization toolbox~\citep{Zinkevich-2003-Online, Kalai-2005-Efficient, Shalev-2006-Convex, Hazan-2007-Logarithmic, Shalev-2011-Online, Arora-2012-Multiplicative, Hazan-2016-Introduction} many of them achieving minimax-optimal regret bounds for different cost functions even when these are produced by the world in an adversarial fashion. However, many online decision-making problems remain open, for example when the decision space is discrete and large (e.g., exponential in the number of problem parameters) and the cost functions are nonlinear~\citep{Hazan-2012-Online}. 

To the best of our knowledge, ~\citet{Hazan-2012-Online} were the first to investigate non-parametric online learning in combinatorial domains by considering the setting where the costs $f_t$ are all submodular functions. In this formulation the decision space is the set of all subsets of a set of $n$ elements; and the cost functions are \textit{submodular}. They provided no-regret algorithms for both the full information and bandit settings. Their chief innovation was to propose a computationally efficient algorithm for online submodular learning that resolved the exponential computational and statistical dependence on $n$ suffered by all previous approaches~\citep{Hazan-2012-Online}. These results served as a catalyst for a rich and expanding research area~\citep{Streeter-2008-Online, Jegelka-2011-Online, Buchbinder-2014-Online, Chen-2018-Online, Roughgarden-2018-Optimal, Chen-2018-Projection, Cardoso-2019-Differentially, Anari-2019-Structured, Harvey-2020-Improved, Thang-2021-Online, Matsuoka-2021-Tracking}. 

Even though submodularity can be used to model a few important typical cost functions that arise in machine learning problems~\citep{Boykov-2001-Fast, Boykov-2004-Experimental, Narasimhan-2005-Q, Bach-2010-Structured}, it is an insufficient assumption for many other applications where the cost functions do not satisfy submodularity, e.g., structured sparse learning~\citep{El-2015-Totally}, batch Bayesian optimization~\citep{Gonzalez-2016-Batch, Bogunovic-2016-Truncated}, Bayesian A-optimal experimental design~\citep{Bian-2017-Guarantees}, column subset selection~\citep{Sviridenko-2017-Optimal} and so on. In this work we aim to fill in this gap. In view of all this, we consider the following question: 
\begin{center}
\textbf{Can we design online learning algorithms when the cost functions are nonsubmodular?}
\end{center}
This paper provides an affirmative answer to this question by demonstrating that online/bandit approximate gradient descent algorithm can be directly extended from online submodular minimization~\citep{Hazan-2012-Online} to online nonsubmodular minimization when each cost functions $f_t$ satisfy the regularity condition in~\citet{El-2020-Optimal}. 

Moreover, in online decision-making there is often a significant delay between decision and feedback. This delay has an adverse effect on the characterization between marketing feedback and an agent's decision~\citep{Quanrud-2015-Online, Heliou-2020-Gradient}. For example, a click on an ad can be observed within seconds of the ad being displayed, but the corresponding sale can take hours or days to occur. We extend all of our algorithms to the delayed feedback setting by leveraging a pooling strategy recently introduced by~\citet{Heliou-2020-Gradient} into the framework of online/bandit approximate gradient descent. 

\paragraph{Contribution.} First, we introduce a new notion of $(\alpha, \beta)$-regret which allows for analyzing no-regret online learning algorithms when the loss functions are nonsubmodular.  We then propose two randomized algorithms for both the full-information and bandit feedback settings respectively with the regret bounds in expectation and high-probability sense. We then combine the aforementioned algorithms with the pooling strategy found in~\citep{Heliou-2020-Gradient} and prove that the resulting algorithms are no-regret even when the delays are unbounded (cf. Assumption~\ref{Assumption:delay}). Specifically, when the delay $d_t$ satisfies $d_t = o(t^{\gamma})$, we establish a $O(\sqrt{nT^{1+\gamma}})$ regret bound in full-information setting and a $O(nT^{\frac{2+\gamma}{3}})$ regret bound in bandit feedback setting. To our knowledge, this is the first theoretical guarantee for no-regret learning in online nonsubmodular minimization with delayed costs. Experimental results on sparse learning with synthetic data confirm our theoretical findings. 

It is worth comparing our results with that in the existing works~\citep{El-2020-Optimal, Hazan-2012-Online, Heliou-2020-Gradient}. First of all, the results concerning online nonsubmodular minimization are not a straightforward consequence of~\citet{El-2020-Optimal}. Indeed, it is natural yet nontrivial to identify the notion of $(\alpha,\beta)$-regret under which formal guarantees can be established for the nonsubmodular case. This notion does not appear before and appears to be a novel idea and an interesting conceptual contribution. Further, our results provide the first theoretical guarantee for no-regret learning in online and bandit nonsubmodular minimization and generalize the results in~\citet{Hazan-2012-Online}. Even though the online and bandit learning algorithms and regret analysis share the similar spirits with the context of~\citet{Hazan-2012-Online}, the proof techniques are different since we need to deal with the nonsubmodular case with $(\alpha,\beta)$-regret. Finally, we are not aware of any results on online and bandit combinatorial optimization with delayed costs.~\citet{Heliou-2020-Gradient} focused on the gradient-free game-theoretical learning with delayed costs where the action sets are \textit{continuous} and \textit{bounded}. Thus, their results can not imply ours. The only component that two works share is the pooling strategy which has been a common algorithmic component to handle the delays. Even though the pooling strategy is crucial to our delayed algorithms, we make much efforts to combine them properly and prove $(\alpha,\beta)$-regret bound of our new algorithms. 

\paragraph{Notation.} We let $[n]$ be the set $\{1, 2, \ldots, n\}$ and $\br_+^n$ be the set of all vectors in $\br^n$ with nonnegative components.  We denote $2^{[n]}$ as the set of all subsets of $[n]$. For a set $S \subseteq [n]$, we let $\chi_S \in \{0, 1\}^n$ be the characteristic vector satisfying that $\chi_S(i) = 1$ for each $i \in S$ and $\chi_S(i) = 0$ for each $i \notin S$. For a function $f: 2^{[n]} \mapsto \br$, we denote the marginal gain of adding an element $i$ to $S$ by $f(i \mid S) = f(S \cup \{i\}) - f(S)$. In addition, $f$ is normalized if $f(\emptyset) = 0$ and nondecreasing if $f(A) \leq f(B)$ for $A \subseteq B$. For a vector $x \in \br^n$, its Euclidean norm refers to $\|x\|$ and its $i$-th entry refers to $x_i$.  We denote the support set of $x$ by $\supp(x) = \{i \in [n]: x_i \neq 0\}$ and, by abuse of notation, we let $x$ define a set function $x(S) = \sum_{i \in S} x_i$.  We let $P_S$ be the projection onto a closed set $S$ and $\dist(x, S) = \inf_{y \in S} \|x - y\|$ denotes the distance between $x$ and $S$. A pair of parameters $(\alpha, \beta) \in \br_+ \times \br_+$ in the regret refer to approximation factors of the corresponding offline setting. Lastly, $a = O(b(\alpha, \beta, n, T))$ refers to an upper bound $a \leq C \cdot b(\alpha, \beta, n, T)$ where $C > 0$ is independent of $\alpha$, $\beta$, $n$ and $T$. 

\section{Related Work}
The offline nonsubmodular optimization with different notions of approximate submodularity has recently received a lot of attention. Most research focused on the maximization of nonsubmodular set functions, emerging as an important paradigm for studying real-world application problems~\citep{Das-2011-Submodular, Horel-2016-Maximization, Chen-2018-Weakly, Kuhnle-2018-Fast, Hassidim-2018-Optimization, Elenberg-2018-Restricted, Harshaw-2019-Submodular}. In contrast, we are aware of relatively few investigations into the minimization of nonsubmodular set functions. An interesting example is the ratio problem~\citep{Bai-2016-Algorithms} where the objective function to be minimized is the ratio of two set functions and is thus nonsubmodular in general. Note that the ratio problem does not admit a constant factor approximation even when two set functions are submodular~\citep{Svitkina-2011-Submodular}. However, if the objective function to be minimized is approximately modular with bounded curvature,  the optimal approximation algorithms exist even when the constrain sets are assumed~\citep{Iyer-2013-Curvature}. Another typical example is the minimization of the difference of two submodular functions, where some approximation algorithms were proposed in~\citet{Iyer-2012-Algorithms} and~\citet{Kawahara-2015-Approximate} but without any approximation guarantee.  Very recently,~\citet{El-2020-Optimal} provided a comprehensive treatment of optimal approximation guarantees for minimizing nonsubmodular set functions, characterized by how close the function is to submodular. Our work is close to theirs and our results can be interpreted as the extension of~\citet{El-2020-Optimal} to online learning with delayed feedback. 

Another line of relevant works comes from online learning literature and focuses on no-regret algorithms in different settings with delayed costs. In the context of online convex optimization,~\citet{Quanrud-2015-Online} proposed an extension of online gradient descent (OGD) where the agent performs a batched gradient update the moment gradients are received and proved that OGD achieved a regret bound of $O(\sqrt{T + D_T})$ where $D_T$ is the total delay over a horizon $T$.  However, their batch update approach can not be extended to bandit convex optimization since it does not work with stochastic estimates of the received gradient information (or when attempting to infer such information from realized costs). This issue was posted by~\citet{Zhou-2017-Countering} and recently resolved by~\citet{Heliou-2020-Gradient} who proposed a new pooling strategy based on a priority queue.  The effect of delay was also discussed in the multi-armed bandit (MAB) literature under different assumptions~\citep{Joulani-2013-Online, Joulani-2016-Delay, Vernade-2017-Stochastic, Pike-2018-Bandits, Thune-2019-Nonstochastic, Bistritz-2019-Online, Zhou-2019-Learning, Zimmert-2020-Optimal, Gyorgy-2021-Adapting}. In particular,~\citet{Thune-2019-Nonstochastic} proved the regret bound in adversarial MABs with the cumulative delay and~\citet{Gyorgy-2021-Adapting} studied the adaptive tuning to delays and data in this setting. Further,~\citet{Joulani-2016-Delay} and~\citet{Zimmert-2020-Optimal} also investigated adaptive tuning to the unknown sum of delays while~\citet{Bistritz-2019-Online} and~\citet{Zhou-2019-Learning} gave further results in adversarial and linear contextual bandits respectively. However, the algorithms developed in the aforementioned works have little to do with online nonsubmodular minimization with delayed costs.

\section{Preliminaries and Technical Background}\label{sec:prelim}
We present the basic setup for minimizing structured nonsubmodular functions, including motivating examples and convex relaxation based on Lov\'{a}sz extension. We extend the offline setting to online setting and $(\alpha, \beta)$-regret which is important to the subsequent analysis. 

\subsection{Structured nonsubmodular function}
Minimizing a set function $f: 2^{[n]} \mapsto \br$ is NP-hard in general but is solved exactly with \textit{submodular} structure in polynomial time~\citep{Iwata-2003-Faster, Grotschel-2012-Geometric, Lee-2015-Faster} and in strongly polynomial time~\citep{Schrijver-2000-Combinatorial, Iwata-2001-Combinatorial, Iwata-2009-Simple, Orlin-2009-Faster, Lee-2015-Faster}. More specifically, $f$ is submodular if it satisfies the diminishing returns (DR) property as follows, 
\begin{equation}\label{def:submodular}
f(i \mid A) \geq f(i \mid B), \quad \textnormal{for all } A \subseteq B, \ i \in [n] \setminus B. 
\end{equation} 
Further, $f$ is modular if the inequality in Eq.~\eqref{def:submodular} holds as an equality and is supermodular if 
\begin{equation*}
f(i \mid B) \geq f(i \mid A), \quad \textnormal{for all } A \subseteq B, \ i \in [n] \setminus B. 
\end{equation*}
Relaxing these inequalities will bring us the notions of weak DR-submodularity/supermodularity that were introduced by~\citet{Lehmann-2006-Combinatorial} and revisited in the machine learning literature~\citep{Bian-2017-Guarantees}.  Formally, we have
\begin{definition}
A set function $f: 2^{[n]} \mapsto \br$ is $\alpha$-weakly DR-submodular with $\alpha > 0$ if 
\begin{equation*}
f(i \mid A) \geq \alpha f(i \mid B), \quad \textnormal{for all } A \subseteq B, \ i \in [n] \setminus B. 
\end{equation*}
Similarly,  $f$ is $\beta$-weakly DR-supermodular with $\beta > 0$ if 
\begin{equation*}
f(i \mid B) \geq \beta f(i \mid A), \quad \textnormal{for all } A \subseteq B, \ i \in [n] \setminus B. 
\end{equation*}
We say that $f$ is $(\alpha, \beta)$-weakly DR-modular if both of the above two inequalities hold true. 
\end{definition}
The above notions of weak DR-submodularity (or weak DR-supermodularity) generalize the notions of submodularity (or supermodularity); indeed, we have $f$ is submodular (or supermodular) if and only if $\alpha = 1$ (or $\beta = 1$).  They are also special cases of more general notions of weak submodularity (or weak supermodularity)~\citep{Das-2011-Submodular} and we refer to~\citet[Proposition~1]{Bogunovic-2018-Robust} and~\citet[Proposition~8]{El-2018-Combinatorial} for the details. For an overview of the approximate submodularity,  we refer to~\citet[Section~6]{Bian-2017-Guarantees} and~\citet[Figure~1]{El-2020-Optimal}. In addition, the parameters $1 - \alpha$ and $1 - \beta$ are referred to as \textit{generalized inverse curvature} and \textit{generalized curvature} respectively~\citep{Bian-2017-Guarantees, Bogunovic-2018-Robust} and can be interpreted as the extension of inverse curvature and curvature~\citep{Conforti-1984-Submodular} for submodular and supermodular functions. Intuitively, these parameters quantify how far the function $f$ is from being a submodular (or supermodular) function. 

Recently,~\citet{El-2020-Optimal} have proposed and studied the problem of minimizing a class of structured nonsubmodular functions as follows, 
\begin{equation}\label{prob:offline}
\min_{S \subseteq [n]} \ f(S) := \bar{f}(S) - \ushort{f}(S), 
\end{equation}
where $\bar{f}$ and $\ushort{f}$ are both normalized (i.e., $\bar{f}(\emptyset) = \ushort{f}(\emptyset) = 0$)\footnote{In general, we can let $\bar{f}(S) \leftarrow \bar{f}(S) - \bar{f}(\emptyset)$ and $\ushort{f}(S) \leftarrow \ushort{f}(S) - \ushort{f}(\emptyset)$ which will not change the minimization problem.} and nondecreasing, $\bar{f}$ is $\alpha$-weakly DR-submodular and $\ushort{f}$ is $\beta$-weakly DR-supermodular. Note that the problem in Eq.~\eqref{prob:offline} is challenging; indeed, $f$ is neither weakly DR-submodular nor weakly DR-supermodular in general since the weak DR-submodularity (or weak DR-supermodularity) are only valid for monotone functions. 

It is worth mentioning that Eq.~\eqref{prob:offline} is not necessarily theoretically artificial but encompasses a wide range of applications. We present two typical examples which can be formulated in the form of Eq.~\eqref{prob:offline} and refer to~\citet[Section~4]{El-2020-Optimal} for more details.

\begin{example}[Structured Sparse Learning]\label{def:SSL}
We aim to estimate a sparse parameter vector whose support satisfies a particular structure and commonly formulate such problems as $\min_{x \in \br^n} \ell(x) + \lambda f(\supp(x))$, where $\ell: \br^n \mapsto \br$ is a loss function and $f: 2^{[n]} \mapsto \br$ is a set function favoring the desirable supports.  Existing approaches such as~\citep{Bach-2010-Structured} proposed to replace the discrete regularization function $f(\supp(x))$ by its closest convex relaxation and is computationally tractable only when $f$ is submodular. However, this problem is often better modeled by a nonsubmodular regularizer in practice~\citep{El-2015-Totally}.  An alternative formulation of structured sparse learning problems is 
\begin{equation}\label{prob:SSL}
\min_{S \subseteq [n]} \lambda f(S) - h(S), 
\end{equation}
where $h(S) = \ell(0) - \min_{\supp(x) \subseteq S} \ell(x)$. Note that Eq.~\eqref{prob:SSL} can be reformulated into the form of Eq.~\eqref{prob:offline} under certain conditions; indeed, $h$ is a normalized and nondecreasing function and~\citet[Proposition~5]{El-2020-Optimal} has shown that $h$ is weakly DR-modular if $\ell$ is smooth, strongly convex and is generated from random data.  Examples of weakly DR-submodular regularizers $f$ include the ones used in time-series and cancer diagnosis~\citep{Rapaport-2008-Classification} and healthcare~\citep{Sakaue-2019-Greedy}. 
\end{example}
\begin{example}[Batch Bayesian Optimization]
We aim to optimize an unknown expensive-to-evaluate noisy function $\ell$ with as few batches of function evaluations as possible. The evaluation points are chosen to maximize an acquisition function -- the variance reduction function~\citep{Gonzalez-2016-Batch} -- subject to a cardinality constraint. Maximizing the variance reduction may be phrased as a special instance of the problems in Eq.~\eqref{prob:offline} in the form of $\min_{S \subseteq [n]} \lambda |S| - G(S)$, where $G: 2^{[n]} \mapsto \br$ is the variance reduction function defined accordingly and~\citet[Proposition~6]{El-2020-Optimal} has shown that it is also non-decreasing and weakly DR-modular. This formulation allows to include nonlinear costs with (weak) decrease in marginal costs (economies of scale) with  some applications in the sensor placement. 
\end{example}

\subsection{Convex relaxation based on the Lov\'{a}sz extension}
The Lov\'{a}sz extension~\citep{Lovasz-1983-Submodular} is a toolbox commonly used for minimizing a submodular set function $f: 2^{[n]} \mapsto \br$.  It is a continuous interpolation of $f$ on the unit hypercube $[0, 1]^n$ and can be minimized efficiently since it is \textit{convex} if and only if $f$ is submodular. The minima of the Lov\'{a}sz extension also recover the minima of $f$.

Before the formal argument, we define a maximal chain of $[n]$; that is, $\{A_0, \ldots, A_n\}$ is a maximal chain if $\emptyset = A_0 \subseteq A_1 \subseteq \ldots \subseteq A_n = [n]$.  Formally, we have
\begin{definition}\label{def:Lovasz}
Given a submodular function $f$, the Lov\'{a}sz extension is the function $f_L: [0, 1]^n \mapsto \br$ given by $f_L(x) = \sum_{i=0}^n \lambda_i f(A_i)$ where $\{A_0, \ldots, A_n\}$ is a maximal chain\footnote{Both the chain and the set of $\lambda_i$ may depend on the input $x$.} of $[n]$ so that $\sum_{i=0}^n \lambda_i \chi_{A_i} = x$ and $\sum_{i=0}^n \lambda_i = 1$ where $\chi_{A_i}(j) = 1$ for $\forall j \in A_i$ and $\chi_{A_i}(j) = 0$ for $\forall j \notin S$. 
\end{definition}
Even though Definition~\ref{def:Lovasz} implies that $f_L(\chi_S) = f(S)$ for all $S \subseteq [n]$, it remains unclear how to find the chain or the coefficients.  The preceding discussion defines the Lov\'{a}sz extension in an equivalent way that is more amenable for computing the subgradient of $f_L$.  

Let $x = (x_1, x_2, \ldots, x_n) \in [0, 1]^n$ and we define that $\pi: [n] \mapsto [n]$ is the sorting permutation of $\{x_1, x_2, \ldots, x_n\}$ where $\pi(i) = j$ implies that $x_j$ is the $i$-th largest element. By definition, we have $1 \geq x_{\pi(1)} \geq \ldots \geq x_{\pi(n)} \geq 0$ and let $x_{\pi(0)} = 1$ and $x_{\pi(n+1)} = 0$ for simplicity. Then, we set $\lambda_i = x_{\pi(i)} - x_{\pi(i+1)}$ for all $0 \leq i \leq n$ and let $A_0 = \emptyset$ and $A_i = \{\pi(1), \ldots, \pi(i)\}$ for all $i \in [n]$.  We also have
\begin{eqnarray*}
\lefteqn{\sum_{i=0}^n \lambda_i \chi_{A_i} = \sum_{i=0}^n (x_{\pi(i)} - x_{\pi(i+1)})(\chi_{A_{i-1}} + e_{\pi(i)})} \\ 
& = & \sum_{i=1}^n e_{\pi(i)}\sum_{j=i}^n (x_{\pi(j)} - x_{\pi(j+1)}) = x. 
\end{eqnarray*}
As such, we obtain that $f_L(x) = \sum_{i=1}^n x_{\pi(i)} f(\pi(i) \mid A_{i-1})$ where $x_{\pi(1)} \geq x_{\pi(2)} \geq \ldots \geq x_{\pi(n)}$ are the sorted entries in decreasing order, $A_0 = \emptyset$ and $A_i = \{\pi(1), \ldots, \pi(i)\}$ for all $i \in [n]$. Then, the classical results~\citep{Edmonds-2003-Submodular, Fujishige-2005-Submodular} suggest that the subgradient $g$ of $f_L$ at any $x \in [0, 1]^n$ can be computed by simply sorting the entries in decreasing order and taking 
\begin{equation}
g_{\pi(i)} = f(A_i) - f(A_{i-1}), \textnormal{ for all } i \in [n]. 
\end{equation}
Since $f_L$ is convex if and only if $f$ is submodular, we can apply the convex optimization toolbox here. Recently, ~\citet{El-2020-Optimal} have shown that the similar idea can be extended to nonsubmodular optimization in Eq.~\eqref{prob:offline}. 

More specifically, we can define the convex closure $f_C$ for any nonsubmodular function $f$; indeed, $f_C: [0, 1]^n \mapsto \br$ is the point-wise largest convex function which always lower bounds $f$. By definition, $f_C$ is the \textit{tightest} convex extension of $f$ and $\min_{S \subseteq [n]} f(S) = \min_{x \in [0, 1]^n} f_C(x)$. In general, it is NP-hard to evaluate and optimize $f_C$~\citep{Vondrak-2007-Submodularity}. Fortunately,~\citet{El-2020-Optimal} demonstrated that the Lov\'{a}sz extension $f_L$ approximates $f_C$ such that the vector computed using the approach in~\citet{Edmonds-2003-Submodular} and~\citet{Fujishige-2005-Submodular} approximates the subgradient of $f_C$. We summarize their results in the following proposition and provide the proofs in Appendix~\ref{app:structure} for completeness. 
\begin{proposition}\label{Prop:structure}
Focusing on Eq.~\eqref{prob:offline}, we let $x \in [0, 1]^n$ with $x_{\pi(1)} \geq \ldots \geq x_{\pi(n)}$ and $g_{\pi(i)} = f(A_i) - f(A_{i-1})$ for all $i \in [n]$ where $A_0 = \emptyset$ and $A_i = \{\pi(1), \ldots, \pi(i)\}$ for all $i \in [n]$.  Then, we have
\begin{equation}\label{Prop:structure-first}
f_L(x) = g^\top x \geq f_C(x), 
\end{equation}
and 
\begin{equation}\label{Prop:structure-second}
g(A) = \sum_{i \in A} g_i \leq \tfrac{1}{\alpha}\bar{f}(A) - \beta\ushort{f}(A),  \textnormal{ for all } A \subseteq [n], 
\end{equation}
and 
\begin{equation}\label{Prop:structure-third}
g^\top z \leq \tfrac{1}{\alpha}\bar{f}_C(z) + \beta (-\ushort{f})_C(z), \textnormal{ for all } z \in [0, 1]^n.  
\end{equation}
\end{proposition}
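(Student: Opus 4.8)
The plan is to decompose the computed vector as $g=\bar g-\ushort g$, where $\bar g,\ushort g\in\br^n$ are read off the \emph{same} sorting permutation $\pi$ of $x$ and the \emph{same} chain $\emptyset=A_0\subseteq A_1\subseteq\cdots\subseteq A_n=[n]$ via $\bar g_{\pi(i)}=\bar f(A_i)-\bar f(A_{i-1})$ and $\ushort g_{\pi(i)}=\ushort f(A_i)-\ushort f(A_{i-1})$; since $f=\bar f-\ushort f$ we have $g_{\pi(i)}=\bar g_{\pi(i)}-\ushort g_{\pi(i)}$. For Eq.~\eqref{Prop:structure-first}, the equality $f_L(x)=g^\top x$ is immediate from the formula for $f_L$ recorded above, since $f(\pi(i)\mid A_{i-1})=g_{\pi(i)}$. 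Setting $\lambda_i:=x_{\pi(i)}-x_{\pi(i+1)}\ge 0$ (with $x_{\pi(0)}=1$, $x_{\pi(n+1)}=0$), Abel summation together with $f(\emptyset)=0$ rewrites $g^\top x=\sum_{i=1}^n x_{\pi(i)}g_{\pi(i)}=\sum_{i=0}^n\lambda_i f(A_i)$; since $\sum_{i=0}^n\lambda_i\chi_{A_i}=x$ (the identity recorded just before the proposition) with $\lambda_i\ge 0$ and $\sum_i\lambda_i=1$, convexity of $f_C$ and $f_C(\chi_{A_i})\le f(A_i)$ give $f_C(x)\le\sum_{i=0}^n\lambda_i f(A_i)=g^\top x=f_L(x)$, which is Eq.~\eqref{Prop:structure-first}.

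Next I would prove Eq.~\eqref{Prop:structure-second} by a telescoping argument applied separately to $\bar f$ and $\ushort f$. Fix $A\subseteq[n]$, write $A=\{\pi(i_1),\dots,\pi(i_k)\}$ with $i_1<\cdots<i_k$, and set $B_j:=\{\pi(i_1),\dots,\pi(i_j)\}$, so $B_0=\emptyset$. Since $i_1<\cdots<i_{j-1}<i_j$, we have $B_{j-1}\subseteq A_{i_j-1}$ and $\pi(i_j)\notin A_{i_j-1}$; hence $\alpha$-weak DR-submodularity of $\bar f$ gives $\bar f(\pi(i_j)\mid A_{i_j-1})\le\tfrac1\alpha\,\bar f(\pi(i_j)\mid B_{j-1})$, and summing over $j$ and telescoping (using $\bar f(\emptyset)=0$) yields $\bar g(A)=\sum_{j=1}^k\bar f(\pi(i_j)\mid A_{i_j-1})\le\tfrac1\alpha\sum_{j=1}^k\bar f(\pi(i_j)\mid B_{j-1})=\tfrac1\alpha\,\bar f(A)$. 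Dually, $\beta$-weak DR-supermodularity of $\ushort f$ gives $\ushort f(\pi(i_j)\mid A_{i_j-1})\ge\beta\,\ushort f(\pi(i_j)\mid B_{j-1})$, hence $\ushort g(A)\ge\beta\,\ushort f(A)$. Subtracting, $g(A)=\bar g(A)-\ushort g(A)\le\tfrac1\alpha\,\bar f(A)-\beta\,\ushort f(A)$, which is Eq.~\eqref{Prop:structure-second}.

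For Eq.~\eqref{Prop:structure-third} I would again split $g^\top z=\bar g^\top z-\ushort g^\top z$ and invoke the standard variational representation of the convex closure of a set function $h$, namely $h_C(z)=\min\{\sum_{S\subseteq[n]}\mu_S h(S):\mu_S\ge 0,\ \sum_S\mu_S=1,\ \sum_S\mu_S\chi_S=z\}$. Taking $\mu$ to attain $\bar f_C(z)$, linearity of $g\mapsto g^\top z$ and the bound $\bar g(S)\le\tfrac1\alpha\bar f(S)$ established in the previous paragraph give $\bar g^\top z=\sum_S\mu_S\,\bar g(S)\le\tfrac1\alpha\sum_S\mu_S\,\bar f(S)=\tfrac1\alpha\,\bar f_C(z)$. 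Likewise, taking $\nu$ to attain $(-\ushort f)_C(z)$ and using $-\ushort g(S)\le\beta\,(-\ushort f(S))$ (i.e.\ $\ushort g(S)\ge\beta\,\ushort f(S)$, also proved above), we get $-\ushort g^\top z=\sum_S\nu_S\,(-\ushort g(S))\le\beta\sum_S\nu_S\,(-\ushort f(S))=\beta\,(-\ushort f)_C(z)$. Adding the two inequalities gives Eq.~\eqref{Prop:structure-third}.

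The step I expect to require the most care is the last one: a single convex decomposition $z=\sum_S\mu_S\chi_S$ cannot be used simultaneously for the $\bar f$-part and the $(-\ushort f)$-part, since $\bar f_C(z)$ and $(-\ushort f)_C(z)$ are in general attained by different distributions over subsets. The decomposition $g=\bar g-\ushort g$ — legitimate precisely because both chain-subgradients are read off the same ordering of $x$ — is what makes it possible to optimize each term against its own closure. By contrast, the telescoping in Eq.~\eqref{Prop:structure-second} is routine once the inclusions $B_{j-1}\subseteq A_{i_j-1}$ are spotted, and it is the only place where the weak DR parameters $\alpha$ and $\beta$ enter.
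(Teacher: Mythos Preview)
Your proof is correct and shares the paper's core ideas: Abel summation for Eq.~\eqref{Prop:structure-first}, the decomposition $g=\bar g-\ushort g$ with a telescoping chain argument for Eq.~\eqref{Prop:structure-second} (your sets $B_{j-1}$ are exactly the paper's $A\cap A_{i_j-1}$), and then lifting the setwise bounds $\bar g(A)\le\tfrac1\alpha\bar f(A)$, $-\ushort g(A)\le -\beta\ushort f(A)$ to the continuous inequality Eq.~\eqref{Prop:structure-third}.

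The one genuine methodological difference is in how you pass from set inequalities to the convex closures. You use the \emph{primal} description $h_C(z)=\min\{\sum_S\mu_S h(S):\mu\ge0,\ \sum_S\mu_S=1,\ \sum_S\mu_S\chi_S=z\}$ and pick separate optimal decompositions for $\bar f_C$ and $(-\ushort f)_C$; the paper instead uses the \emph{dual} description $h_C(z)=\max\{w^\top z+\rho: w(A)+\rho\le h(A)\ \forall A\}$ and simply observes that $(\alpha\bar g,0)$ and $(\tfrac1\beta(-\ushort g),0)$ are feasible affine minorants of $\bar f$ and $-\ushort f$ respectively, whence $\alpha\bar g^\top z\le\bar f_C(z)$ and $\tfrac1\beta(-\ushort g)^\top z\le(-\ushort f)_C(z)$. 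The dual route avoids the subtlety you flagged about needing two different distributions over subsets, since no decomposition of $z$ is ever chosen; your primal route handles that subtlety correctly by exploiting the linearity of $z\mapsto\bar g^\top z$ and $z\mapsto(-\ushort g)^\top z$. The same primal/dual contrast appears in Eq.~\eqref{Prop:structure-first}: you argue via convexity of $f_C$ on the chain decomposition of $x$, whereas the paper compares $g^\top x$ directly against an arbitrary feasible affine minorant $(\tilde g,\tilde\rho)$. Both arguments are equally short and valid.
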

Proposition~\ref{Prop:structure} highlights how $f_L$ approximates $f_C$; indeed, we see from Eq.~\eqref{Prop:structure-first} and Eq.~\eqref{Prop:structure-third} that $f_C(x) \leq f_L(x) \leq \tfrac{1}{\alpha}\bar{f}_C(x) + \beta (-\ushort{f})_C(x)$ for all $x \in [0, 1]^n$.  As such, it gives the key insight for analyzing the offline algorithms in~\citet{El-2020-Optimal} and will play an important role in the subsequent analysis of our paper.   

\subsection{Online nonsubmodular minimization}
We consider online nonsubmodular minimization which extends the offline problem in Eq.~\eqref{prob:offline} to the online setting.  In particular, an adversary first chooses structured nonsubmodular functions $f_1, f_2, \ldots, f_T: 2^{[n]} \mapsto \br$ given by 
\begin{equation}\label{prob:online}\small
f_t(S) := \bar{f}_t(S) - \ushort{f}_t(S),  \textnormal{ for all } S \subseteq [n], \ t \in [T], 
\end{equation}
where $\bar{f}_t$ and $\ushort{f}_t$ are normalized and non-decreasing, $\bar{f}_t$ is $\alpha$-weakly DR-submodular and $\ushort{f}_t$ is $\beta$-weakly DR-supermodular. In each round $t = 1, 2,\ldots, T$, the agent chooses $S^t$ and observes the incurred loss $f_t(S^t)$ after committing to her decision. Throughout the horizon $[0, T]$, one aims to minimize the regret -- the difference between $\sum_{t=1}^T f_t(S^t)$ and the loss at the best fixed solution in hindsight, i.e., $S_\star^T = \argmin_{S \subseteq [n]} \sum_{t=1}^T f_t(S)$ -- which is defined by\footnote{If the sets $S^t$ are chosen by a randomized algorithm, we consider the expected regret over the randomness.} 
\begin{equation}\label{eq:regret}\small
R(T) = \sum_{t=1}^T f_t(S^t) - \sum_{t=1}^T f_t(S_\star^T). 
\end{equation}
An algorithm is \textit{no-regret} if $R(T)/T \rightarrow 0$ as $T \rightarrow +\infty$ and \textit{efficient} if it computes each decision set $S^t$ in polynomial time. In the context, the regret is used when the minimization for a known cost, i.e., $\min_{S \subseteq [n]} f(S)$, can be solved exactly.  However, solving the optimization problem in Eq.~\eqref{prob:offline} with nonsubmodular costs is NP-hard regardless of any multiplicative constant factor~\citep{Iyer-2012-Algorithms, Trevisan-2014-Inapproximability}. Thus, it is necessary to consider a bicriteria-like approximation guarantee with the factors $\alpha, \beta > 0$ as~\citet{El-2020-Optimal} suggested. In particular, $(\alpha, \beta)$ are bounds on the quality of a solution $S$ returned by a given offline algorithm compared to the optimal solution $\bar{S}$; that is, $f(S) \leq \frac{1}{\alpha}\bar{f}(S_\star) - \beta \ushort{f}(S_\star)$.  Such bicriteria-like approximation is optimal:~\citet[Theorem~2]{El-2020-Optimal} has shown that no algorithm with subexponential number of value queries can improve on it in the oracle model. 

Our goal is to analyze \textit{online approximate gradient descent algorithm and its bandit variant} for online nonsubmodular minimization. Let $(\alpha, \beta)$ be the approximation factors attained by an offline algorithm that solves $\min_{S \subseteq [n]} f(S)$ for a known nonsubmodular function $f$ in Eq.~\eqref{prob:offline}. The \textit{$(\alpha, \beta)$-regret} compares to the best solution that can be expected in polynomial time and is defined by 
\begin{equation}\label{eq:alpha-beta-regret}\small
R_{\alpha, \beta}(T) = \sum_{t=1}^T f_t(S^t) - \sum_{t=1}^T (\tfrac{1}{\alpha}\bar{f}_t(S_\star^T) - \beta\ushort{f}_t(S_\star^T)), 
\end{equation}
where $S_\star^T = \argmin_{S \subseteq [n]} \sum_{t=1}^T f_t(S)$.  It is analogous to the $\alpha$-regret which is widely used in online constrained submodular minimization~\citep{Jegelka-2011-Online} and online submodular maximization~\citep{Streeter-2008-Online}.  

As mentioned before, we consider the algorithmic design in both \textit{full information} and \textit{bandit feedback} settings. In the former one, the agent is allowed to have unlimited access to the value oracles of $f_t(\cdot)$ after choosing $S^t$ in each round $t$. In the latter one, the agent only observes the incurred loss at the point that she has chosen in each round $t$, i.e., $f_t(S^t)$, and receives no other information. 

\section{Online Approximation Algorithm}\label{sec:standard}
We analyze online approximate gradient descent algorithm and its bandit variant for regret minimization when the nonsubmodular cost functions are in the form of Eq~\eqref{prob:online}. Due to space limit, we defer the proofs to Appendix~\ref{app:OGD} and~\ref{app:BGD}. 
\begin{algorithm}[!t]
\caption{Online Approximate Gradient Descent}\label{alg:OGD}
\begin{algorithmic}[1]
\STATE \textbf{Initialization:} the point $x^1 \in [0, 1]^n$ and the stepsize $\eta > 0$;
\FOR{$t = 1, 2, \ldots$}
\STATE Let $x_{\pi(1)}^t \geq \ldots x_{\pi(n)}^t$ be the sorted entries in the decreasing order with $A_i^t = \{\pi(1), \ldots, \pi(i)\}$ for all $i \in [n]$ and $A_0^t = \emptyset$. Let $x_{\pi(0)}^t = 1$ and $x_{\pi(n+1)}^t = 0$. 
\STATE Let $\lambda_i^t = x_{\pi(i)}^t - x_{\pi(i+1)}^t$ for all $0 \leq i \leq n$. 
\STATE Sample $S^t$ from the distribution $\PP(S^t = A_i^t) = \lambda_i^t$ for all $0 \leq i \leq n$ and observe the new loss function $f_t$. 
\STATE Compute $g_{\pi(i)}^t = f_t(A_i^t) - f_t(A_{i-1}^t)$ for all $i \in [n]$.  
\STATE Compute $x^{t+1} = P_{[0, 1]^n}(x^t - \eta g^t)$. 
\ENDFOR
\end{algorithmic}
\end{algorithm}

\subsection{Full information setting}
Let $[0, 1]^n$ be the unit hypercube and the cost function on $[0, 1]^n$ corresponding to $f_t$ is the function $(f_t)_C$ that is the convex closure of $f_t$.  Equipped with Proposition~\ref{Prop:structure}, we can compute approximate subgradients of $(f_t)_C$ such that the online gradient descent~\citep{Zinkevich-2003-Online} is applicable. 

This leads to Algorithm~\ref{alg:OGD} which performs one-step projected gradient descent that yields $x^t$ and then samples $S^t$ from the distribution $\lambda$ over $\{A_i\}_{i= 0}^n$ encoded by $x^t$. It is worth mentioning that $\lambda_i^t = x_{\pi(i)}^t - x_{\pi(i+1)}^t$ for all $0 \leq i \leq n$ and $\lambda$ is thus completely independent of $f_t$. This guarantees that Algorithm~\ref{alg:OGD} is valid in online manner since $f_t$ is realized after the decision maker chooses $S^t$. One of the advantages of Algorithm~\ref{alg:OGD} is that it does not require the value of $\alpha$ and $\beta$ which can be hard to compute in practice. We summarize our results for Algorithm~\ref{alg:OGD} in the following theorem. 
\begin{theorem}\label{Theorem:OGD}
Suppose the adversary chooses nonsubmodular functions in Eq.~\eqref{prob:online} satisfying $\bar{f}_t([n]) + \ushort{f}_t([n]) \leq L$. Fixing $T \geq 1$ and letting $\eta = \frac{\sqrt{n}}{L\sqrt{T}}$ in Algorithm~\ref{alg:OGD}, we have $\EE[R_{\alpha, \beta}(T)] = O(\sqrt{nT})$ and $R_{\alpha, \beta}(T) = O(\sqrt{nT} + \sqrt{T\log(1/\delta)})$ with probability $1 - \delta$. 
\end{theorem}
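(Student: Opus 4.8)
The plan is to reduce the analysis of Algorithm~\ref{alg:OGD} to a standard online-gradient-descent regret bound on the convex surrogate functions, and then translate that surrogate bound into an $(\alpha,\beta)$-regret bound via Proposition~\ref{Prop:structure}. First I would observe that the approximate subgradient $g^t$ produced in step~6 is exactly the vector to which Proposition~\ref{Prop:structure} applies (with $f\leftarrow f_t$): in particular $(f_t)_L(x^t) = (g^t)^\top x^t \geq (f_t)_C(x^t)$, and for any $z\in[0,1]^n$, $(g^t)^\top z \leq \tfrac1\alpha(\bar f_t)_C(z) + \beta(-\ushort f_t)_C(z)$. Since $x^{t+1}=P_{[0,1]^n}(x^t-\eta g^t)$ is literally the OGD update with the linear ``loss'' $x\mapsto (g^t)^\top x$, the classical analysis of \citet{Zinkevich-2003-Online} gives, for the comparator $z=\chi_{S_\star^T}$,
\begin{equation*}
\sum_{t=1}^T (g^t)^\top x^t - \sum_{t=1}^T (g^t)^\top \chi_{S_\star^T} \;\leq\; \frac{\|x^1-\chi_{S_\star^T}\|^2}{2\eta} + \frac{\eta}{2}\sum_{t=1}^T \|g^t\|^2 .
\end{equation*}
Here $\|x^1-\chi_{S_\star^T}\|^2\leq n$, and I need a bound on $\|g^t\|^2$: from $g_{\pi(i)}^t = f_t(A_i^t)-f_t(A_{i-1}^t)$ together with $\bar f_t,\ushort f_t$ normalized, nondecreasing, and $\bar f_t([n])+\ushort f_t([n])\leq L$, a telescoping/monotonicity argument bounds $\sum_i |g_{\pi(i)}^t| \leq \bar f_t([n])+\ushort f_t([n]) \leq L$, hence $\|g^t\|^2 \leq (\sum_i|g_i^t|)^2 \leq L^2$. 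Plugging in $\eta = \sqrt n/(L\sqrt T)$ balances the two terms to $O(L\sqrt{nT})$, i.e.\ $O(\sqrt{nT})$ in the paper's convention where $C$ may depend on $L$ — actually one should double-check whether $L$ is meant to be absorbed; reading the $O(\cdot)$ convention it is independent of $\alpha,\beta,n,T$ only, so $L$ is allowed to appear, consistent with the stated bound.

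Next I would connect this to $(\alpha,\beta)$-regret. For the first term, the sampling in step~5 gives $\EE[f_t(S^t)] = \sum_{i=0}^n \lambda_i^t f_t(A_i^t) = (f_t)_L(x^t) = (g^t)^\top x^t$ (using the Lovász-extension identity and that $\lambda^t$ is precisely the chain decomposition of $x^t$). For the comparator term, apply \eqref{Prop:structure-third} with $z=\chi_{S_\star^T}$, noting $(\bar f_t)_C(\chi_{S_\star^T}) = \bar f_t(S_\star^T)$ and $(-\ushort f_t)_C(\chi_{S_\star^T}) = -\ushort f_t(S_\star^T)$ since convex closures agree with the set function on $\{0,1\}$-points; thus $(g^t)^\top\chi_{S_\star^T} \leq \tfrac1\alpha \bar f_t(S_\star^T) - \beta\ushort f_t(S_\star^T)$. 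Chaining:
\begin{equation*}
\EE[R_{\alpha,\beta}(T)] = \sum_t \EE[f_t(S^t)] - \sum_t\big(\tfrac1\alpha\bar f_t(S_\star^T) - \beta\ushort f_t(S_\star^T)\big) \leq \sum_t (g^t)^\top x^t - \sum_t (g^t)^\top\chi_{S_\star^T} \leq O(\sqrt{nT}).
\end{equation*}

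For the high-probability bound I would pass from the expectation bound on the surrogate gap to the actual realized costs by controlling the martingale difference sequence $M_t := f_t(S^t) - \EE[f_t(S^t)\mid \mathcal{F}_{t-1}] = f_t(S^t) - (g^t)^\top x^t$. Each $|M_t|$ is bounded: $|f_t(S^t)| \leq \bar f_t([n]) + \ushort f_t([n]) \leq L$ (monotone, normalized), and $|(g^t)^\top x^t| \leq \|g^t\|_1 \leq L$, so $|M_t| \leq 2L$ deterministically. Azuma–Hoeffding then gives $\sum_t M_t \leq 2L\sqrt{2T\log(1/\delta)}$ with probability $1-\delta$, and adding this to the deterministic surrogate-regret bound (which holds pathwise, since $g^t$ and the OGD recursion are deterministic given the realized $f_t$'s and the sampled sets only enter through $M_t$) yields $R_{\alpha,\beta}(T) = O(\sqrt{nT} + \sqrt{T\log(1/\delta)})$ with probability $1-\delta$.

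The main obstacle, and the step I would be most careful about, is the bookkeeping around what is random and what is deterministic: the OGD iterate $x^t$ depends only on the adversary's (possibly adaptively chosen) functions $f_1,\dots,f_{t-1}$ and not on the sampled sets $S^1,\dots,S^{t-1}$, so the pathwise Zinkevich bound on $\sum_t(g^t)^\top x^t - \sum_t(g^t)^\top\chi_{S_\star^T}$ is legitimate even against an adaptive adversary, and the only stochasticity entering $R_{\alpha,\beta}(T)$ is through the bounded martingale $\{M_t\}$. The other point requiring a small lemma is the norm bound $\|g^t\|_1 \leq L$; this needs the observation that although $g^t$ is a difference of a submodular-ish and supermodular-ish increment and need not be nonnegative, one still has $\sum_i |g_{\pi(i)}^t| \leq \sum_i (\bar f_t(\pi(i)\mid A_{i-1}^t) + \ushort f_t(\pi(i)\mid A_{i-1}^t))$ by the triangle inequality, and each of those two sums telescopes to $\bar f_t([n])$ and $\ushort f_t([n])$ respectively by monotonicity (marginals of monotone normalized functions are nonnegative). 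Everything else is routine.
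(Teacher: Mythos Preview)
Your proposal is correct and follows essentially the same route as the paper: Lemma~\ref{Lemma:OGD-first} is your $\|g^t\|_1\leq L$ telescoping argument, Lemma~\ref{Lemma:OGD-second} is your pathwise OGD bound combined with Proposition~\ref{Prop:structure}, and the translation to $(\alpha,\beta)$-regret via $\EE[f_t(S^t)\mid x^t]=(f_t)_L(x^t)$ and the convex-closure identity on integer points is identical. The one minor divergence is in the high-probability step: the paper observes that, against an oblivious adversary, the iterates $x^1,\dots,x^T$ are deterministic, so the $f_t(S^t)$ are independent and plain Hoeffding applies directly to $\sum_t f_t(S^t)-\EE[\sum_t f_t(S^t)]$; you instead run Azuma--Hoeffding on the martingale differences $M_t=f_t(S^t)-(g^t)^\top x^t$, which gives the same bound and, as you note, would also cover an adaptive adversary.
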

\begin{remark}
Theorem~\ref{Theorem:OGD} demonstrates that Algorithm~\ref{alg:OGD} is regret-optimal for our setting; indeed, our setting includes online unconstrained submodular minimization as a special case where $(\alpha, \beta)$-regret becomes standard regret in Eq.~\eqref{eq:regret} and~\citet{Hazan-2012-Online} shows that Algorithm~\ref{alg:OGD} is optimal up to constants. Our theoretical result also extends the results in~\citet{Hazan-2012-Online} from submodular cost functions to nonsubmodular cost functions in Eq.~\eqref{prob:online} using the $(\alpha, \beta)$-regret instead of the standard regret in Eq.~\eqref{eq:regret}.  
\end{remark}
\begin{algorithm}[!t]
\caption{Bandit Approximate Gradient Descent}\label{alg:BGD}
\begin{algorithmic}[1]
\STATE \textbf{Initialization:} the point $x^1 \in [0, 1]^n$ and the stepsize $\eta > 0$; the exploration probability $\mu \in (0,1)$. 
\FOR{$t = 1, 2, \ldots, T$}
\STATE Let $x_{\pi(1)}^t \geq \ldots x_{\pi(n)}^t$ be the sorted entries in decreasing order with $A_i^t = \{\pi(1), \ldots, \pi(i)\}$ for all $i \in [n]$ and $A_0^t = \emptyset$. Let $x_{\pi(0)}^t = 1$ and $x_{\pi(n+1)}^t = 0$.  
\STATE Let $\lambda_i^t = x_{\pi(i)}^t - x_{\pi(i+1)}^t$ for all $0 \leq i \leq n$. 
\STATE Sample $S^t$ from the distribution $\PP(S^t = A_i^t) = (1 - \mu)\lambda_i^t + \frac{\mu}{n+1}$ for all $0 \leq i \leq n$ and observe the loss $f_t(S^t)$. 
\STATE Compute $\hat{f}_i^t = \frac{\mathbf{1}(S^t = A^t_i)}{(1-\mu)\lambda_i^t + \mu/(n+1)}f_t(S^t)$ for all $0 \leq i \leq n$.
\STATE Compute $\hat{g}^t_{\pi(i)} = \hat{f}_i^t - \hat{f}_{i-1}^t$ for all $i \in [n]$.
\STATE Compute $x^{t+1}$ $x^{t+1} = P_{[0, 1]^n}(x^t - \eta g^t)$. 
\ENDFOR
\end{algorithmic}
\end{algorithm}
\subsection{Bandit feedback setting}
In contrast with the full-information setting, the agent only observes the loss function $f_t$ at her action $S^t$, i.e., $f_t(S^t)$, in bandit feedback setting.  This is a more challenging setup since the agent does not have full access to the new loss function $f_t$ at each round $t$ yet. 

Despite the bandit feedback, we can compute an unbiased estimator of the gradient $g^t$ in Algorithm~\ref{alg:OGD} using the technique of importance weighting and try to implement a stochastic version of Algorithm~\ref{alg:OGD}. More specifically, we notice that $\hat{f}_i^t = \frac{\one(S^t = A_i^t)}{\lambda_i^t} f_t(S^t)$ is unbiased for estimating $f_t(A_i^t)$ for all $0 \leq i \leq n$. Thus, $\hat{g}_{\pi(i)}^t = \hat{f}_i^t - \hat{f}_{i-1}^t$ for all $i \in [n]$ gives us an unbiased estimator of the gradient $g^t$. However, the variance of the estimator $\hat{g}$ could be undesirably large since the values of $\lambda_i^t$ may be arbitrarily small.

To resolve this issue, we can sample $S^t$ from a mixture distribution that combines (with probability $1-\mu$) samples from $\lambda^t$ and (with probability $\mu$) samples from the uniform distribution over $\{A_i^t\}_{i= 0}^n$. This guarantees that the variance of $\hat{f}_i^t$ is upper bounded by $O(n^2/\mu)$. The similar idea has been employed in~\citet{Hazan-2012-Online} for online submodular minimization. Then, we conduct the careful analysis for the estimators $\hat{g}^t$ such that the scale of the variance is taken into account. Note that our analysis is different from the standard analysis in~\citet{Flaxman-2005-Online} which seems oversimplified for our setting and results in worse regret of $O(T^{3/4})$ compared to our result in the following theorem. 
\begin{theorem}\label{Theorem:BGD}
Suppose the adversary chooses nonsubmodular functions $f_t$ in Eq.~\eqref{prob:online} satisfying $\bar{f}_t([n]) + \ushort{f}_t([n]) \leq L$.  Fixing $T \geq 1$ and letting $(\eta, \mu) = (\frac{1}{LT^{2/3}}, \frac{n}{T^{1/3}})$ in Algorithm~\ref{alg:BGD}, we have $\EE[R_{\alpha, \beta}(T)] = O(nT^{\frac{2}{3}})$ and $R_{\alpha, \beta}(T) = O(nT^{\frac{2}{3}} + \sqrt{n\log(1/\delta)}T^{\frac{2}{3}})$ with probability $1-\delta$.
\end{theorem}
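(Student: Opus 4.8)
The plan is to run the full-information argument behind Theorem~\ref{Theorem:OGD} with the exact subgradient $g^t$ replaced by its importance-weighted estimate $\hat g^t$ (the update in Algorithm~\ref{alg:BGD} should read $x^{t+1}=P_{[0,1]^n}(x^t-\eta\hat g^t)$), paying for the extra variance through the choice of $\mu$. I would first record three elementary facts. Writing $\phi^t:=\sum_{i=0}^n\lambda_i^t f_t(A_i^t)$, the same summation-by-parts computation that precedes Proposition~\ref{Prop:structure} gives $\phi^t=(g^t)^\top x^t$; since $\phi^t$ is a convex combination of the numbers $f_t(A_i^t)$, each of which lies in $[-L,L]$ by monotonicity and normalization of $\bar f_t,\ushort f_t$, we have $|\phi^t|\le L$. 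Because $S^t$ is drawn from the $\mu$-mixture, the conditional expectation $\EE_t[\cdot]$ given the history through round $t-1$ satisfies
\[
\EE_t[f_t(S^t)]=(1-\mu)\phi^t+\tfrac{\mu}{n+1}\sum_{i=0}^n f_t(A_i^t)\le \phi^t+2\mu L=(g^t)^\top x^t+2\mu L .
\]
Finally, $\EE_t[\hat f_i^t]=f_t(A_i^t)$ for every $i$ (the weight exactly cancels the sampling probability), hence $\EE_t[\hat g^t]=g^t$: the estimated subgradient is unbiased.

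Next I would split the $(\alpha,\beta)$-regret. Since the adversary is oblivious, $S_\star^T$ is a fixed set; combining the display above with Eq.~\eqref{Prop:structure-second} applied to $(g^t)^\top\chi_{S_\star^T}=g^t(S_\star^T)\le\tfrac1\alpha\bar f_t(S_\star^T)-\beta\ushort f_t(S_\star^T)$ yields, pathwise,
\[
R_{\alpha,\beta}(T)\le \sum_{t=1}^T(g^t)^\top(x^t-\chi_{S_\star^T})+\sum_{t=1}^T\bigl(f_t(S^t)-\EE_t[f_t(S^t)]\bigr)+2\mu LT .
\]
The first sum is handled by the textbook projected-gradient inequality for the linear losses $x\mapsto(\hat g^t)^\top x$ on $[0,1]^n$ (which has diameter $\sqrt n$): pathwise, $\sum_t(\hat g^t)^\top(x^t-\chi_{S_\star^T})\le \tfrac{n}{2\eta}+\tfrac{\eta}{2}\sum_t\|\hat g^t\|^2$. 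Taking expectations, unbiasedness of $\hat g^t$ together with measurability of $x^t$ and $\chi_{S_\star^T}$ with respect to the history turns the left-hand side into $\EE\bigl[\sum_t(g^t)^\top(x^t-\chi_{S_\star^T})\bigr]$, while the second (Azuma-type) sum has zero mean. Since $\hat f^t$ has a single nonzero coordinate and $\PP(S^t=A_i^t)\ge\mu/(n+1)$, one gets $\EE_t[\|\hat g^t\|^2]=O(n^2L^2/\mu)$; plugging $\eta=1/(LT^{2/3})$ and $\mu=n/T^{1/3}$ balances $\tfrac{n}{2\eta}$, $\tfrac{\eta}{2}\cdot O(n^2L^2T/\mu)$ and $2\mu LT$ all at order $O(nLT^{2/3})$, which gives $\EE[R_{\alpha,\beta}(T)]=O(nT^{2/3})$. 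The constraint $\mu\in(0,1)$ holds once $T\ge n^3$; for $T<n^3$ one invokes the trivial bound $R_{\alpha,\beta}(T)\le O(LT)\le O(LnT^{2/3})$.

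For the high-probability bound I would concentrate the three stochastic pieces by Freedman's (or Bernstein's) inequality and a union bound. The Azuma sum $\sum_t(f_t(S^t)-\EE_t[f_t(S^t)])$ has increments bounded by $2L$ and contributes $O(L\sqrt{T\log(1/\delta)})$. The deviation $\sum_t(g^t-\hat g^t)^\top(x^t-\chi_{S_\star^T})$ is a martingale, and the essential point is to bound its per-step conditional variance sharply: rather than the crude $\|x^t-\chi_{S_\star^T}\|^2\,\EE_t[\|\hat g^t\|^2]=O(n^3L^2/\mu)$, one uses that $(\hat g^t)^\top(x^t-\chi_{S_\star^T})$ collapses to a single importance weight $\hat f_k^t$ (for the realized index $k$ with $S^t=A_k^t$) times a difference of two coordinates of $x^t-\chi_{S_\star^T}$, which is at most $2$ in absolute value, giving per-step variance $O(n^2L^2/\mu)$, total variance $O(n^2L^2T/\mu)=O(nL^2T^{4/3})$, and hence a Freedman term of order $O(LT^{2/3}\sqrt{n\log(1/\delta)})$; the associated range term is of lower order. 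The last piece, $\tfrac{\eta}{2}\sum_t(\|\hat g^t\|^2-\EE_t[\|\hat g^t\|^2])$, is treated the same way and, once multiplied by $\eta$, is of lower order. Summing the three contributions and the $O(nT^{2/3})$ from the mean yields $R_{\alpha,\beta}(T)=O(nT^{2/3}+\sqrt{n\log(1/\delta)}\,T^{2/3})$ with probability $1-\delta$.

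The crux is precisely this sharp variance estimate for the gradient-estimation martingale. The naive route, following the analysis of~\citet{Flaxman-2005-Online}, would bound $(\hat g^t)^\top v$ through $\|\hat g^t\|$ and lose a power of $T$ (landing at the $O(T^{3/4})$ already flagged before Theorem~\ref{Theorem:BGD}); the improvement relies on the combinatorial structure of the chain, namely that $\hat g^t$ has at most two nonzero entries which are negatives of each other, so that its inner product against any vector in $[-1,1]^n$ behaves like one bounded-magnitude importance weight rather than like the full Euclidean norm of $\hat g^t$. Once this is in place, the remaining estimates are routine and parallel those behind Theorem~\ref{Theorem:OGD}.
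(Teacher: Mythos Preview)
Your proposal is correct and follows essentially the same route as the paper's proof: projected-gradient inequality on $[0,1]^n$, unbiasedness of $\hat g^t$, the variance bound $\EE_t[\|\hat g^t\|^2]=O(n^2L^2/\mu)$, and Freedman/Bernstein for the high-probability part. Your structural observation that $\hat g^t$ has at most two nonzero entries of opposite sign is exactly what the paper uses (phrased there as the H\"older bound $|(\hat g^t)^\top v|\le\|\hat g^t\|_1\|v\|_\infty$ with $\|\hat g^t\|_1=2|\hat f_k^t|$), so the variance computations coincide.

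The one methodological difference worth noting: you fix the comparator $\chi_{S_\star^T}$ by invoking the oblivious-adversary setup, so a single Freedman application suffices; the paper instead takes a union bound over all $2^n$ subsets $S$ (applying concentration at level $\delta/2^n$) so that the bound holds uniformly in $S$ before specializing to $S_\star^T$. The union bound costs an extra $\sqrt n$ under the root, producing a $\sqrt{n^2+n\log(1/\delta)}$ factor, but this is absorbed into the leading $O(nT^{2/3})$ term, so both arguments yield the stated rate.
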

\begin{remark}
Theorem~\ref{Theorem:BGD} demonstrates that Algorithm~\ref{alg:BGD} is no-regret for our setting even when only the bandit feedback is available, further extending the results in~\citet{Hazan-2012-Online} from submodular cost functions to nonsubmodular cost functions in Eq.~\eqref{prob:online} using the $(\alpha, \beta)$-regret instead of the standard regret in Eq.~\eqref{eq:regret}.  
\end{remark}

\section{Online Delayed Approximation Algorithm}\label{sec:delay}
We investigate Algorithm~\ref{alg:OGD} and~\ref{alg:BGD} for regret minimization even when the delay between choosing an action and receiving the incurred cost exists and can be unbounded. 

\subsection{The general framework}
The general online learning framework with large delay that we consider can be represented as follows. In each round $t = 1, \ldots, T$, the agent chooses the decision $S^t \subseteq [n]$ and this generates a loss $f_t(S^t)$. Simultaneously, $S^t$ triggers a \textit{delay} $d_t \geq 0$ which determines the round $t + d_t$ at which the information about $f_t$ will be received. Finally, the agent receives the information about $f_t$ from all previous rounds $\RCal_t = \{s: s + d_s = t\}$. 

The above model has been stated in an abstract way as the basis for the regret analysis. The information about $f_t$ is determined by whether the setting is full information or bandit feedback.  Our blanket assumptions for the stream of the delays encountered will be: 
\begin{assumption}\label{Assumption:delay}
The delays $d_t = o(t^\gamma)$ for some $\gamma < 1$. 
\end{assumption}
Assumption~\ref{Assumption:delay} is not theoretically artificial but uncovers that long delays are observed in practice~\citep{Chapelle-2014-Modeling}; indeed, the data statistics from real-time bidding company suggested that more than $10\%$ of the conversions were $ \geq 2$ weeks old. More specifically,~\citet{Chapelle-2014-Modeling} showed that the delays in online advertising have long-tail distributions when conditioning on context and feature variables available to the advertiser, thus justifying the existence of unbounded delays. Note that Assumption~\ref{Assumption:delay} is mild and the delays can even be adversarial as in~\citet{Quanrud-2015-Online}. 
\begin{algorithm}[!t]
\caption{Delay Online Approximate Gradient Descent}\label{alg:DOGD}
\begin{algorithmic}[1]
\STATE \textbf{Initialization:} the point $x^1 \in [0, 1]^n$ and the stepsize $\eta_t > 0$; $\PCal_0 \leftarrow \emptyset$ and $f_\infty = 0$. 
\FOR{$t = 1, 2, \ldots$}
\STATE Let $x_{\pi(1)}^t \geq \ldots x_{\pi(n)}^t$ be the sorted entries in the decreasing order with $A_i^t = \{\pi(1), \ldots, \pi(i)\}$ for all $i \in [n]$ and $A_0^t = \emptyset$. Let $x_{\pi(0)}^t = 1$ and $x_{\pi(n+1)}^t = 0$. 
\STATE Let $\lambda_i^t = x_{\pi(i)}^t - x_{\pi(i+1)}^t$ for all $0 \leq i \leq n$. 
\STATE Sample $S^t$ from the distribution $\PP(S^t = A_i^t) = \lambda_i^t$ for $0 \leq i \leq n$ and observe the new loss function $f_t$. 
\STATE Compute $g_{\pi(i)}^t = f_t(A_i^t) - f_t(A_{i-1}^t)$ for all $i \in [n]$ and then trigger a delay $d_t \geq 0$. 
\STATE Let $\RCal_t = \{s: s + d_s = t\}$ and $\PCal_t \leftarrow \PCal_{t-1} \cup \RCal_t$.  Take $q_t = \min \PCal_t$ and set $\PCal_t \leftarrow \PCal_t \setminus \{q_t\}$. 
\STATE Compute $x^{t+1}$ using Eq.~\eqref{def:DOGD_main}. 
\ENDFOR
\end{algorithmic}
\end{algorithm}

\subsection{Full information setting}
At the round $t$, the agent receives the loss function $f_s(\cdot)$ for $\RCal_t = \{s: s + d_s = t\}$ after committing her decision, i,e., gets to observe $f_s(A_i^t)$ for all $s \in \RCal_t$ and all $0 \leq i \leq n$. To let Algorithm~\ref{alg:OGD} handle these delays, the first thing to note is that the set $\RCal_t$ received at a given round might be empty, i.e., we could have $\RCal_t = \emptyset$ for some $t \geq 1$. Following up the pooling strategy in~\citet{Heliou-2020-Gradient}, we assume that, as information is received over time, the agent adds it to an information pool $\PCal_t$ and then uses the oldest information available in the pool (where ``oldest" stands for the time at which the information was generated). 

Since no information is available at $t = 0$, we have $\PCal_0 = \emptyset$ and update the agent's information pool recursively: $\PCal_t = \PCal_{t-1} \cup \RCal_t \setminus \{q_t\}$ where $q_t = \min(\PCal_{t-1} \cup \RCal_t)$ denotes the oldest round from which the agent has unused information at round $t$. As~\citet{Heliou-2020-Gradient} pointed out, this scheme can be seen as a priority queue where $\{f_s(\cdot), s \in \RCal_t\}$ arrives at time $t$ and is assigned in order; subsequently, the oldest information is utilized at first. An important issue that arises in the above computation is that, it may well happen that the agent's information pool $\PCal_t$ is empty at time $t$ (e.g., if we have $d_1 > 0$ at time $t=1$). Following the convention that $\inf \emptyset = +\infty$, we set $q_t = +\infty$ and $g^\infty = 0$ (since it is impossible to have information at time $t=+\infty$). Under this convection, the computation of a new iterate $x^{t+1}$ at time $t$ can be written more explicitly form as follows, 
\begin{equation}\label{def:DOGD_main}
x^{t+1} = \left\{ 
\begin{array}{ll}
x^t & \textnormal{if } \PCal_t = \emptyset, \\
P_{[0, 1]^n}(x^t - \eta_t g^{q_t}), & \textnormal{otherwise}.
\end{array}
\right.
\end{equation}
We present a delayed variant of Algorithm~\ref{alg:OGD} in Algorithm~\ref{alg:DOGD}. There is no information aggregation here but the updates of $x^{t+1}$ follows the pooling policy induced by a priority queue.  We summarize our results in the following theorem. 
\begin{theorem}\label{Theorem:DOGD}
Suppose the adversary chooses nonsubmodular functions in Eq.~\eqref{prob:online} satisfying $\bar{f}_t([n]) + \ushort{f}_t([n]) \leq L$ and let the delays satisfy Assumption~\ref{Assumption:delay}. Fixing $T \geq 1$ and letting $\eta_t = \frac{\sqrt{n}}{L\sqrt{t^{1+\gamma}}}$ in Algorithm~\ref{alg:DOGD}, we have $\EE[R_{\alpha, \beta}(T)] = O(\sqrt{nT^{1+\gamma}})$ and $R_{\alpha, \beta}(T) = O(\sqrt{nT^{1+\gamma}} + \sqrt{T\log(1/\delta)})$ with probability $1 - \delta$. 
\end{theorem}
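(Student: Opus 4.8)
\emph{Proof sketch.} The plan is to reduce the $(\alpha,\beta)$-regret to a one-dimensional online linear optimization problem exactly as in the no-delay analysis of Theorem~\ref{Theorem:OGD}, and then to absorb the effect of the pooling rule into a ``staleness'' correction that is controlled by Assumption~\ref{Assumption:delay}. First, observe that the iterates $\{x^t\}$ and the vectors $\{g^t\}$ generated by Algorithm~\ref{alg:DOGD} are \emph{deterministic} functions of the (adversarially fixed) loss and delay sequences; the only randomness is the draw $S^t\sim\lambda^t$. Since $\EE[f_t(S^t)]=\sum_{i=0}^n\lambda_i^t f_t(A_i^t)=f_L(x^t)=(g^t)^\top x^t$ by Definition~\ref{def:Lovasz} and Eq.~\eqref{Prop:structure-first}, while Eq.~\eqref{Prop:structure-second} applied with $A=S_\star^T$ gives $(g^t)^\top\chi_{S_\star^T}=g^t(S_\star^T)\le\tfrac{1}{\alpha}\bar{f}_t(S_\star^T)-\beta\ushort{f}_t(S_\star^T)$, taking expectations yields, with $x_\star:=\chi_{S_\star^T}\in[0,1]^n$,
\begin{equation*}
\EE[R_{\alpha,\beta}(T)]\ \le\ \sum_{t=1}^T(g^t)^\top(x^t-x_\star).
\end{equation*}
The only magnitude facts I need are $\|g^t\|\le\|g^t\|_1\le\bar{f}_t([n])+\ushort{f}_t([n])\le L$ (monotonicity and normalization of $\bar{f}_t,\ushort{f}_t$ plus a telescoping sum), $\|x-y\|\le\sqrt n$ on $[0,1]^n$, and consequently $\|x^{t+1}-x^t\|\le\eta_t\|g^{q_t}\|\le L\eta_t$ by nonexpansiveness of $P_{[0,1]^n}$ (with the convention $g^{q_t}=0$ whenever $\PCal_t=\emptyset$).

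Second, I would treat $\sum_t(g^t)^\top(x^t-x_\star)$ as a delayed projected online gradient descent. Let $u(t)$ be the round at which $g^t$ is popped from the pool ($u(t)=+\infty$ if never within the horizon) and let $\UCal_T=\{t\le T: u(t)>T\}$, so that the complementary indices are exactly $\{q_s:\PCal_s\ne\emptyset\}$. Reindexing through $t=q_{u(t)}$ and splitting $(g^{q_s})^\top(x^{q_s}-x_\star)=(g^{q_s})^\top(x^s-x_\star)+(g^{q_s})^\top(x^{q_s}-x^s)$ gives $\sum_t(g^t)^\top(x^t-x_\star)=(I)+(II)+(III)$ with $(I)=\sum_{s=1}^T(g^{q_s})^\top(x^s-x_\star)$, $(II)=\sum_{s:\PCal_s\ne\emptyset}(g^{q_s})^\top(x^{q_s}-x^s)$, and $(III)=\sum_{t\in\UCal_T}(g^t)^\top(x^t-x_\star)$. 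For $(I)$ the descent inequality $\|x^{s+1}-x_\star\|^2\le\|x^s-x_\star\|^2-2\eta_s(g^{q_s})^\top(x^s-x_\star)+\eta_s^2\|g^{q_s}\|^2$ holds at every round (trivially when $\PCal_s=\emptyset$), and Abel summation over the decreasing steps $\eta_t=\tfrac{\sqrt n}{L\sqrt{t^{1+\gamma}}}$ gives $(I)\le\tfrac{n}{2\eta_T}+\tfrac{L^2}{2}\sum_{s=1}^T\eta_s=O(\sqrt{nT^{1+\gamma}})$, where $\tfrac{1+\gamma}{2}<1$ controls $\sum_s s^{-(1+\gamma)/2}=O(T^{(1-\gamma)/2})$ and one may harmlessly replace $\gamma$ by $\max\{\gamma,0\}$ so that $\tfrac{1-\gamma}{2}\le\tfrac{1+\gamma}{2}$.

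Third --- and this is the crux --- I would bound $(II)$ and $(III)$ through the staleness $u(t)-t$ and the size of the information pool. Because $q_s\le s$ and $\|x^{j+1}-x^j\|\le L\eta_j$ with $\eta_j$ nonincreasing, $\|x^{q_s}-x^s\|\le L\sum_{j=q_s}^{s-1}\eta_j\le L(s-q_s)\eta_{q_s}$, so $(II)\le L^2\sum_{t\notin\UCal_T}(u(t)-t)\,\eta_t$. Splitting $u(t)-t=d_t+w_t$ with $w_t:=u(t)-(t+d_t)\ge0$ the queueing delay, the piece $\sum_t d_t\eta_t$ is $o(\tfrac{\sqrt n}{L}T^{(1+\gamma)/2})$ since $d_t=o(t^\gamma)$ and $\eta_t=\Theta(t^{-(1+\gamma)/2})$. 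For the queueing piece I would use the Lindley-type identity for the priority queue's backlog $Q_s$ after round $s$, namely $Q_s=\max_{0\le r\le s}\big(|\{j: r<j+d_j\le s\}|-(s-r)\big)$, together with $|\{j: r<j+d_j\le s\}|\le(s-r)+|\{j\le r: d_j>r-j\}|\le(s-r)+\max_{j\le r}d_j$, to conclude $\max_{s\le T}Q_s\le\max_{t\le T}d_t=O(T^\gamma)$; since an element never waits longer than the backlog it meets, $\max_{t\le T}w_t=O(T^\gamma)$ as well, so $\sum_t w_t\eta_t\le O(T^\gamma)\sum_t\eta_t=O(\tfrac{\sqrt n}{L}T^{(1+\gamma)/2})$ and $(II)=O(\sqrt{nT^{1+\gamma}})$. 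The same backlog bound together with $|\{t\le T: t+d_t>T\}|\le\max_{t\le T}d_t=O(T^\gamma)$ gives $|\UCal_T|=O(T^\gamma)$, and bounding each summand of $(III)$ by $\|g^t\|\,\|x^t-x_\star\|\le L\sqrt n$ yields $(III)=O(\sqrt n\,T^\gamma)=o(\sqrt{nT^{1+\gamma}})$. Summing $(I)+(II)+(III)$ proves $\EE[R_{\alpha,\beta}(T)]=O(\sqrt{nT^{1+\gamma}})$; this queue-backlog step, where Assumption~\ref{Assumption:delay} enters, is the part that goes beyond Theorem~\ref{Theorem:OGD} and parallels the pooling analysis of~\citet{Heliou-2020-Gradient}.

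Finally, for the high-probability bound I would note that $R_{\alpha,\beta}(T)-\sum_{t=1}^T(g^t)^\top(x^t-x_\star)\le\sum_{t=1}^T\big(f_t(S^t)-(g^t)^\top x^t\big)$ is a sum of independent, mean-zero variables bounded in $[-2L,2L]$ (using that each $x^t$ is deterministic and $|f_t(S^t)|,|f_L(x^t)|\le L$), so Hoeffding's inequality gives $\sum_t\big(f_t(S^t)-(g^t)^\top x^t\big)=O(\sqrt{T\log(1/\delta)})$ with probability at least $1-\delta$; combined with the deterministic bound $\sum_t(g^t)^\top(x^t-x_\star)=O(\sqrt{nT^{1+\gamma}})$ from the previous steps this yields $R_{\alpha,\beta}(T)=O(\sqrt{nT^{1+\gamma}}+\sqrt{T\log(1/\delta)})$. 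The main obstacle is the third paragraph: proving that the priority-queue pooling keeps the backlog $Q_s$ --- hence the per-round staleness $w_t$ and the tail size $|\UCal_T|$ --- at the sub-polynomial scale $O(T^\gamma)$ forced by Assumption~\ref{Assumption:delay}, so that the delay-induced terms $(II)$ and $(III)$ stay at or below the $\sqrt{nT^{1+\gamma}}$ order; the remaining steps are routine adaptations of projected-OGD and concentration arguments.
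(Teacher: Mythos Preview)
Your proposal is correct and follows essentially the same strategy as the paper: reduce $\EE[R_{\alpha,\beta}(T)]$ to $\sum_t (g^t)^\top(x^t-x_\star)$ via Proposition~\ref{Prop:structure}, decompose this delayed online linear problem into a standard projected-OGD telescoping term plus a staleness correction $\|x^{q_s}-x^s\|\le L\sum_{j=q_s}^{s-1}\eta_j$, bound the staleness through Assumption~\ref{Assumption:delay}, and finish with Hoeffding using that the $x^t$ are deterministic. The paper's proof differs only in bookkeeping: it sums over $t=1,\ldots,\bar T$ (where $q_{\bar T}=T$) rather than splitting off your term $(III)$, and it imports the staleness bound $t-q_t=o(t^\gamma)$ directly from~\citet[Corollary~1]{Heliou-2020-Gradient} instead of deriving it via your Lindley-type backlog argument. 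Your self-contained derivation of the backlog bound $Q_s\le\max_{j\le s}d_j$ is a nice addition.

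One small caution: the assertion ``an element never waits longer than the backlog it meets'' is a FIFO property, not a priority-queue property; in a priority queue, items $j<t$ arriving \emph{after} $a_t=t+d_t$ can preempt item $t$. However, the number of such preemptions is at most $|\{j<t:\,j+d_j>t+d_t\}|\le\max_{j\le t}d_j$ (since each such $j$ satisfies $t-j<d_j$), so in fact $w_t\le Q_{a_t}+\max_{j\le t}d_j=O(T^\gamma)$ and your conclusion stands. This is exactly the content of the Heliou et al.\ corollary the paper cites, so the gap is easily patched.
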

\begin{remark}
Theorem~\ref{Theorem:DOGD} demonstrates that Algorithm~\ref{alg:DOGD} is no-regret if Assumption~\ref{Assumption:delay} hold. To our knowledge, this is the first theoretical guarantee for no-regret learning in online nonsubmodular minimization with delayed costs and also complement similar results for online convex optimization with delayed costs~\citep{Quanrud-2015-Online}. 
\end{remark}
\begin{algorithm}[!t]
\caption{Delay Bandit Approximate Gradient Descent}\label{alg:DBGD}
\begin{algorithmic}[1]
\STATE \textbf{Initialization:} the point $x^1 \in [0, 1]^n$ and the stepsize $\eta_t > 0$; $\PCal_0 \leftarrow \emptyset$ and $f_\infty = 0$; the exploration probability $\mu_t \in (0, 1)$. 
\FOR{$t = 1, 2, \ldots$}
\STATE Let $x_{\pi(1)}^t \geq \ldots x_{\pi(n)}^t$ be the sorted entries in the decreasing order with $A_i^t = \{\pi(1), \ldots, \pi(i)\}$ for all $i \in [n]$ and $A_0^t = \emptyset$. Let $x_{\pi(0)}^t = 1$ and $x_{\pi(n+1)}^t = 0$. 
\STATE Let $\lambda_i^t = x_{\pi(i)}^t - x_{\pi(i+1)}^t$ for all $0 \leq i \leq n$. 
\STATE Sample $S^t$ from the distribution $\PP(S^t = A_i^t) = (1-\mu_t)\lambda_i^t + \frac{\mu_t}{n+1}$ for $0 \leq i \leq n$ and observe the loss $f_t(S^t)$. 
\STATE Compute $\hat{f}_i^t = \frac{\one(S^t = A_i^t)}{(1-\mu_t)\lambda_i^t + \mu_t/(n+1)}f_t(S^t)$.
\STATE Compute $\hat{g}^t_{\pi(i)} = \hat{f}_i^t - \hat{f}_{i-1}^t$ for all $i \in [n]$ and then trigger a delay $d_t \geq 0$. 
\STATE Let $\RCal_t = \{s: s + d_s = t\}$ and $\PCal_t \leftarrow \PCal_{t-1} \cup \RCal_t$.  Take $q_t = \min \PCal_t$ and set $\PCal_t \leftarrow \PCal_t \setminus \{q_t\}$. 
\STATE Compute $x^{t+1}$ using Eq.~\eqref{def:DBGD_main}. 
\ENDFOR
\end{algorithmic}
\end{algorithm}
\subsection{Bandit feedback setting}
As we have done in the previous section, we will make use of an unbiased estimator $\hat{g}$ of the gradient for the bandit feedback setting.  However, we only receive the old estimator $\hat{g}^{q_t}$ at the round $t$ due to the delay $d_t$. Following the same reasoning as in the full information setting, the computation of a new iterate $x^{t+1}$ at time $t$ can be written more explicitly form as follows, 
\begin{equation}\label{def:DBGD_main}
x^{t+1} = \left\{ 
\begin{array}{ll}
x^t & \textnormal{if } \PCal_t = \emptyset, \\
P_{[0, 1]^n}(x^t - \eta_t \hat{g}^{q_t}), & \textnormal{otherwise}.
\end{array}
\right.
\end{equation}
Algorithm~\ref{alg:DBGD} follows the same template as Algorithm~\ref{alg:DOGD} but substituting the exact gradients with the gradient estimator. We summarize our results in the following theorem. 
\begin{theorem}\label{Theorem:DBGD}
Suppose the adversary chooses nonsubmodular functions in Eq.~\eqref{prob:online} satisfying $\bar{f}_t([n]) + \ushort{f}_t([n]) \leq L$ and let the delays satisfy Assumption~\ref{Assumption:delay}. Fixing $T \geq 1$ and letting $(\eta_t, \mu_t) = (\frac{1}{Lt^{(2+\gamma)/3}}, \frac{n}{t^{(1-\gamma)/3}})$ in Algorithm~\ref{alg:DBGD}, we have $\EE[R_{\alpha, \beta}(T)] = O(n T^{\frac{2+\gamma}{3}})$ and $R_{\alpha, \beta}(T) = O(n T^{\frac{2+\gamma}{3}} + \sqrt{n\log(1/\delta)}T^{\frac{4-\gamma}{6}})$ with probability $1 - \delta$. 
\end{theorem}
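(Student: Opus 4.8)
The plan is to decompose the $(\alpha,\beta)$-regret into a ``mixing overhead'' coming from the explicit exploration mass $\mu_t$, a linearized regret against the surrogate sequence $\{x^t\}$, a stochastic error from replacing the true subgradient by its estimator, and a ``delay gap'' coming from the pooling mechanism, and then to bound each piece under the prescribed schedule. Conditioning on the (oblivious) sequence $f_1,\dots,f_T$ and on the delay sequence, a direct computation of $\EE_{S^t}[f_t(S^t)]$ under the rule $\PP(S^t=A_i^t)=(1-\mu_t)\lambda_i^t+\mu_t/(n+1)$, together with the defining identity $\sum_i\lambda_i^t f_t(A_i^t)=(f_t)_L(x^t)$ (cf.\ Definition~\ref{def:Lovasz}) and $|f_t(A_i^t)|\le L$, gives $\EE[f_t(S^t)\mid\cdot]\le (f_t)_L(x^t)+O(L\mu_t)$; since $\sum_t\mu_t=O(nT^{(2+\gamma)/3})$, the mixing overhead is within budget. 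Applying Proposition~\ref{Prop:structure} at $z=\chi_{S_\star^T}$ (using Eq.~\eqref{Prop:structure-first}, Eq.~\eqref{Prop:structure-third}, and the fact that a convex closure agrees with its set function on $\{0,1\}^n$) yields $(f_t)_L(x^t)-(\tfrac1\alpha\bar f_t(S_\star^T)-\beta\ushort{f}_t(S_\star^T))\le\langle g^t,x^t-\chi_{S_\star^T}\rangle$, so it remains to bound $\sum_t\langle g^t,x^t-u\rangle$ with $u=\chi_{S_\star^T}$.

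For the linearized term I would pass to the unbiased estimator: since $\EE[\hat g^t\mid\mathcal F_{t-1}]=g^t$ and $u,x^t$ are $\mathcal F_{t-1}$-measurable, $\EE\sum_t\langle g^t,x^t-u\rangle=\EE\sum_t\langle\hat g^t,x^t-u\rangle$, and $\sum_t\langle g^t-\hat g^t,x^t-u\rangle$ is a martingale-difference sum whose per-term conditional second moment is $O(L^2n^2/\mu_t)$ (as $\hat g^t$ has at most two nonzero coordinates, each of size $\le L(n+1)/\mu_t$ with probability $O(\mu_t/(n+1))$) and whose magnitude is $O(Ln/\mu_t)$; a Freedman-type inequality then controls it by $O(\sqrt{n\log(1/\delta)}\,T^{(4-\gamma)/6})$ up to lower-order terms. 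The quantity $\sum_t\langle\hat g^t,x^t-u\rangle$ is then analyzed through the priority-queue dynamics: reindexing rounds by the order in which gradients are \emph{consumed} (popped as $q_t$), the iterates visited by the updates actually performed form a standard projected-OGD trajectory with nonincreasing stepsizes, so telescoping $\|x^t-u\|^2$ over the consumption order gives $\sum_{\text{consumed }s}\langle\hat g^s,x^{\sigma(s)}-u\rangle\le \tfrac{n}{2\eta_T}+\tfrac12\sum_s\eta_{\sigma(s)}\|\hat g^s\|^2$, where $\sigma(s)\ge s$ denotes the round at which $\hat g^s$ is used; the \emph{delay gap} $\sum_{\text{consumed }s}\langle\hat g^s,x^s-x^{\sigma(s)}\rangle$ accounts for using each gradient at a later iterate, and the gradients left unconsumed by $T$ (and the rounds with empty pool) will contribute only lower-order terms.

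The crux — and the step I expect to be the main obstacle — is bounding the delay gap and the leftover terms, which requires controlling the \emph{staleness} $\sigma(s)-s$, equivalently $t-q_t$. Following the pooling analysis of~\citet{Heliou-2020-Gradient} but adapting it to unbounded delays, I would show from Assumption~\ref{Assumption:delay} that the pool is empty on at most $o(T^\gamma)$ rounds and, crucially, that $t-q_t=o(t^\gamma)$ \emph{pointwise}: once the gradient generated at round $q_t$ becomes available (at round $q_t+d_{q_t}$ with $d_{q_t}=o(q_t^\gamma)$), only the $O(\text{pool size})=o(q_t^\gamma)$ strictly older backlogged gradients can delay its consumption. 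Given this, nonexpansiveness of the projection gives $\|x^s-x^{\sigma(s)}\|\le\sum_{\tau=q_{\sigma(s)}}^{\sigma(s)-1}\eta_\tau\|\hat g^{q_\tau}\|$, and an AM--GM split followed by $\EE\|\hat g^{q_\tau}\|^2=O(L^2n^2/\mu_{q_\tau})$, $q_\tau\le\tau$, $q_\tau\ge\tau/2$ past an $o(T^\gamma)$ initial segment, $\eta_\tau$ nonincreasing, and the pointwise staleness bound reduce the delay gap in expectation to $\sum_t o(t^\gamma)\cdot\eta_t\cdot O(L^2n^2/\mu_t)=o\!\big(Ln\sum_t t^{(\gamma-1)/3}\big)=o(nT^{(2+\gamma)/3})$.

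Finally I would collect the remaining contributions: $\tfrac{n}{2\eta_T}=O(nT^{(2+\gamma)/3})$, and $\sum_s\eta_{\sigma(s)}\EE\|\hat g^s\|^2\le\sum_s\eta_s\cdot O(L^2n^2/\mu_s)=O(nT^{(2-2\gamma)/3})$ since the $\sigma(s)$ are distinct; together with the mixing overhead and the leftover/empty-pool terms (each $o(nT^{(2+\gamma)/3})$) this yields $\EE[R_{\alpha,\beta}(T)]=O(nT^{(2+\gamma)/3})$. For the high-probability statement I would combine the Freedman bound above with an $O(\sqrt{T\log(1/\delta)})$ Azuma bound on $\sum_t(f_t(S^t)-\EE[f_t(S^t)\mid\cdot])$ (each term bounded by $O(L)$); both extra terms are dominated by $O(nT^{(2+\gamma)/3}+\sqrt{n\log(1/\delta)}\,T^{(4-\gamma)/6})$, giving the claimed bound.
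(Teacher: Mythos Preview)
Your expected-regret argument is essentially the paper's, reorganised: both use the mixing bound $\EE[f_t(S^t)\mid x^t]\le(f_t)_L(x^t)+2L\mu_t$, Proposition~\ref{Prop:structure}, the priority-queue reindexing with $t-q_t=o(t^\gamma)$ from~\citet{Heliou-2020-Gradient}, and the estimator moment bounds of Lemma~\ref{Lemma:BGD-first}. The only structural difference is the order in which you linearise and reindex. The paper (Lemma~E.1) telescopes at round $t$ with $\hat g^{q_t}$, takes conditional expectation to replace $\hat g^{q_t}$ by $g^{q_t}$, and \emph{then} shifts from $x^t$ to $x^{q_t}$; so its delay gap is $(x^t-x^{q_t})^\top g^{q_t}\le L\|x^t-x^{q_t}\|$, where $\|x^t-x^{q_t}\|$ is bounded almost surely via $\|\hat g^{\cdot}\|\le 2(n+1)L/\mu_{\cdot}$. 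You instead pass from $g^t$ to $\hat g^t$ first (globally), and only afterwards reindex, so your delay gap is $\sum_s\langle\hat g^s,x^s-x^{\sigma(s)}\rangle$ with the \emph{large} estimator in front. Your AM--GM bound recovers the right order in expectation, so the first half of the theorem goes through.

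Where this ordering costs you is the high-probability statement. Your listed concentration steps (Freedman on $\sum_t\langle g^t-\hat g^t,x^t-u\rangle$ and Azuma on $\sum_t f_t(S^t)-\EE[f_t(S^t)\mid\cdot]$) do not control the two remaining random pieces of your decomposition: the delay gap $\sum_s\langle\hat g^s,x^s-x^{\sigma(s)}\rangle$ and the term $\tfrac12\sum_s\eta_{\sigma(s)}\|\hat g^s\|^2$. The AM--GM step only gives these in expectation; an almost-sure bound using $\|\hat g^s\|\le 2(n+1)L/\mu_s$ on both factors of the delay gap yields $o(LT)$, which is too large for small $\gamma$. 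The fix is to split $\hat g^s=g^s+e_s$ inside the delay gap---the $g^s$ part is then $\le L\|x^s-x^{\sigma(s)}\|$ and almost-surely bounded exactly as in the paper, and $\sum_s\langle e_s,x^s-x^{\sigma(s)}\rangle$ is a martingale with respect to the \emph{consumption} filtration $\mathcal G_\tau=\sigma(S^{q_1},\dots,S^{q_{\tau-1}})$ (since $x^{\sigma(s)}$ does not depend on $S^s$) and needs its own Freedman bound; a further Freedman bound on $\sum_s\eta_{\sigma(s)}(\|\hat g^s\|^2-\EE[\|\hat g^s\|^2\mid\cdot])$ is also required, as the paper does in Lemma~E.2 (its term~II). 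In short, the paper's ordering makes the delay gap deterministic and reduces the concentration burden to two martingales; your ordering is equally valid but requires two additional martingale steps that the proposal does not supply.
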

\begin{remark} 
Theorem~\ref{Theorem:DBGD} demonstrates that Algorithm~\ref{alg:DBGD} attains the regret of $nT^{\frac{2+\gamma}{3}}$ which is worse that that of $\sqrt{nT^{1+\gamma}}$ for Algorithm~\ref{alg:DOGD} and reduces to that of $nT^{\frac{2}{3}}$ for Algorithm~\ref{alg:BGD}. Since $\gamma < 1$ is assumed, Algorithm~\ref{alg:DBGD} is the first no-regret bandit learning algorithm for online nonsubmodular minimization with delayed costs to our knowledge. 
\end{remark}

\section{Experiments}\label{sec:exp}
We conduct the numerical experiments on structured sparse learning problems and include Algorithm~\ref{alg:OGD}-\ref{alg:DBGD}, which we refer to as OAGD, BAGD, DOAGD, and DBAGD. All the experiments are implemented in Python 3.7 with a 2.6 GHz Intel Core i7 and 16GB of memory. For all our experiments, we set total number of rounds $T=10,000$, dimension $d=10$, number of samples (for round $t$) $n=100$, and sparse parameter $k=2$. For OAGD and DOAGD, we set the default step size $\eta_{\text{o}}=\sqrt{n}/(L\sqrt{T})$ (as described in Theorem~\ref{Theorem:OGD}). For BAGD and DBAGD, we set the default step size $\eta_{\text{b}}=1/(LT^{{2}/{3}})$ (as described in Theorem~\ref{Theorem:BGD}).

Our goal is to estimate the sparse vector $x^\star \in \br^d$ using the structured nonsubmodular model (see Example~\ref{def:SSL}). Following the setup in~\citet{El-2020-Optimal}, we let the function $f^r$ be the regularization in Eq.~\eqref{prob:SSL} such that $f(S) = f^r(S) = \max(S)-\min(S)+1$ for all $S\neq \emptyset$ and $f^r(\emptyset)=0$. We generate true solution $x^\star \in \br^d$ with $k$ consecutive 1's and other $n-k$ elements are zeros.  We define the function $h_t(S)$ for the round $t$ as follows: let $y_t = A_t x^\star + \epsilon_t$ where each row of $A_t \in \br^{n \times d}$ is an i.i.d. Gaussian vector and each entry of $\epsilon_t \in \br^n$ is sampled from a normal distribution with standard deviation equals to 0.01.  Then, we define the square loss $\ell_t(x) = \|A_t x-y_t\|^2_2$ and let $h_t(S) = \ell_t(0) - \min_{\supp(x) \subseteq S} \ell_t(x)$. We consider the constant delays in our experiments, i.e., the delay $\max_{t} d_t \leq d$ for all $t \geq 1$ where $d > 0$ is a constant. 

Figure~\ref{fig:main} summarizes some of experimental results. Indeed, we see from Figure~\ref{fig:main-a} that the bigger delays lead to worse regret for the full-information setting which confirms Theorem~\ref{Theorem:OGD} and~\ref{Theorem:DOGD}. The result in Figure~\ref{fig:main-b} demonstrates the similar phenomenon for the bandit feedback setting which confirms Theorem~\ref{Theorem:BGD} and~\ref{Theorem:DBGD}. Further, Figure~\ref{fig:main-c} demonstrates the effect of bandit feedback and delay simultaneously; indeed,  OAGD and DOAGD perform better than BAGD and DBAGD since the regret will increase if only the bandit feedback is available. We implement all the algorithms with varying step sizes and summarize the results in Figure~\ref{fig:appendix-1} and~\ref{fig:appendix-2}. In the former one, we use step sizes  $\eta_{\text{o}}/2=\sqrt{n}/(2L\sqrt{T})$ for OAGD and DOAGD and $\eta_{\text{b}}/2=1/(2LT^{{2}/{3}})$ for BAGD and DBAGD. In the latter one, we use step sizes  $\eta_{\text{o}}/5=\sqrt{n}/(5L\sqrt{T})$ for OAGD and DOAGD, and $\eta_{\text{b}}/5=1/(5LT^{{2}/{3}})$ for BAGD and DBAGD. Figure~\ref{fig:main}-\ref{fig:appendix-2} demonstrate that our proposed algorithms are not sensitive to the step size choice. 
\begin{figure*}[!t]
\centering
\subfigure[Effect of delay under the full-information setting.\label{fig:main-a}]{\includegraphics[width=.32\textwidth]{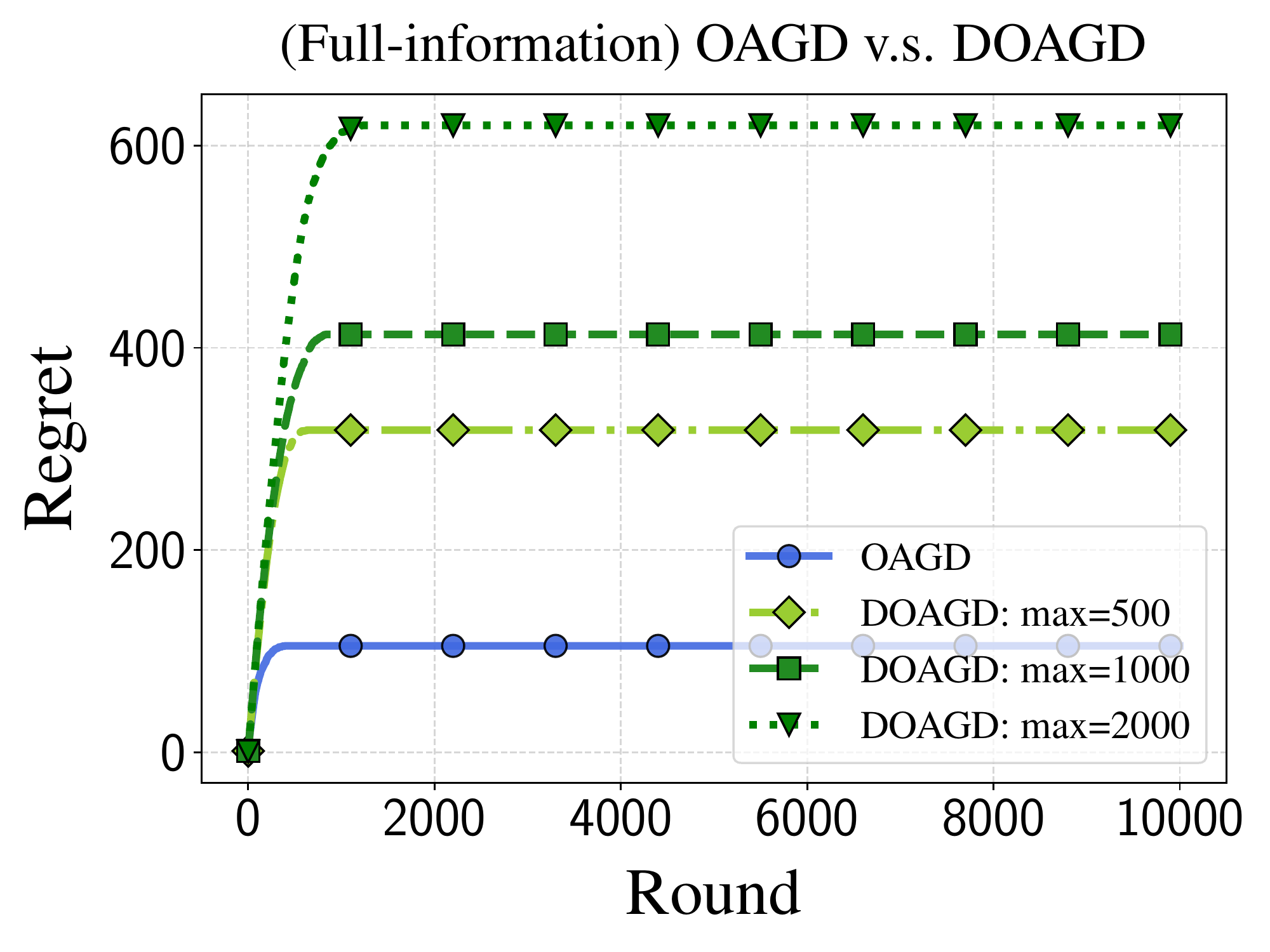}}
\subfigure[Effect of delay under the bandit feedback setting. \label{fig:main-b}]{\includegraphics[width=.32\textwidth]{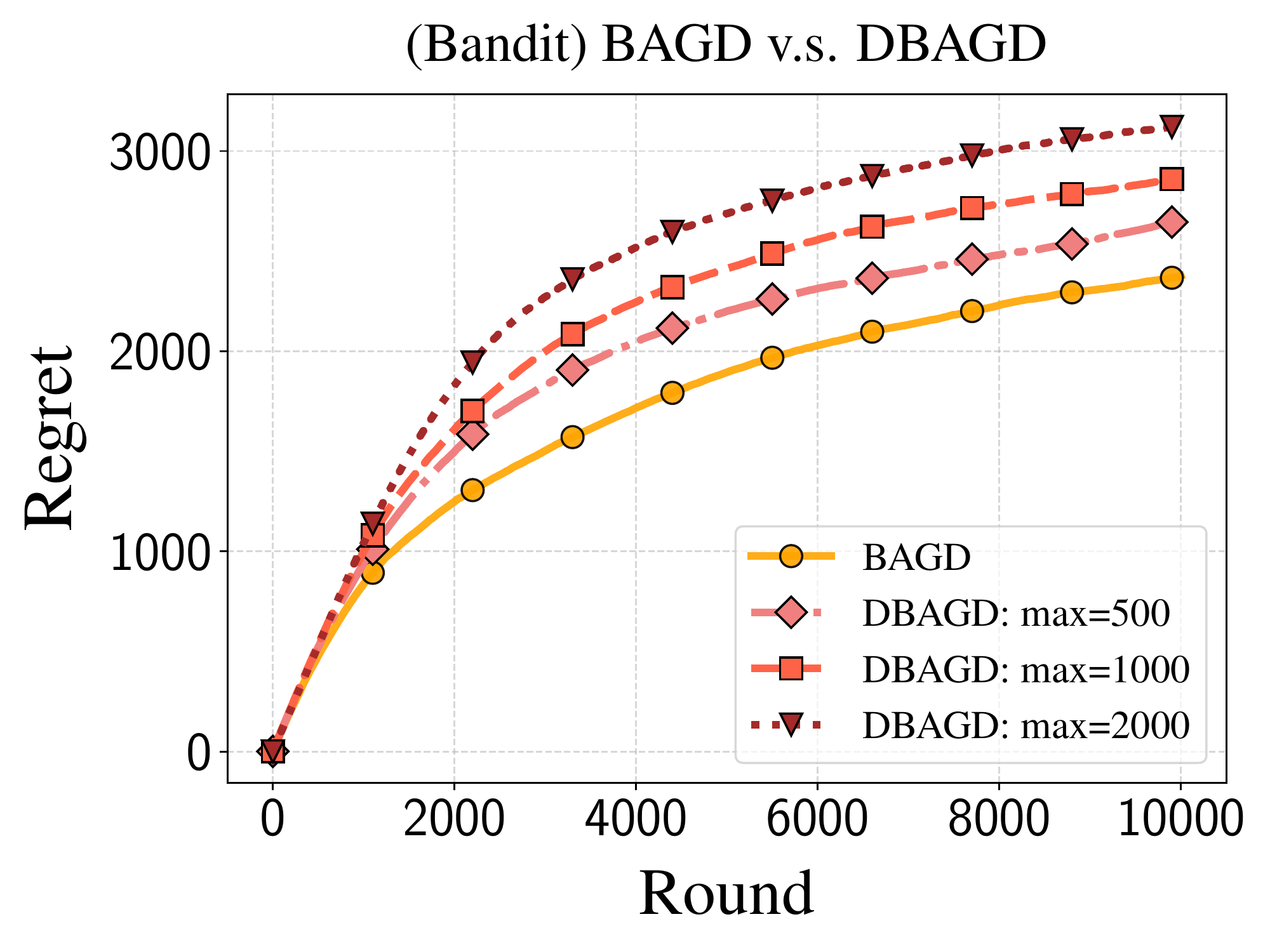}}
\subfigure[Effect of bandit feedback and delay.\label{fig:main-c}]{\includegraphics[width=.32\textwidth]{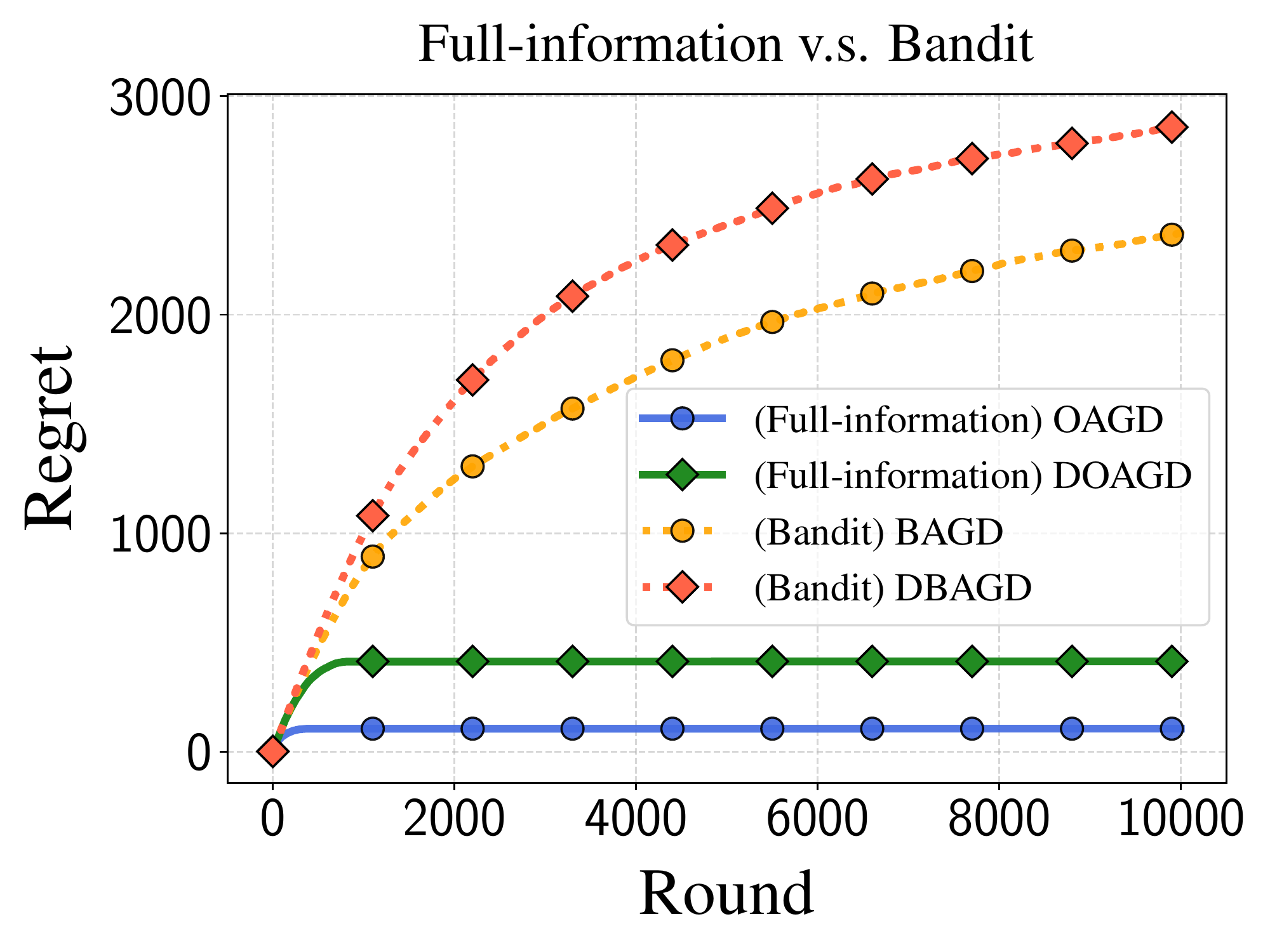}}
\vspace{-0.1in}
\caption{Comparison of our algorithms on sparse learning with delayed costs. In (a) and (b), we examine the effect of delay in the full-information and bandit settings respectively where the maximum delay $d \in \{500, 1000, 2000\}$. In (c), we examine the effect of bandit feedback by comparing the online algorithm with its bandit version where the maximum delay $d = 500$.}\label{fig:main}
\vspace{-0.15in}
\end{figure*}
\begin{figure*}[!t]
\centering
\subfigure[Effect of delay under the full-information setting.\label{fig:appendix-1-a}]{\includegraphics[width=.32\textwidth]{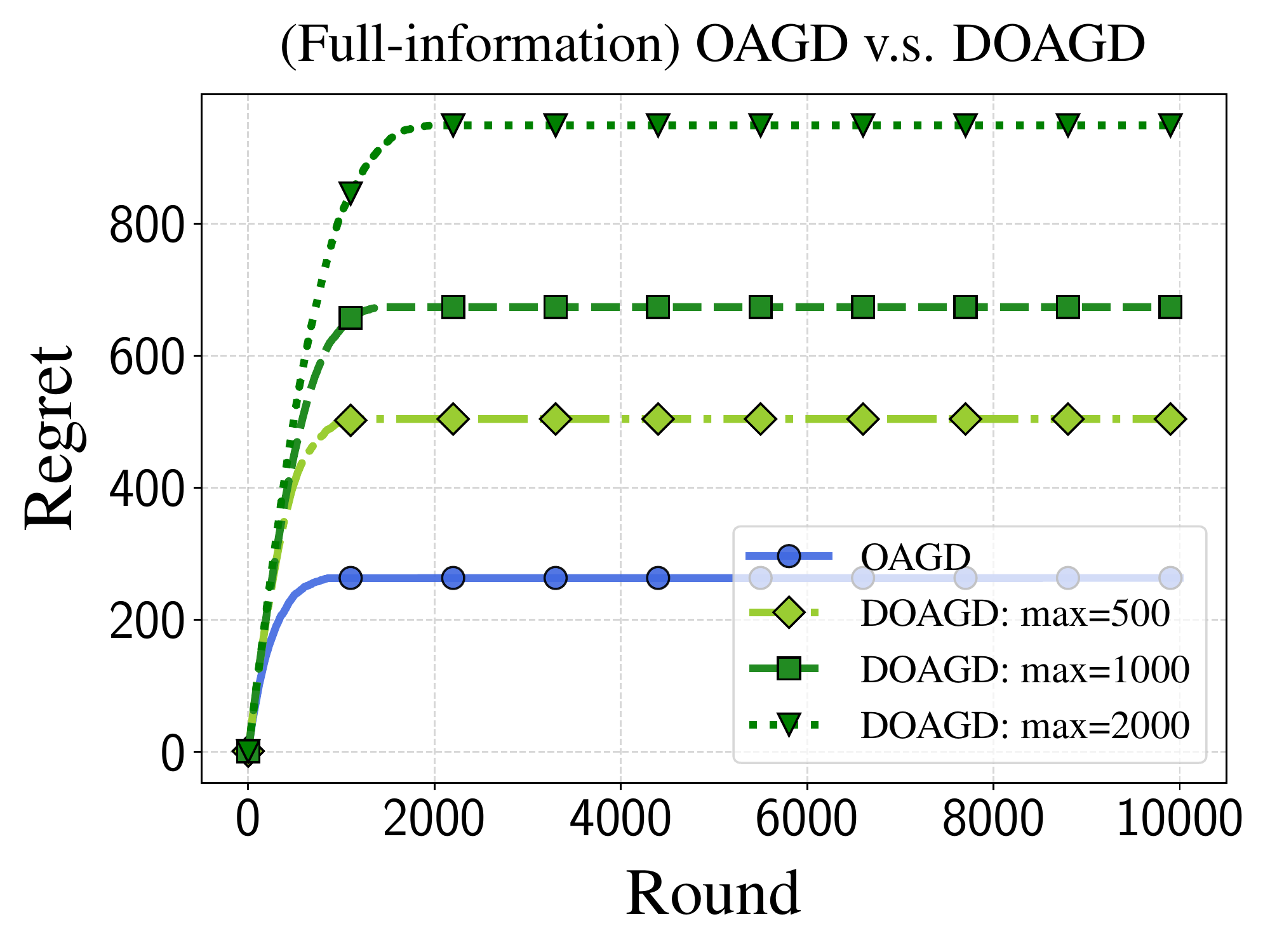}}
\subfigure[Effect of delay under the bandit feedback setting. \label{fig:appendix-1-b}]{\includegraphics[width=.32\textwidth]{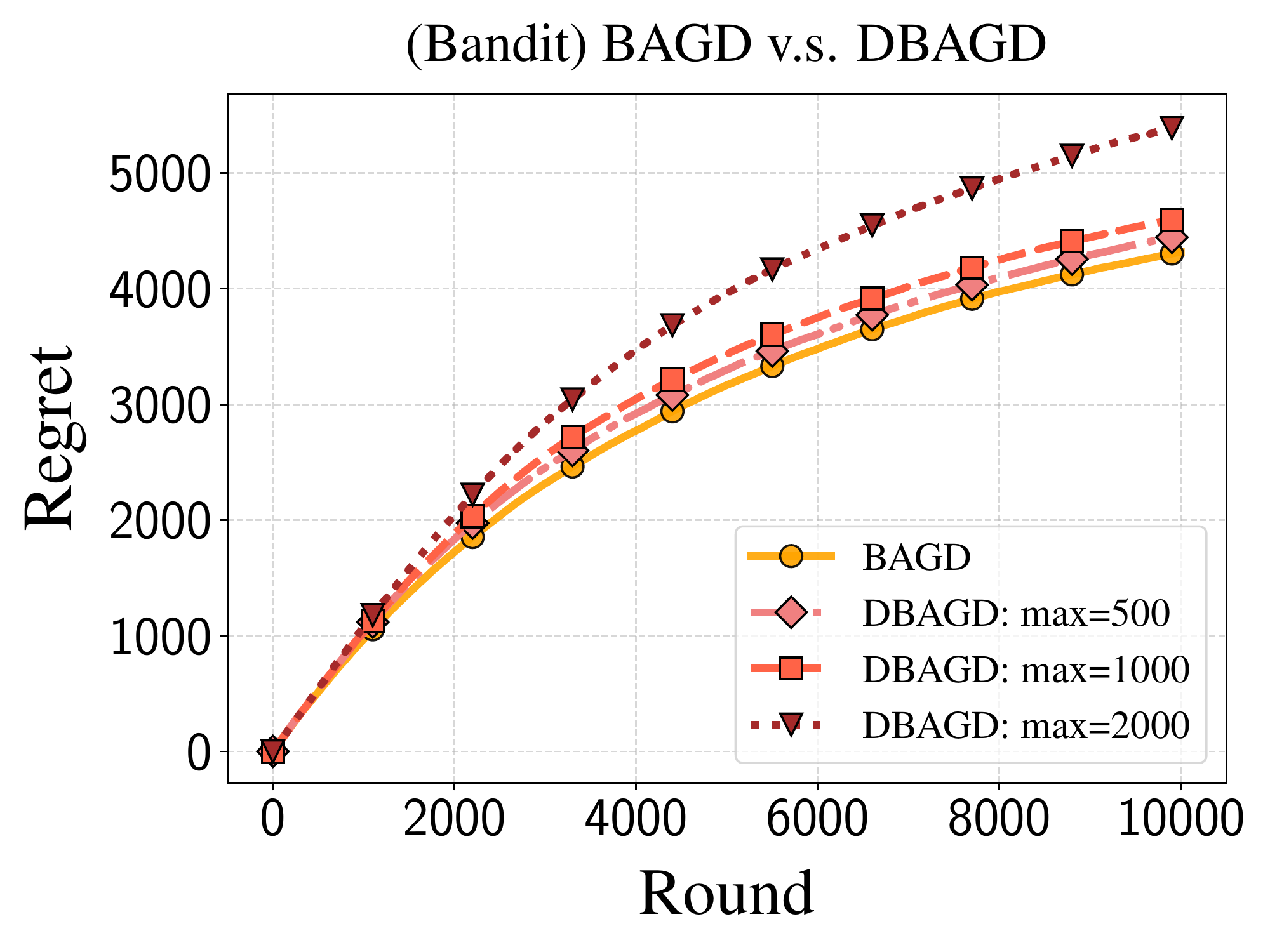}}
\subfigure[Effect of bandit feedback and delay.\label{fig:appendix-1-c}]{\includegraphics[width=.32\textwidth]{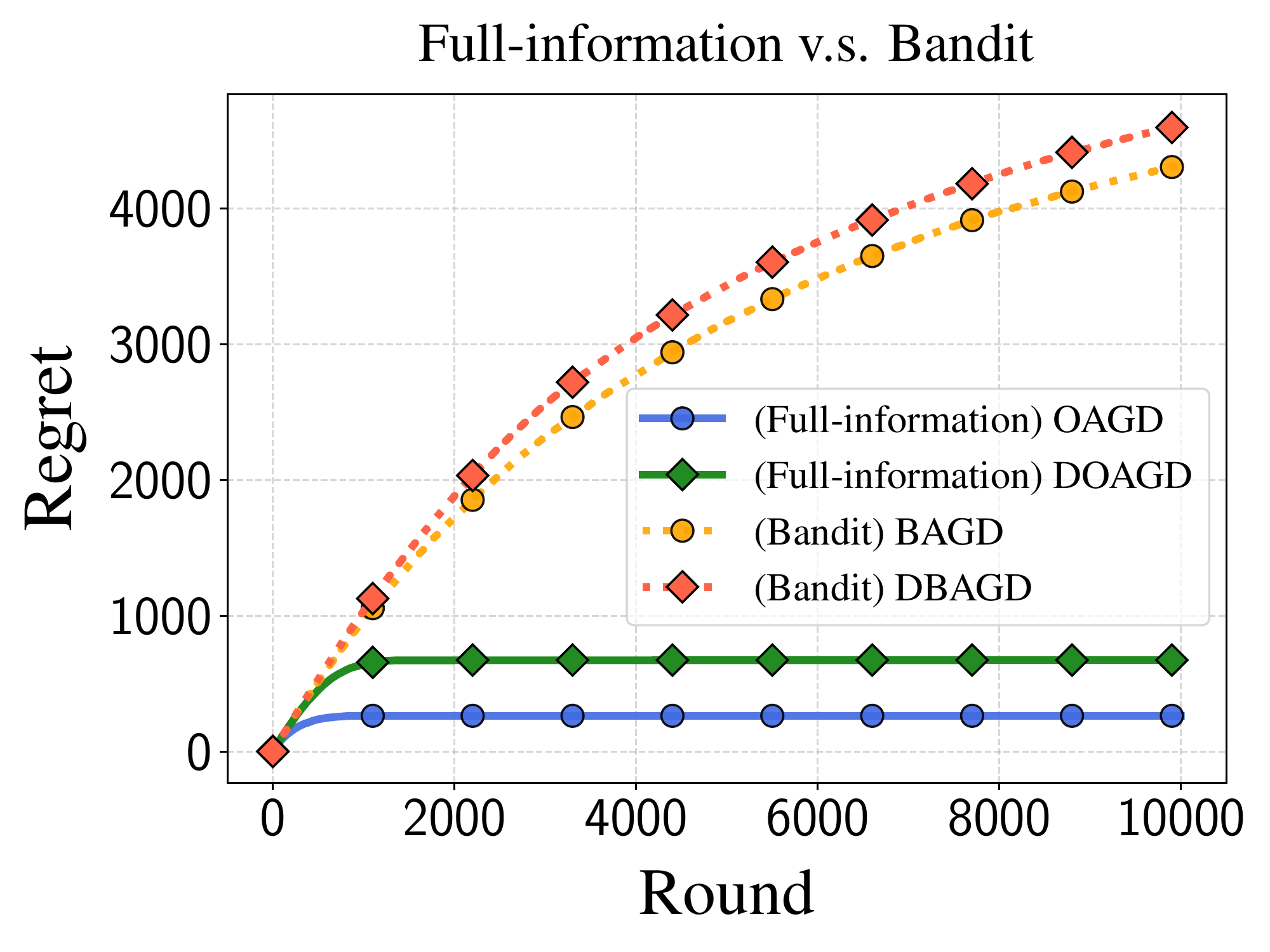}}
\vspace{-0.1in}
\caption{Comparison of our algorithms on sparse learning with delayed costs and step size $\eta=\sqrt{n}/(2L\sqrt{T})$ for OAGD and DOAGD, and $\eta=1/(2LT^{{2}/{3}})$ for BAGD and DBAGD. Note that (a), (b) and (c) follow the same setup as Figure~\ref{fig:main}. }
\label{fig:appendix-1}
\vspace{-0.15in}
\end{figure*}
\begin{figure*}[!t]
\centering
\subfigure[Effect of delay under the full-information setting.\label{fig:appendix-2-a}]{\includegraphics[width=.32\textwidth]{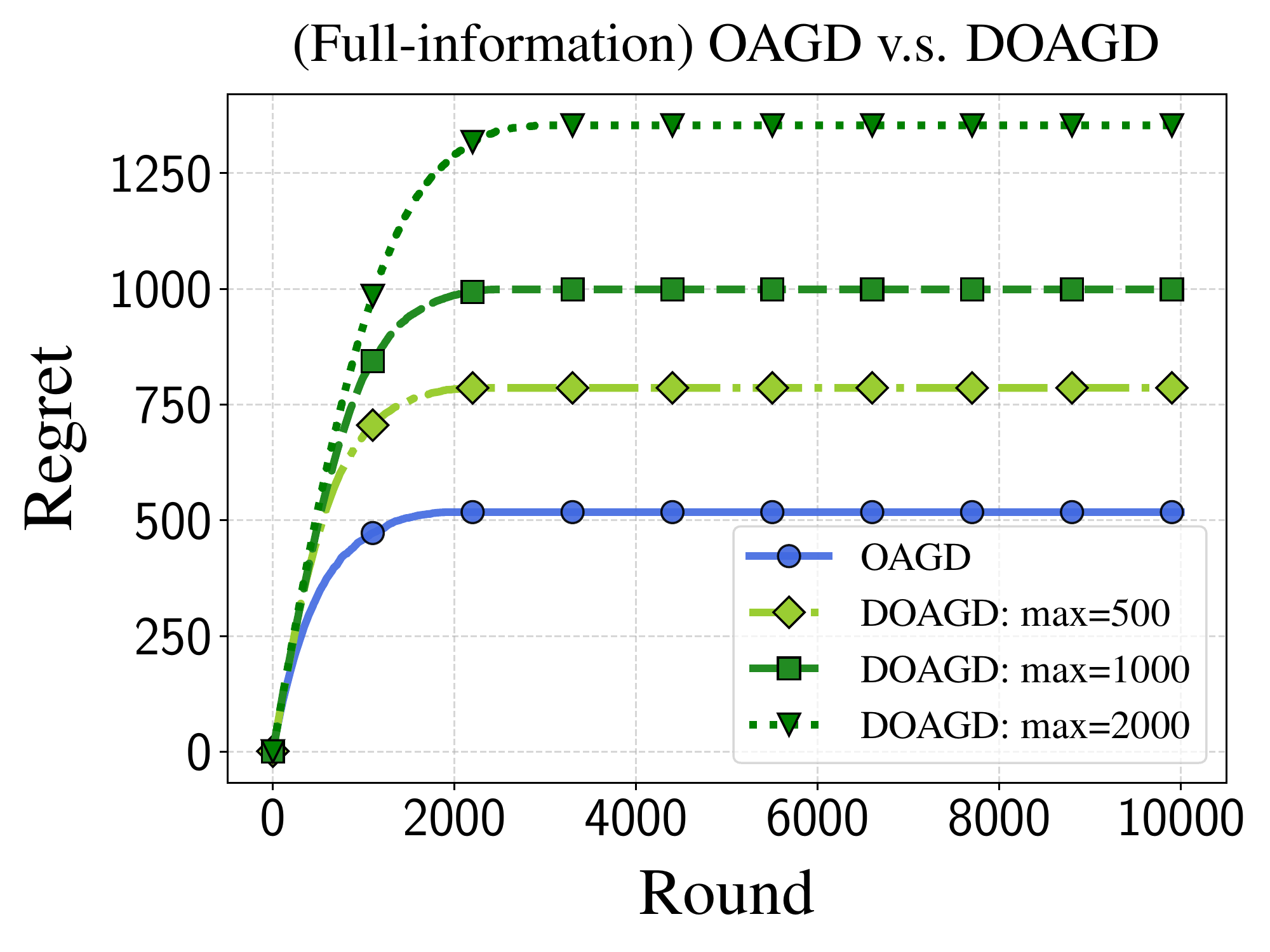}}
\subfigure[Effect of delay under the bandit feedback setting. \label{fig:appendix-2-b}]{\includegraphics[width=.32\textwidth]{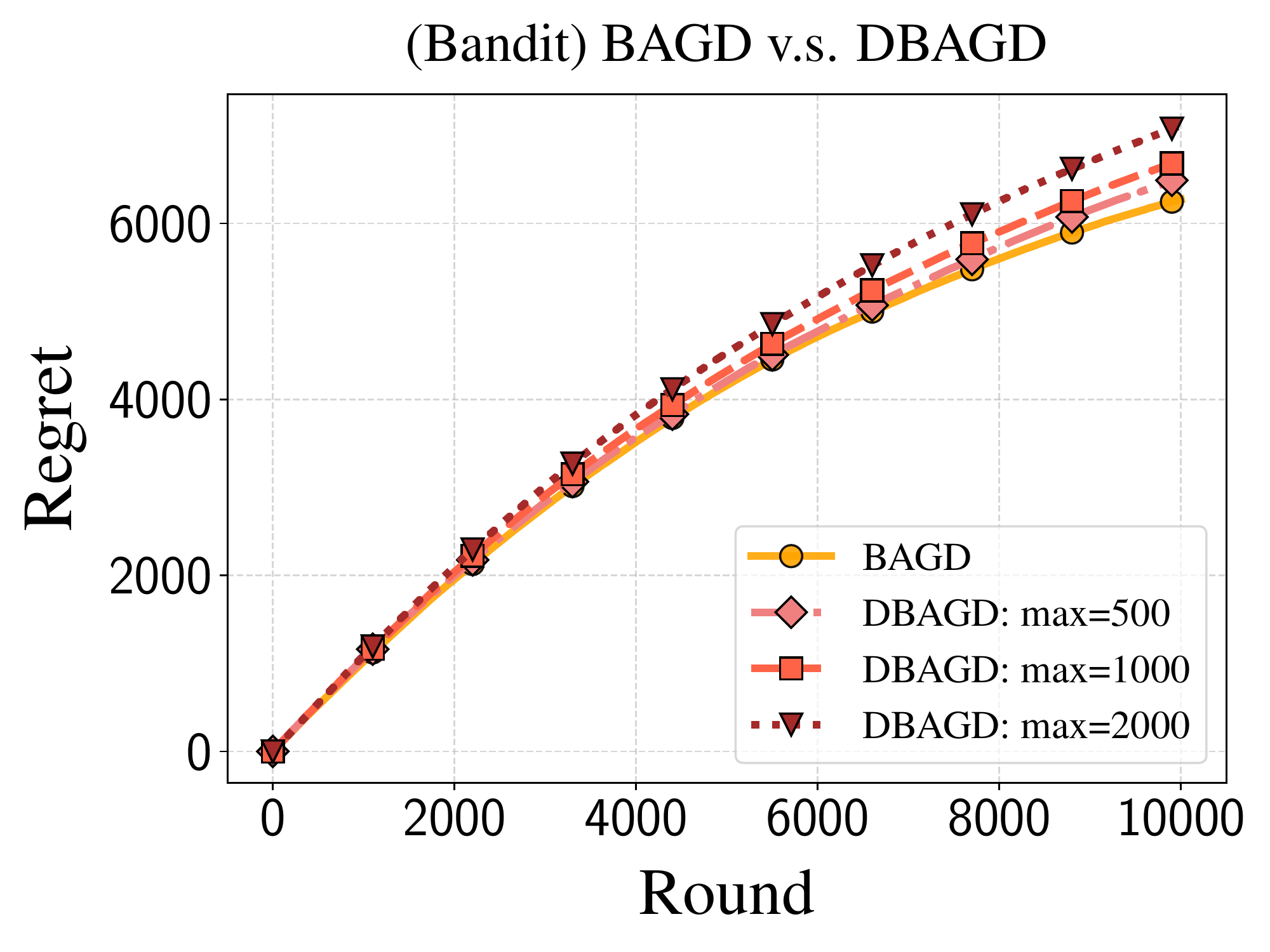}}
\subfigure[Effect of bandit feedback and delay.\label{fig:appendix-2-c}]{\includegraphics[width=.32\textwidth]{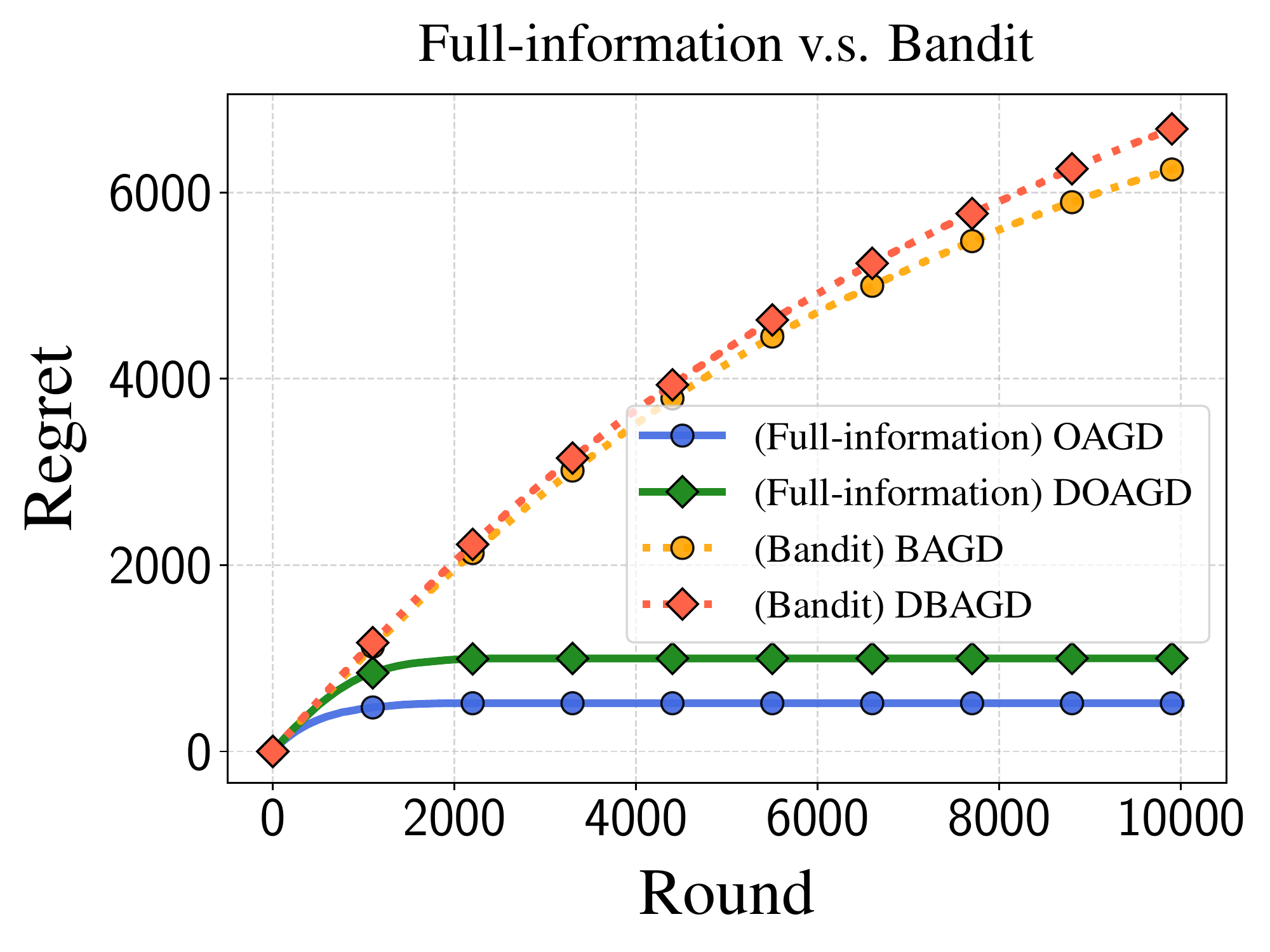}}
\vspace{-0.1in}
\caption{Comparison of our algorithms on sparse learning with delayed costs and step size $\eta=\sqrt{n}/(5L\sqrt{T})$ for OAGD and DOAGD, and $\eta=1/(5LT^{{2}/{3}})$ for BAGD and DBAGD. Note that (a), (b) and (c) follow the same setup as Figure~\ref{fig:main}. }
\label{fig:appendix-2}
\vspace{-0.15in}
\end{figure*}

\section{Concluding Remarks}\label{sec:conclu}
This paper studied online nonsubmodular minimization with special structure through the lens of $(\alpha, \beta)$-regret and the extension of generic convex relaxation model. We proved that online approximate gradient descent algorithm and its bandit variant adapted for the convex relaxation model could achieve the bounds of $O(\sqrt{nT})$ and $O(nT^{\frac{2}{3}})$ in terms of $(\alpha, \beta)$-regret respectively. We also investigated the delayed variants of two algorithms and proved new regret bounds where the delays can even be unbounded. More specifically, if delays satisfy $d_t=o(t^{\gamma})$ with $\gamma < 1$, we showed that our proposed algorithms achieve the regret bound of $O(\sqrt{nT^{1+\gamma}})$ and $O(nT^{\frac{2+\gamma}{3}})$ for full-information setting and bandit setting respectively. Simulation studies validate our theoretical findings in practice.

\section{Acknowledgments}
This work was supported in part by the Mathematical Data Science program of the Office of Naval Research under grant number N00014-18-1-2764 and by the Vannevar Bush Faculty Fellowship program under grant number N00014-21-1-2941. The work of Michael Jordan is also partially supported by NSF Grant IIS-1901252. 
%%%%%%%%%%%%%%%%%%%%%%%%%%%%%%%%%%%%%%%%%%%%%%%%%%%%%%%%%%%%%%%%%%%%%

%%% References
\bibliographystyle{plainnat}
\bibliography{ref}

\appendix
\section{Proof of Proposition~\ref{Prop:structure}}\label{app:structure}
We have
\begin{equation}\label{inequality:structure-main}
f_C(x) = \max_{g \in \br^d, \rho \in \br} \left\{g^\top x + \rho: g(A) + \rho \leq f(A), \forall A \subseteq [n]\right\}. 
\end{equation}
First, we prove Eq.~\eqref{Prop:structure-first} using the definition of $f_L$ and Eq.~\eqref{inequality:structure-main}. Indeed, we have $f_L(x) = g^\top x$ where we let $x \in [0, 1]^n$ with $x_{\pi(1)} \geq \ldots \geq x_{\pi(n)}$ and $g_{\pi(i)} = f(\pi(i) \mid A_{i-1})$ for all $i \in [n]$. Then, it suffices to show that $g^\top x \geq \tilde{g}^\top x + \tilde{\rho}$ in which $\tilde{g}(A) + \tilde{\rho} \leq f(A)$ for all $A \subseteq [n]$. We have
\begin{eqnarray*}
\lefteqn{g^\top x - (\tilde{g}^\top x + \tilde{\rho}) = \sum_{i=1}^n x_{\pi(i)}\left(f(\pi(i) \mid A_{i-1}) - \tilde{g}_{\pi(i)}\right) - \tilde{\rho}} \\
& = & \sum_{i=1}^{n-1} \left(x_{\pi(i)} - x_{\pi(i+1)}\right)\left(f(A_i) - \tilde{g}(A_i)\right) + x_{\pi(n)}\left(f([n]) - \tilde{g}([n])\right) - \tilde{\rho}. 
\end{eqnarray*}
Since $\tilde{g}(A) + \tilde{\rho} \leq f(A)$ for all $A \subseteq [n]$, we have
\begin{equation*}
f([n]) - \tilde{g}([n]) \geq \tilde{\rho},  \quad f(A_i) - \tilde{g}(A_i) \geq \tilde{\rho},  \quad \textnormal{for all } i \in [n]. 
\end{equation*}
Putting these pieces together with $x_{\pi(1)} \geq \ldots \geq x_{\pi(n)}$ yields that 
\begin{equation*}
g^\top x - (\tilde{g}^\top x + \tilde{\rho}) \geq \sum_{i=1}^{n-1} \left(x_{\pi(i)} - x_{\pi(i+1)}\right)\tilde{\rho} + x_{\pi(n)}\tilde{\rho} - \tilde{\rho} = (x_{\pi(1)} - 1)\tilde{\rho}. 
\end{equation*}
Since $x \in [0,1]^n$, we have $x_{\pi(1)} \leq 1$.  Since $\tilde{g}(A) + \tilde{\rho} \leq f(A)$ for all $A \subseteq [n]$ and $f(\emptyset) = 0$, we derive by letting $A = \emptyset$ that $\tilde{\rho} \leq f(\emptyset) - \tilde{g}(\emptyset) \leq 0$.  This implies the desired result. 

Further, we prove Eq.~\eqref{Prop:structure-second} using the definition of weak DR-submodularity.  Indeed, we have $g(A) = \sum_{i \in A} g_i$.  Since $g_{\pi(i)} = f(\pi(i) \mid A_{i-1})$ for all $i \in [n]$, we have
\begin{equation*}
g(A) = \sum_{\pi(i) \in A} f(\pi(i) \mid A_{i-1}) = \sum_{\pi(i) \in A} \left(\bar{f}(\pi(i) \mid A_{i-1}) - \ushort{f}(\pi(i) \mid A_{i-1})\right). 
\end{equation*}
Since $\bar{f}$ is $\alpha$-weakly DR-submodular, $\ushort{f}$ is $\beta$-weakly DR-supermodular and $A \cap A_{i-1} \subseteq A_{i-1}$, we have
\begin{equation}\label{inequality:structure-DR}
\bar{f}(\pi(i) \mid A \cap A_{i-1}) \geq \alpha \bar{f}(\pi(i) \mid A_{i-1}), \quad \ushort{f}(\pi(i) \mid A_{i-1}) \geq \beta\ushort{f}(\pi(i) \mid A \cap A_{i-1}). 
\end{equation}
Putting these pieces together yields that 
\begin{equation*}
g(A) \leq \sum_{\pi(i) \in A} \left(\tfrac{1}{\alpha}\bar{f}(\pi(i) \mid A \cap A_{i-1}) - \beta\ushort{f}(\pi(i) \mid A \cap A_{i-1})\right). 
\end{equation*}
Then, we have
\begin{eqnarray*}
g(A) & \leq & \tfrac{1}{\alpha} \left(\sum_{i=1}^n \left(\bar{f}(A \cap A_i) - \bar{f}(A \cap A_{i-1})\right)\right) - \beta \left(\sum_{i=1}^n \left(\ushort{f}(A \cap A_i) - \ushort{f}(A \cap A_{i-1})\right)\right) \\ 
& = & \tfrac{1}{\alpha}\bar{f}(A) - \beta\ushort{f}(A), \quad \textnormal{for all } A \subseteq [n]. 
\end{eqnarray*}
This implies the desired result. 

Finally, we prove Eq.~\eqref{Prop:structure-third} using Eq.~\eqref{inequality:structure-main}.  Indeed, we have $g = \bar{g} - \ushort{g}$ where $\bar{g}_{\pi(i)} = \bar{f}(\pi(i) \mid A_{i-1})$ and $\ushort{g}_{\pi(i)} = \ushort{f}(\pi(i) \mid A_{i-1})$ for all $i \in [n]$. For any $A \subseteq [n]$, we obtain by using Eq.~\eqref{inequality:structure-DR} that 
\begin{align*}
\bar{g}(A) & \leq \tfrac{1}{\alpha}\left(\sum_{\pi(i) \in A} \bar{f}(\pi(i) \mid A \cap A_{i-1}) \right) = \tfrac{1}{\alpha}\left(\sum_{i=1}^n \left(\bar{f}(A \cap A_i) - \bar{f}(A \cap A_{i-1})\right)\right) = \tfrac{1}{\alpha}\bar{f}(A), \\
-\ushort{g}(A) & \leq -\beta\left(\sum_{\pi(i) \in A} \ushort{f}(\pi(i) \mid A \cap A_{i-1}) \right) = -\beta\left(\sum_{i=1}^n \left(\ushort{f}(A \cap A_i) - \ushort{f}(A \cap A_{i-1})\right)\right) = -\beta\ushort{f}(A). 
\end{align*}
Equivalently, we have $\alpha \bar{g}(A) + 0 \leq \bar{f}(A)$ and $\tfrac{1}{\beta}(-\ushort{g}(A)) + 0 \leq -\ushort{f}(A)$ for any $A \subseteq [n]$. Using Eq.~\eqref{inequality:structure-main}, we have
\begin{eqnarray*}
\alpha \bar{g}^\top z + 0 \leq \bar{f}_C(z), \quad \tfrac{1}{\beta}(-\ushort{g})^\top z + 0 \leq (-\ushort{f})_C(z), \quad \textnormal{for all } z \in [0, 1]^n.  
\end{eqnarray*}
Since $g = \bar{g} - \ushort{g}$ and $\alpha, \beta > 0$, we have $g^\top z \leq \tfrac{1}{\alpha}\bar{f}_C(z) + \beta(-\ushort{f})_C(z)$. This implies the desired result. 
\section{Regret Analysis for Algorithm~\ref{alg:OGD}}\label{app:OGD}
In this section, we present several technical lemmas for analyzing the regret minimization property of Algorithm~\ref{alg:OGD}.  We also give the missing proof of Theorem~\ref{Theorem:OGD}. 

\subsection{Technical lemmas}
We provide two technical lemmas for Algorithm~\ref{alg:OGD}.  The first lemma gives a bound on the vector $g^t$ and the difference between $x^t$ and any fixed $x \in [0, 1]^n$. 
\begin{lemma}\label{Lemma:OGD-first}
Suppose that the iterates $\{x^t\}_{t \geq 1}$ and the vectors $\{g^t\}_{t \geq 1}$ be generated by Algorithm~\ref{alg:OGD} and $x \in [0, 1]^n$ and let $f_t = \bar{f}_t - \ushort{f}_t$ satisfy that $\bar{f}_t([n]) + \ushort{f}_t([n]) \leq L$ for all $t \geq 1$ and both $\bar{f}_t$ and $\ushort{f}_t$ are nondecreasing. Then, we have $\|x^t - x\| \leq \sqrt{n}$ and $\|g^t\| \leq L$ for all $t \geq 1$. 
\end{lemma}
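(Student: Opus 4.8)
The plan is to bound the two quantities separately using only boundedness of the iterates and the value-oracle bound $L$. For the first bound, I would note that both $x^t$ and $x$ lie in the unit hypercube $[0,1]^n$ by construction: $x^1 \in [0,1]^n$ is the initialization, and every subsequent iterate $x^{t+1} = P_{[0,1]^n}(x^t - \eta g^t)$ is the Euclidean projection onto $[0,1]^n$, so $x^t \in [0,1]^n$ for all $t \geq 1$ by an immediate induction. Since any two points of $[0,1]^n$ differ by at most $1$ in each of the $n$ coordinates, we get $\|x^t - x\|^2 = \sum_{i=1}^n (x^t_i - x_i)^2 \leq \sum_{i=1}^n 1 = n$, hence $\|x^t - x\| \leq \sqrt{n}$.

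For the second bound, recall that $g^t$ is the approximate subgradient with entries $g^t_{\pi(i)} = f_t(A_i^t) - f_t(A_{i-1}^t)$ for $i \in [n]$, i.e., the marginal gains $f_t(\pi(i) \mid A_{i-1}^t)$ along the chain $A_0^t \subseteq A_1^t \subseteq \cdots \subseteq A_n^t$ induced by sorting $x^t$. The key observation is to split each marginal gain into its two monotone parts, $f_t(\pi(i) \mid A_{i-1}^t) = \bar{f}_t(\pi(i) \mid A_{i-1}^t) - \ushort{f}_t(\pi(i) \mid A_{i-1}^t)$, and use that $\bar{f}_t$ and $\ushort{f}_t$ are nondecreasing so both marginal gains are nonnegative. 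Then I would control $\|g^t\|$ via $\|g^t\| \leq \|g^t\|_1 = \sum_{i=1}^n |g^t_{\pi(i)}| \leq \sum_{i=1}^n \bar{f}_t(\pi(i)\mid A_{i-1}^t) + \sum_{i=1}^n \ushort{f}_t(\pi(i)\mid A_{i-1}^t)$, and each of these sums telescopes along the chain: $\sum_{i=1}^n \bar{f}_t(\pi(i)\mid A_{i-1}^t) = \bar{f}_t(A_n^t) - \bar{f}_t(A_0^t) = \bar{f}_t([n])$ since $\bar{f}_t(\emptyset) = 0$, and similarly $\sum_{i=1}^n \ushort{f}_t(\pi(i)\mid A_{i-1}^t) = \ushort{f}_t([n])$. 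Combining, $\|g^t\| \leq \bar{f}_t([n]) + \ushort{f}_t([n]) \leq L$.

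The argument is almost entirely routine; the only point requiring a moment's care is the choice of which norm inequality to invoke for $g^t$. Bounding $\|g^t\|$ by $\|g^t\|_1$ is the cleanest route because it makes the telescoping transparent, but one should check that this does not lose a spurious $\sqrt{n}$ factor that would later degrade the regret — it does not, because the nonnegativity of both families of marginal gains means the $\ell_1$ norm equals the sum of the two telescoping sums exactly, with no per-coordinate slack. (Alternatively one could bound $\|g^t\|^2 = \sum_i (g^t_{\pi(i)})^2$ and use that each $|g^t_{\pi(i)}| \le L$ with $\sum_i |g^t_{\pi(i)}| \le L$ to get the same conclusion, but the $\ell_1$ route is shorter.) So the main ``obstacle'' is really just bookkeeping: making sure the normalization $\bar f_t(\emptyset) = \ushort f_t(\emptyset) = 0$ and the monotonicity of each part are both used, since dropping either breaks the telescoping bound.
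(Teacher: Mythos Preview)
Your proposal is correct and follows essentially the same route as the paper: bound $\|x^t-x\|$ via the hypercube diameter, and bound $\|g^t\|$ by passing to the $\ell_1$ norm, splitting each marginal $f_t(A_i^t)-f_t(A_{i-1}^t)$ into the $\bar f_t$ and $\ushort f_t$ parts, using monotonicity to drop absolute values, and telescoping to $\bar f_t([n])+\ushort f_t([n])\le L$. Your write-up is slightly more explicit about why $x^t\in[0,1]^n$ (induction via the projection step), but otherwise the arguments coincide.
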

\begin{proof}
Since $x^t \in [0, 1]^n$ and $x \in [0, 1]^n$ is fixed, we have $\|x^t - x\| \leq \sqrt{\sum_{i=1}^n 1} = \sqrt{n}$ for all $t \geq 1$. By the definition of $g^t$, we have $g_{\pi(i)}^t = f_t(A_i^t) - f_t(A_{i-1}^t)$ for all $i \in [n]$ where $A_i^t = \{\pi(1), \ldots, \pi(i)\}$ for all $i \in [n]$.  Then, we have
\begin{equation*}
\|g^t\| \leq \sum_{i=1}^n |f_t(A_i^t) - f_t(A_{i-1}^t)| \leq \sum_{i=1}^n |\bar{f}_t(A_i^t) - \bar{f}_t(A_{i-1}^t)| + \sum_{i=1}^n |\ushort{f}_t(A_i^t) - \ushort{f}_t(A_{i-1}^t)|. 
\end{equation*}
Since $\bar{f}_t$ and $\ushort{f}_t$ are both normalized and non-decreasing, we have
\begin{equation*}
\sum_{i=1}^n |\bar{f}_t(A_i^t) - \bar{f}_t(A_{i-1}^t)| + \sum_{i=1}^n |\ushort{f}_t(A_i^t) - \ushort{f}_t(A_{i-1}^t)| = \bar{f}_t([n]) + \ushort{f}_t([n]) \leq L. 
\end{equation*}
Putting these pieces together yields that $\|g^t\| \leq L$ for all $t = 1, 2, \ldots, T$. 
\end{proof}
The second lemma gives a key inequality for analyzing Algorithm~\ref{alg:OGD}. 
\begin{lemma}\label{Lemma:OGD-second}
Suppose that the iterates $\{x^t\}_{t \geq 1}$ are generated by Algorithm~\ref{alg:OGD} and $x \in [0, 1]^n$ and let $f_t = \bar{f}_t - \ushort{f}_t$ satisfy that $\bar{f}_t([n]) + \ushort{f}_t([n]) \leq L$ for all $t \geq 1$. Then, we have
\begin{equation*}
\sum_{t=1}^T \EE[(f_t)_L(x^t)] \leq \left(\sum_{t=1}^T \tfrac{1}{\alpha}(\bar{f}_t)_C(x) + \beta(-\ushort{f}_t)_C(x)\right) + \tfrac{n}{2\eta} + \tfrac{\eta L^2 T}{2}. 
\end{equation*}
\end{lemma}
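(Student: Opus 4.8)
\textbf{Proof proposal for Lemma~\ref{Lemma:OGD-second}.}

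The plan is to run the standard online gradient descent potential argument on the iterates $x^t = P_{[0,1]^n}(x^{t-1} - \eta g^{t-1})$, using $\|x-y\|^2$ as the potential, and then convert the resulting bound on $\sum_t \langle g^t, x^t - x\rangle$ into the claimed inequality by applying Proposition~\ref{Prop:structure} twice. First I would recall the one-step inequality for projected gradient descent: since $x^{t+1} = P_{[0,1]^n}(x^t - \eta g^t)$ and $x \in [0,1]^n$, the firm nonexpansiveness of the projection gives $\|x^{t+1} - x\|^2 \leq \|x^t - \eta g^t - x\|^2 = \|x^t - x\|^2 - 2\eta \langle g^t, x^t - x\rangle + \eta^2\|g^t\|^2$. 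Rearranging and summing over $t = 1,\ldots,T$ telescopes the potential terms, yielding
\begin{equation*}
\sum_{t=1}^T \langle g^t, x^t - x\rangle \leq \frac{\|x^1 - x\|^2 - \|x^{T+1}-x\|^2}{2\eta} + \frac{\eta}{2}\sum_{t=1}^T \|g^t\|^2 \leq \frac{n}{2\eta} + \frac{\eta L^2 T}{2},
\end{equation*}
where the last step uses Lemma~\ref{Lemma:OGD-first} to bound $\|x^1 - x\|^2 \leq n$ and $\|g^t\|^2 \leq L^2$, and drops the nonnegative term $\|x^{T+1}-x\|^2$.

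Next I would rewrite $\langle g^t, x^t - x\rangle = \langle g^t, x^t\rangle - \langle g^t, x\rangle$ and handle each piece using Proposition~\ref{Prop:structure} applied to the function $f_t$ (with $\pi$ the sorting permutation of $x^t$, which is exactly how $g^t$ is defined in Algorithm~\ref{alg:OGD}). For the first piece, Eq.~\eqref{Prop:structure-first} gives $\langle g^t, x^t\rangle = (f_t)_L(x^t)$. For the second piece, Eq.~\eqref{Prop:structure-third} gives $\langle g^t, x\rangle \leq \tfrac{1}{\alpha}(\bar f_t)_C(x) + \beta(-\ushort f_t)_C(x)$, hence $-\langle g^t, x\rangle \geq -\tfrac{1}{\alpha}(\bar f_t)_C(x) - \beta(-\ushort f_t)_C(x)$, so
\begin{equation*}
(f_t)_L(x^t) - \Big(\tfrac{1}{\alpha}(\bar f_t)_C(x) + \beta(-\ushort f_t)_C(x)\Big) \leq \langle g^t, x^t - x\rangle.
\end{equation*}
Summing this over $t$ and combining with the telescoping bound above yields the deterministic inequality; taking expectations (the randomness enters only through the sampling of $S^t$, which does not affect $g^t$ or the $x^t$ here, but the statement is phrased with $\EE$ for uniformity with later use) gives exactly the claimed bound $\sum_t \EE[(f_t)_L(x^t)] \leq \sum_t \big(\tfrac{1}{\alpha}(\bar f_t)_C(x) + \beta(-\ushort f_t)_C(x)\big) + \tfrac{n}{2\eta} + \tfrac{\eta L^2 T}{2}$.

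I do not expect a genuine obstacle here: the only subtlety is making sure that the vector $g^t$ used in the descent step is \emph{the same} vector that Proposition~\ref{Prop:structure} speaks about, i.e. that the sorting permutation defining $g^t$ is the one induced by $x^t$ itself — this is guaranteed by lines 3 and 6 of Algorithm~\ref{alg:OGD} — and in applying Eq.~\eqref{Prop:structure-third} with the \emph{fixed} comparison point $x$ rather than $x^t$, which is legitimate since Eq.~\eqref{Prop:structure-third} holds for all $z \in [0,1]^n$. Everything else is the textbook OGD regret computation.
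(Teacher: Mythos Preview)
Your proposal is correct and follows essentially the same approach as the paper: the standard projected-OGD one-step inequality to obtain $\langle g^t, x^t-x\rangle \le \tfrac{1}{2\eta}(\|x-x^t\|^2-\|x-x^{t+1}\|^2)+\tfrac{\eta}{2}\|g^t\|^2$, followed by Proposition~\ref{Prop:structure} (Eqs.~\eqref{Prop:structure-first} and~\eqref{Prop:structure-third}) to lower-bound $\langle g^t, x^t-x\rangle$ by $(f_t)_L(x^t)-\big(\tfrac{1}{\alpha}(\bar f_t)_C(x)+\beta(-\ushort f_t)_C(x)\big)$, then summing and invoking Lemma~\ref{Lemma:OGD-first}. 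The only cosmetic difference is that you invoke nonexpansiveness of the projection directly, whereas the paper derives the same one-step bound via the projection optimality condition plus Young's inequality.
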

\begin{proof}
Since $x^{t+1} = P_{[0, 1]^n}(x^t - \eta g^t)$, we have
\begin{equation*}
(x - x^{t+1})^\top(x^{t+1} - x^t + \eta g^t) \geq 0, \quad \textnormal{for all } x \in [0, 1]^n. 
\end{equation*}
Rearranging the above inequality and using the fact that $\eta > 0$, we have
\begin{equation}\label{inequality:OGD-first}
(x^{t+1} - x)^\top g^t \leq \frac{1}{\eta}(x - x^{t+1})^\top(x^{t+1} - x^t) = \tfrac{1}{2\eta}\left(\|x - x^t\|^2 - \|x - x^{t+1}\|^2 - \|x^t - x^{t+1}\|^2\right). 
\end{equation}
Using Young's inequality, we have
\begin{equation}\label{inequality:OGD-second}
(x^t - x^{t+1})^\top g^t \leq \tfrac{1}{2\eta}\|x^t - x^{t+1}\|^2 + \tfrac{\eta}{2}\|g^t\|^2. 
\end{equation}
Combining Eq.~\eqref{inequality:OGD-first} and Eq.~\eqref{inequality:OGD-second} yields that 
\begin{equation*}
(x^t - x)^\top g^t \leq \tfrac{1}{2\eta}\left(\|x - x^t\|^2 - \|x - x^{t+1}\|^2\right) + \tfrac{\eta}{2}\|g^t\|^2. 
\end{equation*}
Since $f_t = \bar{f}_t - \ushort{f}_t$ where $\bar{f}_t$ and $\ushort{f}_t$ are both non-decreasing, $\bar{f}_t$ is $\alpha$-weakly DR-submodular and $\ushort{f}_t$ is $\beta$-weakly DR-supermodular, Proposition~\ref{Prop:structure} implies that 
\begin{equation*}
(x^t - x)^\top g^t \geq (f_t)_L(x^t) - \left(\tfrac{1}{\alpha}(\bar{f}_t)_C(x) + \beta(-\ushort{f}_t)_C(x)\right). 
\end{equation*}
By Lemma~\ref{Lemma:OGD-first}, we have $\|g^t\| \leq L$ for all $t = 1, 2, \ldots, T$.  Then, we have
\begin{equation*}
(f_t)_L(x^t) \leq \tfrac{1}{\alpha}(\bar{f}_t)_C(x) + \beta(-\ushort{f}_t)_C(x) + \tfrac{1}{2\eta}\left(\|x - x^t\|^2 - \|x - x^{t+1}\|^2\right) + \tfrac{\eta L^2}{2}. 
\end{equation*}
Summing up the above inequality over $t = 1, 2, \ldots, T$ and using $\|x^1 - x\| \leq \sqrt{n}$ (cf. Lemma~\ref{Lemma:OGD-first}), we have 
\begin{equation*}
\sum_{t=1}^T (f_t)_L(x^t) \leq \left(\sum_{t=1}^T \tfrac{1}{\alpha}(\bar{f}_t)_C(x) + \beta(-\ushort{f}_t)_C(x)\right) + \tfrac{n}{2\eta} + \tfrac{\eta L^2 T}{2}. 
\end{equation*}
Taking the expectation of both sides yields the desired inequality. 
\end{proof}

\subsection{Proof of Theorem~\ref{Theorem:OGD}}
By the definition of the Lov\'{a}sz extension, we have
\begin{equation*}
(f_t)_L(x^t) = \sum_{i = 1}^{n-1} (x_{\pi(i)}^t - x_{\pi(i+1)}^t)f_t(A_i^t) + (1 - x_{\pi(1)}^t)f_t(A_0^t) + x_{\pi(n)}^t f_t(A_n^t) = \sum_{i=0}^n \lambda_i^t f_t(A_i^t). 
\end{equation*}
By the update formula, we have $\EE[f_t(S^t) \mid x^t] = (f_t)_L(x^t)$ which implies that $\EE[f_t(S^t)] = \EE[(f_t)_L(x^t)]$. By the definition of the convex closure,  we obtain that the convex closure of a set function $f$ agrees with $f$ on all the integer points~\citep[Page~4, Proposition~3.3]{Dughmi-2009-Submodular}.  Letting $S_\star^T = \argmin_{S \subseteq [n]} \sum_{t=1}^T f_t(S)$, we have $S_\star^T$ is an integer point and 
\begin{equation*}
(\bar{f}_t)_C(\chi_{S_\star^T}) = \bar{f}_t(S_\star^T), \quad (-\ushort{f}_t)_C(\chi_{S_\star^T}) = - \beta\ushort{f}_t(S_\star^T), 
\end{equation*}
which implies that 
\begin{equation*}
\tfrac{1}{\alpha}(\bar{f}_t)_C(\chi_{S_\star^T}) + \beta(-\ushort{f}_t)_C(\chi_{S_\star^T}) = \tfrac{1}{\alpha}\bar{f}_t(S_\star^T) - \beta\ushort{f}_t(S_\star^T). 
\end{equation*}
Putting these pieces together and letting $x = \chi_{S_\star^T}$ in the inequality of Lemma~\ref{Lemma:OGD-second} yields that 
\begin{equation*}
\sum_{t=1}^T \EE[f_t(S^t)] \leq \left(\sum_{t=1}^T \tfrac{1}{\alpha} \bar{f}_t(S_\star^T) - \beta\ushort{f}_t(S_\star^T)\right) + \tfrac{n}{2\eta} + \tfrac{\eta L^2 T}{2}. 
\end{equation*}
Plugging the choice of $\eta = \frac{\sqrt{n}}{L\sqrt{T}}$ into the above inequality yields that $\EE[R_{\alpha, \beta}(T)] = O(\sqrt{nT})$ as desired. 

We proceed to derive a high probability bound using the concentration inequality.  In particular, we review the Hoeffding inequality~\citep{Hoeffding-1963-Probability} and refer to~\citet[Appendix A]{Cesa-2006-Prediction} for a proof. The following proposition is a restatement of~\citet[Corollary~A.1]{Cesa-2006-Prediction}. 
\begin{proposition}\label{Prop:Hoeffding}
Let $X_1, \ldots, X_n$ be independent real-valued random variables such that for each $i = 1, \ldots, n$, there exist some $a_i \leq b_i$ such that $\PP(a_i \leq X_i \leq b_i) = 1$. Then for every $\epsilon > 0$, we have
\begin{eqnarray*}
\PP\left(\sum_{i=1}^n X_i - \EE\left[\sum_{i=1}^n X_i\right] > + \epsilon\right) & \leq & \exp\left(-\frac{2\epsilon^2}{\sum_{i=1}^n (b_i - a_i)^2}\right), \\
\PP\left(\sum_{i=1}^n X_i - \EE\left[\sum_{i=1}^n X_i\right] < -\epsilon\right) & \leq & \exp\left(-\frac{2\epsilon^2}{\sum_{i=1}^n (b_i - a_i)^2}\right). 
\end{eqnarray*}
\end{proposition}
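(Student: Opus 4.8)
The statement is the standard Hoeffding inequality, so the plan is to run the classical Chernoff--Cram\'er exponential-moment argument. It suffices to establish the first (upper-tail) bound, since the second follows by applying the first to $-X_1,\dots,-X_n$, which satisfy $\PP(-b_i \le -X_i \le -a_i)=1$ with unchanged interval lengths $b_i-a_i$. Set $Y_i = X_i - \EE[X_i]$, so $\EE[Y_i]=0$ and $Y_i$ lies almost surely in an interval of length $\ell_i := b_i-a_i$ (if $\ell_i=0$ then $X_i$ is a.s.\ constant and contributes a trivial factor). For any $s>0$, Markov's inequality applied to $\exp(s\sum_i Y_i)$ together with independence gives
\[
\PP\!\left(\sum_{i=1}^n Y_i > \epsilon\right) \;\le\; e^{-s\epsilon}\,\EE\!\left[e^{s\sum_i Y_i}\right] \;=\; e^{-s\epsilon}\prod_{i=1}^n \EE\!\left[e^{sY_i}\right].
\]

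The heart of the proof is Hoeffding's lemma: any mean-zero random variable $Y$ supported in an interval $[a,b]$ satisfies $\EE[e^{sY}] \le \exp\!\big(s^2(b-a)^2/8\big)$. I would prove this by bounding $e^{sy} \le \frac{b-y}{b-a}e^{sa} + \frac{y-a}{b-a}e^{sb}$ for $y \in [a,b]$ using convexity of $y \mapsto e^{sy}$, taking expectations and using $\EE[Y]=0$ to obtain $\EE[e^{sY}] \le \exp(L(u))$ with $u=s(b-a)$, $L(u) = -pu + \log(1-p+pe^u)$ and $p = -a/(b-a) \in [0,1]$. One then checks $L(0)=L'(0)=0$ and $L''(u) = t(1-t) \le \tfrac14$ for $t = pe^u/(1-p+pe^u) \in (0,1)$, so a second-order Taylor expansion yields $L(u) \le u^2/8$, i.e.\ $\EE[e^{sY}] \le \exp(s^2(b-a)^2/8)$.

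Substituting this estimate into each factor of the product gives $\PP(\sum_i Y_i > \epsilon) \le \exp\!\big(-s\epsilon + \tfrac{s^2}{8}\sum_i \ell_i^2\big)$. Minimizing the quadratic exponent over $s>0$, attained at $s = 4\epsilon/\sum_i \ell_i^2$, produces the exponent $-2\epsilon^2/\sum_i \ell_i^2$, which is exactly the claimed bound. The only genuinely delicate step is Hoeffding's lemma, and inside it the uniform estimate $L''(u) \le 1/4$; everything else is routine manipulation of the moment generating function.
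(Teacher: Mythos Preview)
Your argument is the classical Chernoff--Cram\'er proof of Hoeffding's inequality and is correct as written; the reduction of the lower tail to the upper tail, the use of Markov's inequality on the exponential moment, the derivation of Hoeffding's lemma via convexity and the bound $L''(u)\le 1/4$, and the final optimization over $s$ are all standard and sound.

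The paper, however, does not prove this proposition at all: it simply states it as a restatement of \citet[Corollary~A.1]{Cesa-2006-Prediction} and refers the reader to that appendix for a proof. So there is no ``paper's own proof'' to compare against beyond the citation. Your write-up is precisely the argument one finds in that reference, so in substance you have supplied what the paper outsourced.
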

Since the sequence of points $x^1, x^2, \ldots, x^T$ is obtained by several deterministic gradient descent steps, we have this sequence is purely deterministic. Since each of $S^t$ is obtained by independent randomized rounding on the point $x^t$, we have the sequence of random variables $X_t = f_t(S^t)$ is independent.  By definition of $f_t$, we have
\begin{equation*}
|X_t| = |\bar{f}_t(S^t) - \ushort{f}_t(S^t)| \leq \bar{f}_t(S^t) + \ushort{f}_t(S^t).  
\end{equation*}
Since $\bar{f}_t$ and $\ushort{f}_t$ are non-decreasing and $\bar{f}_t([n]) + \ushort{f}_t([n]) \leq L$ for all $t \geq 1$, we have $\PP(-L \leq X_i \leq L) = 1$ for all $t \geq 1$. Then, by Proposition~\ref{Prop:Hoeffding}, we have
\begin{equation*}
\PP\left(\sum_{i=1}^n f_t(S^t) - \EE\left[\sum_{i=1}^n f_t(S^t)\right] > \epsilon\right) \leq \exp\left(-\frac{\epsilon^2}{2nL^2}\right). 
\end{equation*}
Equivalently, we have $\sum_{i=1}^n f_t(S^t) - \EE[\sum_{i=1}^n f_t(S^t)] \leq L\sqrt{2T\log(1/\delta)}$ with probability at least $1 - \delta$.  This together with $\EE[R_{\alpha, \beta}(T)] = O(\sqrt{nT})$ yields that $R_{\alpha, \beta}(T) = O(\sqrt{nT} + \sqrt{T\log(1/\delta)})$ with probability at least $1 - \delta$ as desired. 
\section{Regret Analysis for Algorithm~\ref{alg:BGD}}\label{app:BGD}
In this section, we present several technical lemmas for analyzing the regret minimization property of Algorithm~\ref{alg:BGD}.  We also give the missing proofs of Theorem~\ref{Theorem:BGD}. 

\subsection{Technical lemmas}
We provide several technical lemmas for Algorithm~\ref{alg:BGD}.  The first lemma is analogous to Lemma~\ref{Lemma:OGD-first} and gives a bound on the vector $\hat{g}^t$ (in expectation) and the difference between $x^t$ and any fixed $x \in [0, 1]^n$. 
\begin{lemma}\label{Lemma:BGD-first}
Suppose that the iterates $\{x^t\}_{t \geq 1}$ and the vectors $\{\hat{g}^t\}_{t \geq 1}$ be generated by Algorithm~\ref{alg:BGD} and $x \in [0, 1]^n$ and let $f_t = \bar{f}_t - \ushort{f}_t$ satisfy that $\bar{f}_t([n]) + \ushort{f}_t([n]) \leq L$ for all $t \geq 1$ and both $\bar{f}_t$ and $\ushort{f}_t$ are nondecreasing. Then, we have $\|x^t - x\| \leq \sqrt{n}$ for all $t \geq 1$ and 
\begin{equation*}
\EE[\hat{g}^t \mid x^t] = g^t, \quad \EE[\|\hat{g}^t\|^2 \mid x^t] \leq \tfrac{8n^2L^2}{\mu}, \quad \|\hat{g}^t\|^2 \leq \tfrac{2(n+1)^2L^2}{\mu^2}. 
\end{equation*}
where we have $g_{\pi(i)}^t = f_t(A_i^t) - f_t(A_{i-1}^t)$ for all $i \in [n]$.  
\end{lemma}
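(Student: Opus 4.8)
\textbf{Proof proposal for Lemma~\ref{Lemma:BGD-first}.}

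The plan is to handle the three claimed properties of $\hat{g}^t$ separately, after disposing of the trivial bound $\|x^t - x\| \leq \sqrt{n}$, which follows exactly as in Lemma~\ref{Lemma:OGD-first} since $x^t, x \in [0,1]^n$. For the unbiasedness claim, I would condition on $x^t$ (hence on the sets $A_i^t$ and the sampling probabilities $p_i^t := (1-\mu)\lambda_i^t + \mu/(n+1)$) and compute $\EE[\hat{f}_i^t \mid x^t] = \sum_{j=0}^n p_j^t \cdot \frac{\mathbf{1}(A_j^t = A_i^t)}{p_i^t} f_t(A_i^t) = f_t(A_i^t)$, using that the sets $A_0^t, \ldots, A_n^t$ are distinct. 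By linearity, $\EE[\hat{g}_{\pi(i)}^t \mid x^t] = \EE[\hat{f}_i^t - \hat{f}_{i-1}^t \mid x^t] = f_t(A_i^t) - f_t(A_{i-1}^t) = g_{\pi(i)}^t$, giving $\EE[\hat{g}^t \mid x^t] = g^t$.

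For the two second-moment bounds, the key observation is that $p_i^t \geq \mu/(n+1)$ for every $i$, so $\hat{f}_i^t$ is nonzero only when $S^t = A_i^t$, in which case $|\hat{f}_i^t| = |f_t(A_i^t)|/p_i^t \leq (n+1)|f_t(A_i^t)|/\mu$. Since $\bar{f}_t, \ushort{f}_t$ are nondecreasing and normalized with $\bar{f}_t([n]) + \ushort{f}_t([n]) \leq L$, every value $|f_t(A_i^t)| \leq L$, hence $|\hat{f}_i^t| \leq (n+1)L/\mu$ deterministically. This yields the pointwise bound $\|\hat{g}^t\|^2 = \sum_{i=1}^n (\hat{f}_i^t - \hat{f}_{i-1}^t)^2 \leq \sum_{i=1}^n 2((\hat{f}_i^t)^2 + (\hat{f}_{i-1}^t)^2)$, and here I would use that at most one of the $\hat{f}_i^t$ is nonzero (exactly one index matches $S^t$): the sum $\sum_{i=0}^n (\hat{f}_i^t)^2$ has a single nonzero term, bounded by $(n+1)^2L^2/\mu^2$, and the telescoped differences contribute that term at most twice, giving $\|\hat{g}^t\|^2 \leq 2(n+1)^2L^2/\mu^2$.

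For the expected squared norm, I would again use that only the index $i^\star$ with $A_{i^\star}^t = S^t$ contributes. Writing $\|\hat{g}^t\|^2 = \sum_{i=1}^n (\hat{f}_i^t - \hat{f}_{i-1}^t)^2$ and noting that only the two terms $i = i^\star$ and $i = i^\star + 1$ can be nonzero, each equal to $(f_t(S^t)/p_{i^\star}^t)^2 \leq L^2/(p_{i^\star}^t)^2$, I get $\|\hat{g}^t\|^2 \leq 2L^2/(p_{i^\star}^t)^2$ on the event $S^t = A_{i^\star}^t$. Taking the conditional expectation, $\EE[\|\hat{g}^t\|^2 \mid x^t] \leq \sum_{i=0}^n p_i^t \cdot 2L^2/(p_i^t)^2 = 2L^2 \sum_{i=0}^n 1/p_i^t \leq 2L^2 (n+1) \cdot (n+1)/\mu = 2(n+1)^2L^2/\mu$, which is at most $8n^2L^2/\mu$ for $n \geq 1$. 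The main obstacle — really the only subtle point — is bookkeeping the fact that the estimator $\hat{g}^t$ has at most two nonzero coordinates with the same magnitude, so that the naive bound $\|\hat{g}^t\|^2 \leq \sum_i 2((\hat{f}_i^t)^2 + (\hat{f}_{i-1}^t)^2)$ does not lose a factor of $n$; being careful that the single realized index controls the whole sum is what yields the stated $O(n^2/\mu)$ rather than $O(n^3/\mu)$.
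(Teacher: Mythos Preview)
Your proposal is correct and follows essentially the same approach as the paper's proof: both dispose of $\|x^t-x\|\leq\sqrt{n}$ by the hypercube bound, verify unbiasedness via $\EE[\hat f_i^t\mid x^t]=f_t(A_i^t)$, and obtain the two second-moment bounds by exploiting that exactly one $\hat f_i^t$ is nonzero so that $\|\hat g^t\|^2\leq 2(f_t(A_{i^\star}^t)/p_{i^\star}^t)^2$, then averaging against $p_{i^\star}^t$ and using $p_i^t\geq\mu/(n+1)$ and $|f_t(A_i^t)|\leq L$. Your write-up is in fact more explicit than the paper's about the ``at most two nonzero coordinates'' bookkeeping that prevents an extra factor of $n$.
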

\begin{proof}
Using the same argument as in Lemma~\ref{Lemma:OGD-first}, we have $\|x^t - x\| \leq \sqrt{n}$ for all $t \geq 1$. By the definition of $\hat{g}^t$, we have 
\begin{equation*}
\hat{g}^t_{\pi(i)} = \left(\tfrac{\one(S^t = A^t_i)}{(1-\mu)\lambda_i^t + \frac{\mu}{n+1}} - \tfrac{\one(S^t = A^t_{i-1})}{(1-\mu)\lambda_{i-1}^t + \frac{\mu}{n+1}}\right)f_t(S^t), \quad \textnormal{for all } i \in [n]. 
\end{equation*}
This together with the sampling scheme for $S^t$ implies that 
\begin{equation*}
\EE[\hat{g}^t_{\pi(i)} \mid x^t] = f_t(A_i^t) - f_t(A_{i-1}^t), \quad \textnormal{for all } i \in [n], 
\end{equation*}
Since $g_{\pi(i)}^t = f_t(A_i^t) - f_t(A_{i-1}^t)$ for all $i \in [n]$, we have $\EE[\hat{g}^t \mid x^t] = g^t$. Since $f_t = \bar{f}_t - \ushort{f}_t$ satisfy that $\bar{f}_t([n]) + \ushort{f}_t([n]) \leq L$ for all $t \geq 1$ and $\bar{f}_t$ and $\ushort{f}_t$ are both normalized and non-decreasing, we have
\begin{equation*}
\EE[\|\hat{g}^t\|^2 \mid x^t] \leq \sum_{i=0}^n \tfrac{2(f_t(A_i^t))^2}{(1-\mu)\lambda_i^t + \frac{\mu}{n+1}} \leq \tfrac{2(n+1)^2L^2}{\mu} \leq \tfrac{8n^2L^2}{\mu}. 
\end{equation*}
Further,  let $S^t = A_{i_t}$ in the round $t$, we can apply the same argument and obtain that 
\begin{equation*}
\|\hat{g}^t\|^2 \leq 2\left(\tfrac{f_t(A_{i_t}^t)}{(1-\mu)\lambda_{i_t}^t + \frac{\mu}{n+1}}\right)^2 \leq \tfrac{2(n+1)^2L^2}{\mu^2}. 
\end{equation*}
This completes the proof. 
\end{proof}
The second lemma is analogous to Lemma~\ref{Lemma:OGD-second} and gives a key inequality for analyzing Algorithm~\ref{alg:BGD}. 
\begin{lemma}\label{Lemma:BGD-second}
Suppose that the iterates $\{x^t\}_{t \geq 1}$ are generated by Algorithm~\ref{alg:BGD} and $x \in [0, 1]^n$ and let $f_t = \bar{f}_t - \ushort{f}_t$ satisfy that $\bar{f}_t([n]) + \ushort{f}_t([n]) \leq L$ for all $t \geq 1$. Then, we have
\begin{equation*}
\sum_{t=1}^T \EE[(f_t)_L(x^t)] \leq \left(\sum_{t=1}^T \tfrac{1}{\alpha}(\bar{f}_t)_C(x) + \beta(-\ushort{f}_t)_C(x)\right) + \tfrac{n}{2\eta} + \tfrac{4n^2L^2\eta T}{\mu}. 
\end{equation*}
\end{lemma}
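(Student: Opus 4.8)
The plan is to mirror the proof of Lemma~\ref{Lemma:OGD-second}, replacing the exact gradient $g^t$ by its unbiased estimator $\hat{g}^t$ and paying for the increased second moment. First I would invoke the nonexpansiveness of the Euclidean projection onto $[0,1]^n$: since $x^{t+1} = P_{[0,1]^n}(x^t - \eta\hat{g}^t)$, for every fixed $x \in [0,1]^n$ the first-order optimality condition gives $(x - x^{t+1})^\top(x^{t+1} - x^t + \eta\hat{g}^t) \geq 0$, hence
\begin{equation*}
(x^t - x)^\top\hat{g}^t \leq \tfrac{1}{2\eta}\left(\|x - x^t\|^2 - \|x - x^{t+1}\|^2\right) + \tfrac{\eta}{2}\|\hat{g}^t\|^2,
\end{equation*}
exactly as in Eq.~\eqref{inequality:OGD-first}–\eqref{inequality:OGD-second} but with $\hat{g}^t$ in place of $g^t$.

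Next I would take the conditional expectation given $x^t$. Since $x^t$ is determined by the randomness up to round $t-1$ and the rounding at round $t$ is fresh, Lemma~\ref{Lemma:BGD-first} gives $\EE[\hat{g}^t \mid x^t] = g^t$ and $\EE[\|\hat{g}^t\|^2 \mid x^t] \leq 8n^2L^2/\mu$. Therefore
\begin{equation*}
(x^t - x)^\top g^t = \EE[(x^t - x)^\top\hat{g}^t \mid x^t] \leq \tfrac{1}{2\eta}\left(\|x - x^t\|^2 - \EE[\|x - x^{t+1}\|^2 \mid x^t]\right) + \tfrac{4n^2L^2\eta}{\mu}.
\end{equation*}
Now I apply Proposition~\ref{Prop:structure} to the deterministic vector $g^t$ (whose entries are $g_{\pi(i)}^t = f_t(A_i^t) - f_t(A_{i-1}^t)$): by Eq.~\eqref{Prop:structure-first} and Eq.~\eqref{Prop:structure-third} we have $(x^t - x)^\top g^t \geq (f_t)_L(x^t) - \big(\tfrac{1}{\alpha}(\bar{f}_t)_C(x) + \beta(-\ushort{f}_t)_C(x)\big)$. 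Combining and taking total expectation yields
\begin{equation*}
\EE[(f_t)_L(x^t)] \leq \tfrac{1}{\alpha}(\bar{f}_t)_C(x) + \beta(-\ushort{f}_t)_C(x) + \tfrac{1}{2\eta}\left(\EE\|x - x^t\|^2 - \EE\|x - x^{t+1}\|^2\right) + \tfrac{4n^2L^2\eta}{\mu}.
\end{equation*}

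Finally I would sum over $t = 1,\dots,T$; the difference terms telescope, and dropping the nonpositive $-\tfrac{1}{2\eta}\EE\|x - x^{T+1}\|^2$ term while bounding $\EE\|x - x^1\|^2 \leq n$ via Lemma~\ref{Lemma:BGD-first} gives the stated bound $\sum_{t=1}^T \EE[(f_t)_L(x^t)] \leq \big(\sum_{t=1}^T \tfrac{1}{\alpha}(\bar{f}_t)_C(x) + \beta(-\ushort{f}_t)_C(x)\big) + \tfrac{n}{2\eta} + \tfrac{4n^2L^2\eta T}{\mu}$. The only genuinely new point compared to Lemma~\ref{Lemma:OGD-second} is handling the tower of conditioning correctly — Proposition~\ref{Prop:structure} must be applied to $g^t$ (not $\hat{g}^t$), which is legitimate precisely because we pass to the conditional expectation first, so $\EE[\hat{g}^t\mid x^t] = g^t$ still has the sorted-marginal-gain structure the proposition requires; the variance substitution $\|g^t\|^2 \leq L^2 \rightsquigarrow \EE[\|\hat{g}^t\|^2\mid x^t] \leq 8n^2L^2/\mu$ then accounts for the change from $\tfrac{\eta L^2 T}{2}$ to $\tfrac{4n^2L^2\eta T}{\mu}$. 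I expect the main (minor) obstacle to be bookkeeping the filtration so that each expectation is taken in the right order; there is no analytic difficulty beyond that.
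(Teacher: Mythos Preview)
Your proposal is correct and follows essentially the same route as the paper's proof: start from the projected-gradient one-step inequality with $\hat{g}^t$, condition on $x^t$ to replace $\hat{g}^t$ by $g^t$ and $\|\hat{g}^t\|^2$ by its bound $8n^2L^2/\mu$ via Lemma~\ref{Lemma:BGD-first}, invoke Proposition~\ref{Prop:structure} on $g^t$, then take total expectation, sum, and telescope using $\|x-x^1\|^2\le n$. Your remark that Proposition~\ref{Prop:structure} must be applied to $g^t$ rather than $\hat{g}^t$ is exactly the point, and the paper handles the conditioning in the same order.
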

\begin{proof}
Using the same argument as in Lemma~\ref{Lemma:OGD-second}, we have
\begin{equation*}
(x^t - x)^\top \hat{g}^t \leq \tfrac{1}{2\eta}\left(\|x - x^t\|^2 - \|x - x^{t+1}\|^2\right) + \tfrac{\eta}{2}\|\hat{g}^t\|^2. 
\end{equation*}
By Lemma~\ref{Lemma:BGD-first}, we have $\EE[\hat{g}^t \mid x^t] = g^t$ and $\EE[\|\hat{g}^t\|^2 \mid x^t] \leq \tfrac{8n^2L^2}{\mu}$ for all $t \geq 1$. This implies that 
\begin{equation*}
(x^t - x)^\top g^t \leq \tfrac{1}{2\eta}\left(\|x - x^t\|^2 - \EE[\|x - x^{t+1}\|^2 \mid x^t]\right) + \tfrac{4n^2 L^2\eta}{\mu}. 
\end{equation*}
Since $f_t = \bar{f}_t - \ushort{f}_t$ where $\bar{f}_t$ and $\ushort{f}_t$ are both non-decreasing, $\bar{f}_t$ is $\alpha$-weakly DR-submodular and $\ushort{f}_t$ is $\beta$-weakly DR-supermodular, Proposition~\ref{Prop:structure} implies that 
\begin{equation*}
(x^t - x)^\top g^t \geq (f_t)_L(x^t) - \left(\tfrac{1}{\alpha}(\bar{f}_t)_C(x) + \beta(-\ushort{f}_t)_C(x)\right). 
\end{equation*}
By Lemma~\ref{Lemma:OGD-first}, we have $\|g^t\| \leq L$ for all $t = 1, 2, \ldots, T$.  Then, we have
\begin{equation*}
(f_t)_L(x^t) \leq \tfrac{1}{\alpha}(\bar{f}_t)_C(x) + \beta(-\ushort{f}_t)_C(x) + \tfrac{1}{2\eta}\left(\|x - x^t\|^2 - \EE[\|x - x^{t+1}\|^2 \mid x^t]\right) + \tfrac{4n^2L^2\eta}{\mu}. 
\end{equation*}
Taking the expectation of both sides and summing up the resulting inequality over $t = 1, 2, \ldots, T$, we have
\begin{equation*}
\sum_{t=1}^T \EE[(f_t)_L(x^t)] \leq \left(\sum_{t=1}^T \tfrac{1}{\alpha}(\bar{f}_t)_C(x) + \beta(-\ushort{f}_t)_C(x)\right) + \tfrac{1}{2\eta}\|x - x^1\|^2 + \tfrac{4n^2L^2\eta T}{\mu}. 
\end{equation*}
Using $\|x^1 - x\| \leq \sqrt{n}$ (cf. Lemma~\ref{Lemma:BGD-first}) yields the desired inequality. 
\end{proof}
To prove the high probability bound, we require the following concentration inequality. In particular, we review the Bernstein inequality for martingales~\citep{Freedman-1975-Tail} and refer to~\citet[Appendix A]{Cesa-2006-Prediction} for a proof. The following proposition is a consequence of~\citet[Lemma~A.8]{Cesa-2006-Prediction}. 
\begin{proposition}\label{Prop:Bernstein}
Let $X_1, \ldots, X_n$ be a bounded martingale difference sequence with respect to the filtration $\FCal = (\FCal_i)_{1 \leq i \leq n}$ such that $|X_i| \leq K$ for each $i = 1, \ldots, n$. We also assume that $\EE[\|X_{i+1}\|^2 \mid \FCal_i] \leq V$ for each $i = 1, \ldots, n-1$. Then for every $\delta > 0$, we have
\begin{equation*}
\PP\left(\left|\sum_{i=1}^n X_i - \EE[X_i \mid \FCal_{i-1}]\right| > \sqrt{2TV\log(1/\delta)} + \tfrac{\sqrt{2}}{3}K\log(1/\delta)\right) \leq \delta. 
\end{equation*}
\end{proposition}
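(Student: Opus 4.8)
The plan is to obtain this as a direct corollary of the Bernstein-type tail inequality for martingales (Freedman's inequality) in the form recorded as Lemma~A.8 of~\citet{Cesa-2006-Prediction}, rather than reproving it from scratch. Since $(X_i)$ is a martingale difference sequence with respect to $(\FCal_i)$ we have $\EE[X_i\mid\FCal_{i-1}]=0$, so the centered sum appearing in the statement is just $\sum_{i=1}^n X_i$; if one instead wishes to allow a non-centered adapted sequence, one replaces $X_i$ by $X_i-\EE[X_i\mid\FCal_{i-1}]$ throughout, which is again a martingale difference sequence with increments bounded by $2K$ and the same conditional-second-moment bound. The first step is then to observe that, under the hypotheses, the \emph{random} quadratic variation satisfies $\sum_{i=1}^n\EE[X_i^2\mid\FCal_{i-1}]\le nV$ deterministically (the boundary index contributing at most $K^2$); since the event $\{\sum_i\EE[X_i^2\mid\FCal_{i-1}]\le nV\}$ thus has probability one and the cited inequality controls $\PP(\sum_i X_i\ge a \text{ and } \sum_i\EE[X_i^2\mid\FCal_{i-1}]\le v)$, substituting $v=nV$ (and reading the symbol $T$ in the statement as the sequence length $n$, which equals $T$ in our application) yields $\PP\bigl(\sum_{i=1}^n X_i>\epsilon\bigr)\le\exp\bigl(-\epsilon^2/(2nV+\tfrac{2}{3}K\epsilon)\bigr)$ for every $\epsilon>0$.

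The second step is the elementary inversion of that tail bound. Setting the right-hand side equal to $\delta$ leads to the quadratic $\epsilon^2-\tfrac{2}{3}K\log(1/\delta)\,\epsilon-2nV\log(1/\delta)=0$, whose positive root is at most $\sqrt{2nV\log(1/\delta)}+\tfrac{2}{3}K\log(1/\delta)$ after applying $\sqrt{a+b}\le\sqrt a+\sqrt b$ to separate the two contributions. Applying the same estimate to $(-X_i)$ and taking a union bound upgrades this one-sided statement to the two-sided bound on $\bigl|\sum_{i=1}^n X_i\bigr|$ at the cost of replacing $\delta$ by $\delta/2$, and the resulting additive $\log 2$ inside the logarithm is absorbed into the numerical constants, recovering the displayed form with its constant $\tfrac{\sqrt2}{3}$.

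I do not anticipate a genuine obstacle here: once the reduction to a martingale difference sequence and the deterministic bound on the quadratic variation are in place, what remains is a routine one-variable calculation, and the only care required is bookkeeping the numerical constants and the one-sided-to-two-sided passage. If a self-contained argument is preferred to a citation, the standard route applies: verify that $\exp\bigl(\lambda\sum_{j\le i}X_j-\psi(\lambda)\sum_{j\le i}\EE[X_j^2\mid\FCal_{j-1}]\bigr)$ with $\psi(\lambda)=(e^{\lambda K}-1-\lambda K)/K^2$ is a supermartingale (using $\EE[e^{\lambda X}\mid\FCal]\le e^{\psi(\lambda)\EE[X^2\mid\FCal]}$ for zero-mean $|X|\le K$), apply Markov's inequality at time $n$ together with the bound $nV$ on the quadratic variation, and optimize over $\lambda>0$; this reproduces the same tail and hence the same conclusion.
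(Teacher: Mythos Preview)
Your proposal is correct and takes essentially the same approach as the paper: the paper does not give an independent proof of this proposition at all, but simply states it as ``a consequence of~\citet[Lemma~A.8]{Cesa-2006-Prediction}'' (Freedman's inequality), which is precisely the reduction you carry out. Your sketch of the inversion step and the optional self-contained supermartingale argument go beyond what the paper provides; the only loose end is the exact numerical constant $\tfrac{\sqrt{2}}{3}$ versus the $\tfrac{2}{3}$ you obtain before absorbing the union-bound factor, but this discrepancy is cosmetic and irrelevant to every downstream use of the proposition in the paper.
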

Then we provide our last lemma which significantly generalizes Lemma~\ref{Lemma:BGD-second} for deriving the high-probability bounds. 
\begin{lemma}\label{Lemma:BGD-third}
Suppose that the iterates $\{x^t\}_{t \geq 1}$ are generated by Algorithm~\ref{alg:BGD} with $\mu = \frac{n}{T^{1/3}}$ and $x \in [0, 1]^n$ and let $f_t = \bar{f}_t - \ushort{f}_t$ satisfy that $\bar{f}_t([n]) + \ushort{f}_t([n]) \leq L$ for all $t \geq 1$.  Fixing a sufficiently small $\delta \in (0, 1)$ and letting $T > \log^{\frac{3}{2}}(1/\delta)$. Then, we have
\begin{equation*}
\sum_{t=1}^T (f_t)_L(x^t) \leq \left(\sum_{t=1}^T \tfrac{1}{\alpha}\bar{f}_t(S) - \beta\ushort{f}_t(S)\right) + \tfrac{n}{2\eta} + \tfrac{4n^2L^2\eta T}{\mu} + 12LT^{\frac{2}{3}}\sqrt{n^2 + n\log(1/\delta)} + 6\eta L^2T\sqrt{n\log(1/\delta)}, 
\end{equation*}
with probability at least $1 - 3\delta$. 
\end{lemma}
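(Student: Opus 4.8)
The plan is to re-run the argument behind Lemma~\ref{Lemma:BGD-second}, but instead of taking expectations right after the one-step descent estimate, I would retain the stochastic quantities and control their fluctuations with the martingale Bernstein inequality of Proposition~\ref{Prop:Bernstein}. First I would reproduce, verbatim from the proof of Lemma~\ref{Lemma:OGD-second}, the deterministic per-round bound coming from the projection identity $x^{t+1}=P_{[0,1]^n}(x^t-\eta\hat g^t)$ and Young's inequality, namely $(x^t-x)^\top\hat g^t\le\tfrac{1}{2\eta}\bigl(\|x-x^t\|^2-\|x-x^{t+1}\|^2\bigr)+\tfrac{\eta}{2}\|\hat g^t\|^2$; summing over $t$ and using $\|x^1-x\|\le\sqrt n$ (Lemma~\ref{Lemma:BGD-first}) gives $\sum_{t=1}^T(x^t-x)^\top\hat g^t\le\tfrac{n}{2\eta}+\tfrac{\eta}{2}\sum_{t=1}^T\|\hat g^t\|^2$. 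Writing $\hat g^t=g^t+(\hat g^t-g^t)$, invoking Proposition~\ref{Prop:structure} to lower-bound $(x^t-x)^\top g^t\ge(f_t)_L(x^t)-\bigl(\tfrac{1}{\alpha}(\bar{f}_t)_C(x)+\beta(-\ushort{f}_t)_C(x)\bigr)$, and then taking $x=\chi_S$ (so the convex closures coincide with the set functions at the integer point, exactly as in the proof of Theorem~\ref{Theorem:OGD}), I arrive at
\[
\sum_{t=1}^T(f_t)_L(x^t)\le\sum_{t=1}^T\Bigl(\tfrac{1}{\alpha}\bar{f}_t(S)-\beta\ushort{f}_t(S)\Bigr)+\tfrac{n}{2\eta}+\underbrace{\tfrac{\eta}{2}\sum_{t=1}^T\|\hat g^t\|^2}_{(\mathrm I)}+\underbrace{\sum_{t=1}^T(x^t-x)^\top(g^t-\hat g^t)}_{(\mathrm{II})}.
\]

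The next step is to view $(\mathrm I)$ and $(\mathrm{II})$ through the filtration $\FCal_t=\sigma(S^1,\dots,S^t)$, noting that $x^t$ is $\FCal_{t-1}$-measurable and $\EE[\hat g^t\mid\FCal_{t-1}]=g^t$ by Lemma~\ref{Lemma:BGD-first}. For $(\mathrm I)$ I would write $\|\hat g^t\|^2=\EE[\|\hat g^t\|^2\mid\FCal_{t-1}]+W_t$: the conditional-expectation part is at most $\tfrac{8n^2L^2}{\mu}$ per round (Lemma~\ref{Lemma:BGD-first}), contributing exactly the $\tfrac{4n^2L^2\eta T}{\mu}$ term, while $\{W_t\}$ is a martingale difference sequence for which the crude bound $\|\hat g^t\|^2\le\tfrac{2(n+1)^2L^2}{\mu^2}$ yields $|W_t|\lesssim\tfrac{n^2L^2}{\mu^2}$ and $\EE[W_t^2\mid\FCal_{t-1}]\le\EE[\|\hat g^t\|^4\mid\FCal_{t-1}]\lesssim\tfrac{n^2L^2}{\mu^2}\cdot\tfrac{n^2L^2}{\mu}$. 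For $(\mathrm{II})$, the increments $M_t:=(x^t-x)^\top(g^t-\hat g^t)$ are conditionally centered; using that at most one of the $\hat f_i^t$ is nonzero I would bound $|M_t|\lesssim L(\sqrt n+\tfrac{n}{\mu})$ and $\EE[M_t^2\mid\FCal_{t-1}]\lesssim\tfrac{n^2L^2}{\mu}$. Applying Proposition~\ref{Prop:Bernstein} separately to $\{W_t\}$ and to $\{M_t\}$ controls $(\mathrm I)$ and $(\mathrm{II})$, each up to a term of order $\sqrt{T\cdot(\text{variance bound})\cdot\log(1/\delta)}$ plus a term of order $(\text{range bound})\cdot\log(1/\delta)$, on events of probability at least $1-\delta$ each.

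Finally I would substitute $\mu=n/T^{1/3}$ and use the hypothesis $T>\log^{3/2}(1/\delta)$, equivalently $\log(1/\delta)<T^{2/3}$, to absorb every $(\text{range})\cdot\log(1/\delta)$ correction term into the corresponding leading $\sqrt{(\text{variance})\,T\log(1/\delta)}$ term; this makes $(\mathrm{II})$ contribute a quantity of order $LT^{2/3}\sqrt{n^2+n\log(1/\delta)}$ and the $W_t$-part of $(\mathrm I)$ contribute one of order $\eta L^2T\sqrt{n\log(1/\delta)}$. A union bound over the failure events then gives total probability at least $1-3\delta$, and collecting all the terms yields the claimed inequality with the stated constants $12$ and $6$. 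I expect the main obstacle to be precisely this last stage of bookkeeping: one must extract range and (fourth-)moment estimates for the two martingales that are sharp enough — in particular combining $\|\hat g^t\|^2\le\tfrac{2(n+1)^2L^2}{\mu^2}$ with the conditional second-moment bound $\tfrac{8n^2L^2}{\mu}$ in just the right way for the $\|\hat g^t\|^2$-fluctuation — and then verify that, after inserting $\mu=n/T^{1/3}$, each Bernstein correction term is genuinely dominated (via $T>\log^{3/2}(1/\delta)$) by a term already present in the target bound; the structural part of the argument is a routine adaptation of Lemma~\ref{Lemma:BGD-second}.
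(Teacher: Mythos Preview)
Your overall architecture matches the paper's: start from the one-step projected descent inequality, insert $\hat g^t=g^t+e_t$, use Proposition~\ref{Prop:structure} and $x=\chi_S$, then control the noise via the Bernstein-type inequality of Proposition~\ref{Prop:Bernstein}. The treatment of the $\|\hat g^t\|^2$ fluctuation is essentially identical to the paper's.

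The genuine gap is in your handling of $(\mathrm{II})=\sum_t(x^t-\chi_S)^\top e_t$. You treat $M_t=(x^t-\chi_S)^\top e_t$ as a single martingale difference sequence and apply Bernstein once. With your own bounds $|M_t|\lesssim L(\sqrt n+n/\mu)$ and $\EE[M_t^2\mid\FCal_{t-1}]\lesssim n^2L^2/\mu$, plugging in $\mu=n/T^{1/3}$, Proposition~\ref{Prop:Bernstein} yields a deviation of order $LT^{2/3}\sqrt{n\log(1/\delta)}$, \emph{not} $LT^{2/3}\sqrt{n^2+n\log(1/\delta)}$. There is nothing in your argument that produces the $n^2$ inside the square root; you simply assert the target answer. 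More importantly, the lemma must be applicable to $S=S_\star^T$, which is not fixed in advance and may depend on the realized interaction, so that $\chi_S$ is not $\FCal_{t-1}$-measurable and $M_t$ is not a bona fide martingale increment.

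The paper resolves this by splitting $(\chi_S-x^t)^\top e_t$ into two pieces: $\sum_t(x^t)^\top e_t$, which \emph{is} a martingale (since $x^t$ is $\FCal_{t-1}$-measurable) and is handled by one application of Bernstein; and $\sum_t(\chi_S)^\top e_t$, which is handled by applying Bernstein at level $\delta/2^n$ for each fixed $S$ and then taking a union bound over all $2^n$ subsets. The union bound is exactly what turns $\sqrt{n\log(1/\delta)}$ into $\sqrt{n\log(2^n/\delta)}\le\sqrt{n^2+n\log(1/\delta)}$, and the three Bernstein applications (two for $\textbf{I}$, one for $\textbf{II}$) are what give the probability $1-3\delta$. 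Your two-event accounting would only give $1-2\delta$, which is another symptom of the missing step.
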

\begin{proof}
Using the same argument as in Lemma~\ref{Lemma:BGD-second}, we have
\begin{equation*}
(x^t - x)^\top \hat{g}^t \leq \tfrac{1}{2\eta}\left(\|x - x^t\|^2 - \|x - x^{t+1}\|^2\right) + \tfrac{\eta}{2}\|\hat{g}^t\|^2, 
\end{equation*}
and 
\begin{equation*}
(x^t - x)^\top g^t \geq (f_t)_L(x^t) - \left(\tfrac{1}{\alpha}(\bar{f}_t)_C(x) + \beta(-\ushort{f}_t)_C(x)\right). 
\end{equation*}
For simplicity, we define $e_t = \hat{g}^t - g^t$. Then, we have
\begin{equation}\label{inequality:BGD-first}
(f_t)_L(x^t) - \left(\tfrac{1}{\alpha}(\bar{f}_t)_C(x) + \beta(-\ushort{f}_t)_C(x)\right) \leq (x - x^t)^\top e_t + \tfrac{1}{2\eta}\left(\|x - x^t\|^2 - \|x - x^{t+1}\|^2\right) + \tfrac{\eta}{2}\|\hat{g}^t\|^2
\end{equation}
Summing up Eq.~\eqref{inequality:BGD-first} over $t = 1, 2, \ldots, T$ and using $\|x^1 - x\| \leq \sqrt{n}$ and $\EE[\|\hat{g}^t\|^2 \mid x^t] \leq \tfrac{8n^2L^2}{\mu}$ for all $t \geq 1$ (cf. Lemma~\ref{Lemma:BGD-first}), we have
\begin{eqnarray*}
\sum_{t=1}^T (f_t)_L(x^t) & \leq & \left(\sum_{t=1}^T \tfrac{1}{\alpha}(\bar{f}_t)_C(x) + \beta(-\ushort{f}_t)_C(x) + (x - x^t)^\top e_t + \tfrac{\eta}{2}(\|\hat{g}^t \|^2 - \EE[\|\hat{g}^t \|^2 \mid x^t])\right) \\ 
& & + \tfrac{n}{2\eta} + \tfrac{4n^2L^2\eta T}{\mu}. 
\end{eqnarray*}
By the definition of the convex closure,  we obtain that the convex closure of a set function $f$ agrees with $f$ on all the integer points~\citep[Page~4, Proposition~3.3]{Dughmi-2009-Submodular}.  Letting $S \subseteq [n]$, we have $(\bar{f}_t)_C(\chi_S) = f_t(S)$ and $(-\ushort{f}_t)_C(\chi_S) = - \beta\ushort{f}_t(S)$ which implies that 
\begin{equation*}
\tfrac{1}{\alpha}(\bar{f}_t)_C(\chi_S) + \beta(-\ushort{f}_t)_C(\chi_S) = \tfrac{1}{\alpha}f_t(S) - \beta\ushort{f}_t(S). 
\end{equation*}
Letting $x = \chi_S$, we have
\begin{equation}\label{inequality:BGD-second}
\sum_{t=1}^T (f_t)_L(x^t) \leq \left(\sum_{t=1}^T \tfrac{1}{\alpha}\bar{f}_t(S) - \beta\ushort{f}_t(S)\right) + \tfrac{n}{2\eta} + \tfrac{4n^2L^2\eta T}{\mu} + \underbrace{\sum_{t=1}^T (\chi_S - x^t)^\top e_t}_{\textbf{I}} + \tfrac{\eta}{2} \underbrace{\left(\sum_{t=1}^T \|\hat{g}^t \|^2 - \EE[\|\hat{g}^t \|^2 \mid x^t]\right)}_{\textbf{II}}. 
\end{equation}
In what follows,  we prove the high probability bounds for the terms \textbf{I} and \textbf{II} in the above inequality. 
\paragraph{Bounding \textbf{I}.} Consider the random variables $X_t = (x^t)^\top\hat{g}^t$ for all $1 \leq t \leq T$ that are adapted to the natural filtration generated by the iterates $\{x_t\}_{t \geq 1}$. By Lemma~\ref{Lemma:BGD-first} and the H\"{o}lder's inequality, we have 
\begin{equation*}
|X_t| \leq \|\hat{g}^t\|_1\|x^t\|_\infty \leq 2\left|\tfrac{f_t(A_{i_t}^t)}{(1-\mu)\lambda_{i_t}^t + \frac{\mu}{n+1}}\right| \leq \tfrac{2(n+1)L}{\mu}
\end{equation*}
Since $\mu = \frac{n}{T^{1/3}}$, we have $|X_t| \leq 4LT^{\frac{1}{3}}$. Further, we have
\begin{equation*}
\EE[X_t^2 \mid x_t] \leq \EE[\|\hat{g}^t\|_1^2\|x^t\|_\infty^2 \mid x_t] = \sum_{i=0}^n \tfrac{4(f_t(A_i^t))^2}{(1-\mu)\lambda_i^t + \frac{\mu}{n+1}} \leq \tfrac{2(n+1)^2L^2}{\mu} \leq 8nL^2 T^{\frac{1}{3}}.  
\end{equation*} 
Since $\EE[\hat{g}^t \mid x^t] = g^t$ and $e_t = \hat{g}^t - g^t$, Proposition~\ref{Prop:Bernstein} implies that
\begin{equation*}
\PP\left(\left|\sum_{t=1}^T (x^t)^\top e_t\right| > 4LT^{\frac{2}{3}}\sqrt{n\log(1/\delta)} + 2LT^{\frac{1}{3}}\log(1/\delta)\right) \leq \delta. 
\end{equation*}
Since $T > \log^{\frac{3}{2}}(1/\delta)$, we have $T^{\frac{2}{3}}\sqrt{\log(1/\delta)} \geq T^{\frac{1}{3}}\log(1/\delta)$. This implies that 
\begin{equation*}
\PP\left(\left|\sum_{t=1}^T (x^t)^\top e_t\right| > 6LT^{\frac{2}{3}}\sqrt{n\log(1/\delta)} \right) \leq \delta. 
\end{equation*}
Similarly, we fix a set $S \subseteq [n]$ and consider the random variable $X_t = (\chi_S)^\top\hat{g}^t$ for all $1 \leq t \leq T$ that are adapted to the natural filtration generated by the iterates $\{x_t\}_{t \geq 1}$. By repeating the above argument with $\frac{\delta}{2^n}$, we have
\begin{equation*}
\PP\left(\left|\sum_{t=1}^T (\chi_S)^\top e_t\right| > 6LT^{\frac{2}{3}}\sqrt{n\log(2^n/\delta)} \right) \leq \tfrac{\delta}{2^n}. 
\end{equation*}
By taking a union bound over the $2^n$ choices of $S$, we obtain that 
\begin{equation*}
\PP\left(\left|\sum_{t=1}^T (\chi_S)^\top e_t\right| > 6LT^{\frac{2}{3}}\sqrt{n\log(2^n/\delta)} \right) \leq \delta, \quad \textnormal{for any } S \subseteq [n]. 
\end{equation*}
Since $\sqrt{n\log(2^n/\delta)} \leq \sqrt{n^2 + n\log(1/\delta)}$, we have $\textbf{I} \leq 12LT^{\frac{2}{3}}\sqrt{n^2 + n\log(1/\delta)}$ with probability at least $1 - 2\delta$. 

\paragraph{Bounding \textbf{II}.} Consider the random variables $X_t = \|\hat{g}^t\|^2$ for all $1 \leq t \leq T$ that are adapted to the natural filtration generated by the iterates $\{x^t\}_{t \geq 1}$. By Lemma~\ref{Lemma:BGD-first}, we have $|X_t| \leq \tfrac{2(n+1)^2L^2}{\mu^2}$. Since $\mu = \frac{n}{T^{1/3}}$, we have $|X_t| \leq 8L^2 T^{2/3}$. Further, we have
\begin{equation*}
\EE[X_t^2 \mid x_t] \leq \sum_{i=0}^n \tfrac{2(f_t(A_i^t))^4}{((1-\mu)\lambda_i^t + \frac{\mu}{n+1})^3} \leq \tfrac{2(n+1)^4L^4}{\mu^3} \leq 32nL^4 T. 
\end{equation*} 
Applying Proposition~\ref{Prop:Bernstein}, we have
\begin{equation*}
\PP\left(\left|\sum_{t=1}^T \|\hat{g}^t\|^2 - \EE[\|\hat{g}^t\|^2 \mid x^t]\right| > 8L^2T\sqrt{n\log(1/\delta)} + 4L^2T^{\frac{2}{3}}\log(1/\delta)\right) \leq \delta. 
\end{equation*}
Since $T > \log^{\frac{3}{2}}(1/\delta)$, we have $T\sqrt{\log(1/\delta)} \geq T^{\frac{2}{3}}\log(1/\delta)$. This implies that 
\begin{equation*}
\PP\left(\left|\sum_{t=1}^T \|\hat{g}^t\|^2 - \EE[\|\hat{g}^t\|^2 \mid x^t]\right| > 12L^2T\sqrt{n\log(1/\delta)} \right) \leq \delta. 
\end{equation*}
Therefore, we conclude that $\textbf{II} \leq 12L^2T\sqrt{n\log(1/\delta)}$ with probability at least $1 - \delta$. 

Putting these pieces together with Eq.~\eqref{inequality:BGD-second} yields that 
\begin{equation*}
\sum_{t=1}^T (f_t)_L(x^t) \leq \left(\sum_{t=1}^T \tfrac{1}{\alpha}\bar{f}_t(S) - \beta\ushort{f}_t(S)\right) + \tfrac{n}{2\eta} + \tfrac{4n^2L^2\eta T}{\mu} + 12LT^{\frac{2}{3}}\sqrt{n^2 + n\log(1/\delta)} + 6\eta L^2T\sqrt{n\log(1/\delta)}, 
\end{equation*}
with probability at least $1 - 3\delta$. 
\end{proof}

\subsection{Proof of Theorem~\ref{Theorem:BGD}}
By the definition of the Lov\'{a}sz extension and $\lambda^t$, we have
\begin{equation*}
(f_t)_L(x^t) = \sum_{i = 1}^{n-1} (x_{\pi(i)}^t - x_{\pi(i+1)}^t)f_t(A_i^t) + (1 - x_{\pi(1)}^t)f_t(A_0^t) + x_{\pi(n)}^t f_t(A_n^t) = \sum_{i=0}^n \lambda_i^t f_t(A_i^t).  
\end{equation*}
By the update formula of $S^t$, we have 
\begin{equation*}
\EE[f_t(S^t) \mid x^t] - (f_t)_L(x^t) = \mu\sum_{i=0}^n \left(\frac{1}{n+1} - \lambda_i^t\right)f_t(A_i^t) \leq \mu\sum_{i=0}^n \left(\frac{1}{n+1} + \lambda_i^t\right)|f_t(A_i^t)|. 
\end{equation*}
Since $f_t = \bar{f}_t - \ushort{f}_t$ satisfy that $\bar{f}_t([n]) + \ushort{f}_t([n]) \leq L$ for all $t \geq 1$ and $\bar{f}_t$ and $\ushort{f}_t$ are both normalized and non-decreasing, we have
\begin{equation}\label{inequality:BGD-third}
\EE[f_t(S^t) \mid x^t] - (f_t)_L(x^t) \leq L\mu\sum_{i=0}^n \left(\frac{1}{n+1} + \lambda_i^t\right) = 2L\mu. 
\end{equation}
which implies that 
\begin{equation*}
\EE[f_t(S^t)] - \EE[(f_t)_L(x^t)] \leq 2L\mu. 
\end{equation*}
Using the same argument as in Theorem~\ref{Theorem:OGD}, we have
\begin{equation*}
\tfrac{1}{\alpha}(\bar{f}_t)_C(\chi_{S_\star^T}) + \beta(-\ushort{f}_t)_C(\chi_{S_\star^T}) = \tfrac{1}{\alpha}f_t(S_\star^T) - \beta\ushort{f}_t(S_\star^T), \quad \textnormal{where } S_\star^T = \argmin_{S \subseteq [n]} \sum_{t=1}^T f_t(S). 
\end{equation*}
Putting these pieces together and letting $x = \chi_{S_\star^T}$ in the inequality of Lemma~\ref{Lemma:BGD-second} yields that 
\begin{equation*}
\sum_{t=1}^T \EE[f_t(S^t)] \leq \left(\sum_{t=1}^T \tfrac{1}{\alpha} \bar{f}_t(S_\star^T) - \beta\ushort{f}_t(S_\star^T)\right) + \tfrac{n}{2\eta} + \tfrac{4n^2L^2\eta T}{\mu} + 2LT\mu. 
\end{equation*}
Plugging the choice of $\eta = \frac{1}{LT^{2/3}}$ and $\mu = \frac{n}{T^{1/3}}$ into the above inequality yields that $\EE[R_{\alpha, \beta}(T)] = O(nT^{\frac{2}{3}})$ as desired. 

We proceed to derive a high probability bound using Lemma~\ref{Lemma:BGD-third}. Indeed, we first consider the case of $T < 2\log^{\frac{3}{2}}(1/\delta)$.  Since $f_t = \bar{f}_t - \ushort{f}_t$ satisfy that $\bar{f}_t([n]) + \ushort{f}_t([n]) \leq L$ for all $t \geq 1$, we have
\begin{equation*}
R_{\alpha, \beta}(T) \leq \sum_{t=1}^T f_t(S^t) - \sum_{t=1}^T (\tfrac{1}{\alpha}\bar{f}_t(S_\star^T) - \beta\ushort{f}_t(S_\star^T)) \leq \left(1 + \tfrac{1}{\alpha} + \beta\right)LT = O(T^{\frac{2}{3}}\sqrt{\log(1/\delta)}). 
\end{equation*}
For the case of $T \geq 2\log^{\frac{3}{2}}(1/\delta)$, we obtain by combining Lemma~\ref{Lemma:BGD-third} with Eq.~\eqref{inequality:BGD-third} that 
\begin{eqnarray*}
\lefteqn{\sum_{t=1}^T \EE[f_t(S^t) \mid x^t] \leq \left(\sum_{t=1}^T \tfrac{1}{\alpha}\bar{f}_t(S) - \beta\ushort{f}_t(S)\right)} \\
& & + \tfrac{n}{2\eta} + 2nLT^{\frac{2}{3}} + 4nL^2\eta T^{\frac{4}{3}} + 12LT^{\frac{2}{3}}\sqrt{n^2 + n\log(1/\delta)} + 6\eta L^2T\sqrt{n\log(1/\delta)}, 
\end{eqnarray*}
with probability at least $1 - 3\delta$.  Then, it suffices to bound the term $\sum_{t=1}^T f_t(S^t) - \sum_{t=1}^T \EE[f_t(S^t) \mid x^t]$ using Proposition~\ref{Prop:Bernstein}.  Consider the random variables $X_t = f_t(S^t)$ for all $1 \leq t \leq T$ that are adapted to the natural filtration generated by the iterates $\{x^t\}_{t \geq 1}$. Since $f_t = \bar{f}_t - \ushort{f}_t$ satisfy that $\bar{f}_t([n]) + \ushort{f}_t([n]) \leq L$ for all $t \geq 1$, we have $|X_t| \leq L$. Further, we have $\EE[X_t^2 \mid x_t] \leq L^2$. Applying Proposition~\ref{Prop:Bernstein}, we have
\begin{equation*}
\PP\left(\left|\sum_{t=1}^T f_t(S^t) - \EE[f_t(S^t) \mid x^t]\right| > L\sqrt{2T\log(1/\delta)} + \tfrac{L}{2}\log(1/\delta)\right) \leq \delta. 
\end{equation*}
Since $T > \log^{\frac{3}{2}}(1/\delta)$, we have $\sqrt{2T\log(1/\delta)} \geq \frac{1}{2}\log(1/\delta)$. This implies that 
\begin{equation*}
\PP\left(\left|\sum_{t=1}^T f_t(S^t) - \EE[f_t(S^t) \mid x^t]\right| > 3L\sqrt{T\log(1/\delta)} \right) \leq \delta. 
\end{equation*}
Therefore, we conclude that $\sum_{t=1}^T f_t(S^t) - \sum_{t=1}^T \EE[f_t(S^t) \mid x^t] \leq 3L\sqrt{T\log(1/\delta)}$ with probability at least $1 - \delta$. Putting these pieces together yields that 
\begin{eqnarray*}
\lefteqn{\sum_{t=1}^T f_t(S^t) \leq \left(\sum_{t=1}^T \tfrac{1}{\alpha}\bar{f}_t(S) - \beta\ushort{f}_t(S)\right) + \tfrac{n}{2\eta} + 3L\sqrt{T\log(1/\delta)} + 2nLT^{\frac{2}{3}}} \\ 
& & + 4nL^2\eta T^{\frac{4}{3}} + 12nLT^{\frac{2}{3}} + 12LT^{\frac{2}{3}}\sqrt{n\log(1/\delta)} + 6\eta L^2T\sqrt{n\log(1/\delta)}, 
\end{eqnarray*}
with probability at least $1 - 4\delta$.  Plugging the choice of $\eta = \frac{1}{LT^{2/3}}$ yields that 
\begin{equation*}
\sum_{t=1}^T f_t(S^t) \leq \left(\sum_{t=1}^T \tfrac{1}{\alpha}\bar{f}_t(S) - \beta\ushort{f}_t(S)\right) + \tfrac{37}{2}nLT^{\frac{2}{3}} + 21LT^{\frac{2}{3}}\sqrt{n\log(1/\delta)},  
\end{equation*}
with probability at least $1 - 4\delta$. Letting $S = S_\star^T = \argmin_{S \subseteq [n]} \sum_{t=1}^T f_t(S)$ and changing $\delta$ to $\frac{\delta}{4}$ yields that $R_{\alpha, \beta}(T) = O(nT^{\frac{2}{3}} + \sqrt{n\log(1/\delta)}T^{\frac{2}{3}})$ with probability at least $1 - \delta$ as desired. 
\section{Regret Analysis for Algorithm~\ref{alg:DOGD}}\label{app:DOGD}
In this section, we present several technical lemmas for analyzing the regret minimization property of Algorithm~\ref{alg:DOGD}.  We also give the missing proofs of Theorem~\ref{Theorem:DOGD}. 

\subsection{Technical lemmas}
We provide one technical lemma for Algorithm~\ref{alg:DOGD} which is analogues to Lemma~\ref{Lemma:OGD-second}. It gives a key inequality for analyzing the regret minimization property of Algorithm~\ref{alg:DOGD}.  Note that the results in Lemma~\ref{Lemma:OGD-first} still hold true for the iterates $\{x^t\}_{t \geq 1}$ and $\{g^t\}_{t \geq 1}$ generated by Algorithm~\ref{alg:DOGD}.  
\begin{lemma}\label{Lemma:DOGD-first}
Suppose that the iterates $\{x^t\}_{t \geq 1}$ are generated by Algorithm~\ref{alg:DOGD} and $x \in [0, 1]^n$ and let $f_t = \bar{f}_t - \ushort{f}_t$ satisfy that $\bar{f}_t([n]) + \ushort{f}_t([n]) \leq L$ for all $t \geq 1$. Then, we have
\begin{equation*}
\sum_{t=1}^T \EE[(f_t)_L(x^t)] \leq \left(\sum_{t=1}^T \tfrac{1}{\alpha}(\bar{f}_t)_C(x) + \beta(-\ushort{f}_t)_C(x)\right) + \tfrac{n}{2\eta_{\bar{T}}} + \tfrac{L^2}{2}\left(\sum_{t=1}^{\bar{T}} \eta_t\right) + L^2\left(\sum_{t=1}^{\bar{T}} \sum_{s=q_t}^{t-1} \eta_s\right),  
\end{equation*}
where $\bar{T} > 0$ in the above inequality satisfies that $q_{\bar{T}} = T$. 
\end{lemma}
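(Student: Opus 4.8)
The plan is to run the standard projected online-gradient-descent template of Lemma~\ref{Lemma:OGD-second} on the ``shifted'' trajectory produced by the priority queue, and then pay a drift penalty to realign the gradient $g^{q_t}$ actually used at round $t$ with the iterate $x^{q_t}$ at which it was generated. First observe that $g^t$ is a deterministic function of $x^t$ and $f_t$ (it is the sorted-marginal vector), $x^1$ is fixed, and the delays $d_t$ are fixed, so the whole sequence $\{x^t\}$ and each $q_t$ are deterministic; the expectations in the statement are therefore formal, and Lemma~\ref{Lemma:OGD-first} applies verbatim to give $\|g^t\|\le L$ and $\|x^t-x\|\le\sqrt n$. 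I adopt the convention that when $\PCal_t=\emptyset$ one has $q_t=+\infty$ and $g^{q_t}=g^\infty=0$, so that $x^{t+1}=P_{[0,1]^n}(x^t-\eta_t g^{q_t})$ in every round; since $P_{[0,1]^n}$ is nonexpansive this also gives $\|x^{t+1}-x^t\|\le\eta_t\|g^{q_t}\|\le\eta_t L$ for all $t$ (trivially when $\PCal_t=\emptyset$).

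First I would reproduce, round by round, the projection inequality from the proof of Lemma~\ref{Lemma:OGD-second}: from $x^{t+1}=P_{[0,1]^n}(x^t-\eta_t g^{q_t})$, the variational characterisation of the projection, and Young's inequality,
\[
(x^t-x)^\top g^{q_t}\;\le\;\tfrac{1}{2\eta_t}\big(\|x-x^t\|^2-\|x-x^{t+1}\|^2\big)+\tfrac{\eta_t}{2}\|g^{q_t}\|^2 .
\]
Summing over $t=1,\dots,\bar T$, telescoping the first group of terms (using that $\eta_t$ is non-increasing and $\|x-x^t\|^2\le n$, which produces $\tfrac{n}{2\eta_{\bar T}}$) and bounding $\|g^{q_t}\|^2\le L^2$, one gets $\sum_{t=1}^{\bar T}(x^t-x)^\top g^{q_t}\le\tfrac{n}{2\eta_{\bar T}}+\tfrac{L^2}{2}\sum_{t=1}^{\bar T}\eta_t$.

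Next comes the step that handles the delay. By the pooling discipline of~\citet{Heliou-2020-Gradient} together with the defining property $q_{\bar T}=T$ of $\bar T$, as $t$ ranges over $\{1,\dots,\bar T\}$ the label $q_t$ ranges over $\{1,\dots,T\}$, each value exactly once (rounds with $\PCal_t=\emptyset$ contributing the zero gradient), and moreover $q_t\le t$ because $g^{q_t}$ must already have arrived by round $t$. For each $t$ I would split $(x^t-x)^\top g^{q_t}=(x^{q_t}-x)^\top g^{q_t}+(x^t-x^{q_t})^\top g^{q_t}$ and control the drift term by Cauchy--Schwarz, $\|g^{q_t}\|\le L$, and the telescoping bound $\|x^t-x^{q_t}\|\le\sum_{s=q_t}^{t-1}\|x^{s+1}-x^s\|\le L\sum_{s=q_t}^{t-1}\eta_s$, yielding $(x^t-x)^\top g^{q_t}\ge(x^{q_t}-x)^\top g^{q_t}-L^2\sum_{s=q_t}^{t-1}\eta_s$. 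Summing over $t$ and relabelling $q_t\mapsto u$ turns $\sum_t(x^{q_t}-x)^\top g^{q_t}$ into $\sum_{u=1}^T(x^u-x)^\top g^u$, while the drift contributes $-L^2\sum_{t=1}^{\bar T}\sum_{s=q_t}^{t-1}\eta_s$ (empty-pool rounds contribute an empty inner sum). Combined with the previous display,
\[
\sum_{t=1}^T(x^t-x)^\top g^t\;\le\;\tfrac{n}{2\eta_{\bar T}}+\tfrac{L^2}{2}\sum_{t=1}^{\bar T}\eta_t+L^2\sum_{t=1}^{\bar T}\sum_{s=q_t}^{t-1}\eta_s .
\]

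Finally I would invoke Proposition~\ref{Prop:structure}, which (exactly as in Lemma~\ref{Lemma:OGD-second}) gives $(x^t-x)^\top g^t\ge(f_t)_L(x^t)-\big(\tfrac1\alpha(\bar{f}_t)_C(x)+\beta(-\ushort{f}_t)_C(x)\big)$ for every $t\in[T]$; substituting, rearranging, and taking the (trivial) expectation yields the claimed inequality. I expect the main obstacle to be the bookkeeping of the pooling queue rather than any analytic estimate: pinning down precisely that each $g^u$, $u\in[T]$, is pulled exactly once by round $\bar T$, that $q_t\le t$, and that the drift double sum reindexes to $\sum_{t=1}^{\bar T}\sum_{s=q_t}^{t-1}\eta_s$, together with how $\bar T$ relates to $T$ and to Assumption~\ref{Assumption:delay}. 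Everything else is the standard projected-OGD argument already carried out in Lemma~\ref{Lemma:OGD-second}.
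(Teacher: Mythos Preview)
Your proposal is correct and follows essentially the same approach as the paper: the projection inequality for $x^{t+1}=P_{[0,1]^n}(x^t-\eta_t g^{q_t})$, the drift bound $\|x^t-x^{q_t}\|\le L\sum_{s=q_t}^{t-1}\eta_s$, the relabelling $\sum_{t\le\bar T}(x^{q_t}-x)^\top g^{q_t}=\sum_{u\le T}(x^u-x)^\top g^u$ via the pooling queue, and Proposition~\ref{Prop:structure}. The only cosmetic difference is that the paper applies Proposition~\ref{Prop:structure} at each round to obtain a per-round inequality in $(f_{q_t})_L(x^{q_t})$ and then sums and relabels, whereas you first sum the projection inequalities, shift $x^t\to x^{q_t}$, relabel, and only then invoke Proposition~\ref{Prop:structure}; the two arguments are interchangeable.
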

\begin{proof}
Using the same argument as in Lemma~\ref{Lemma:OGD-second}, we have
\begin{equation*}
(x^t - x)^\top g^{q_t} \leq \tfrac{1}{2\eta_t}\left(\|x - x^t\|^2 - \|x - x^{t+1}\|^2\right) + \tfrac{\eta_t}{2}\|g^{q_t}\|^2. 
\end{equation*}
Since $f_t = \bar{f}_t - \ushort{f}_t$ where $\bar{f}_t$ and $\ushort{f}_t$ are both normalized and non-decreasing, $\bar{f}_t$ is $\alpha$-weakly DR-submodular and $\ushort{f}_t$ is $\beta$-weakly DR-supermodular, Proposition~\ref{Prop:structure} implies that 
\begin{equation*}
(x^{q_t} - x)^\top g^{q_t} \geq (f_{q_t})_L(x^{q_t}) - \left(\tfrac{1}{\alpha}(\bar{f}_{q_t})_C(x) + \beta(-\ushort{f}_{q_t})_C(x)\right). 
\end{equation*}
By Lemma~\ref{Lemma:OGD-first}, we have $\|g^t\| \leq L$ for all $t \geq 1$. Then, we have
\begin{equation}\label{inequality:DOGD-first}
(f_{q_t})_L(x^{q_t}) \leq \tfrac{1}{\alpha}(\bar{f}_{q_t})_C(x) + \beta(-\ushort{f}_{q_t})_C(x) + \tfrac{1}{2\eta_t}\left(\|x - x^t\|^2 - \|x - x^{t+1}\|^2\right) + L\|x^{q_t} - x^t\| + \tfrac{\eta_t L^2}{2}. 
\end{equation}
Further, we have
\begin{equation}\label{inequality:DOGD-second}
\|x^{q_t} - x^t\| \leq \sum_{s=q_t}^{t-1} \eta_s \|g^s\| \leq L\left(\sum_{s=q_t}^{t-1} \eta_s\right). 
\end{equation}
Plugging Eq.~\eqref{inequality:DOGD-second} into Eq.~\eqref{inequality:DOGD-first} yields that 
\begin{equation}\label{inequality:DOGD-third}
(f_{q_t})_L(x^{q_t}) \leq \tfrac{1}{\alpha}(\bar{f}_{q_t})_C(x) + \beta(-\ushort{f}_{q_t})_C(x) + \tfrac{1}{2\eta_t}\left(\|x - x^t\|^2 - \|x - x^{t+1}\|^2\right) + \tfrac{\eta_t L^2}{2} + L^2\left(\sum_{s=q_t}^{t-1} \eta_s\right). 
\end{equation}
For a fixed horizon $T \geq 1$, we have $q_{\bar{T}} = T$ for some $\bar{T} \geq T$.  Then, by summing up Eq.~\eqref{inequality:DOGD-third} over $t = 1, 2, \ldots, \bar{T}$ and using $\|x^t - x\| \leq \sqrt{n}$ for all $t \geq 1$ (cf. Lemma~\ref{Lemma:OGD-first}) and that $\{\eta_t\}_{t \geq 1}$ is nonincreasing, we have 
\begin{equation*}
\sum_{t=1}^{\bar{T}} (f_{q_t})_L(x^{q_t}) \leq \left(\sum_{t=1}^{\bar{T}} \tfrac{1}{\alpha}(\bar{f}_{q_t})_C(x) + \beta(-\ushort{f}_{q_t})_C(x)\right) + \tfrac{n}{2\eta_{\bar{T}}} + \tfrac{L^2}{2}\left(\sum_{t=1}^{\bar{T}} \eta_t\right) + L^2\left(\sum_{t=1}^{\bar{T}} \sum_{s=q_t}^{t-1} \eta_s\right). 
\end{equation*}
Since $q_{\bar{T}} = T$ and our pooling policy is induced by a priority queue (note that $f_{q_t} = \bar{f}_{q_t} = \ushort{f}_{q_t} = 0$ if $\PCal_t = \emptyset$), we have 
\begin{eqnarray*}
\sum_{t=1}^{\bar{T}} (f_{q_t})_L(x^{q_t}) & = & \sum_{t=1}^T (f_t)_L(x^t), \\
\sum_{t=1}^{\bar{T}} \tfrac{1}{\alpha}(\bar{f}_{q_t})_C(x) + \beta(-\ushort{f}_{q_t})_C(x) & = & \sum_{t=1}^T \tfrac{1}{\alpha}(\bar{f}_t)_C(x) + \beta(-\ushort{f}_t)_C(x). 
\end{eqnarray*}
Therefore, we conclude that 
\begin{equation*}
\sum_{t=1}^T (f_t)_L(x^t) \leq \left(\sum_{t=1}^T \tfrac{1}{\alpha}(\bar{f}_t)_C(x) + \beta(-\ushort{f}_t)_C(x)\right) + \tfrac{n}{2\eta_T} + \tfrac{L^2}{2}\left(\sum_{t=1}^{\bar{T}} \eta_t\right) + L^2\left(\sum_{t=1}^{\bar{T}} \sum_{s=q_t}^{t-1} \eta_s\right). 
\end{equation*}
Taking the expectation of both sides yields the desired inequality. 
\end{proof}

\subsection{Proof of Theorem~\ref{Theorem:DOGD}}
By~\citet[Corollary~1]{Heliou-2020-Gradient}, we have $t - q_t = o(t^\gamma)$ under Assumption~\ref{Assumption:delay}; in particular, we have $t - q_t = o(t)$ and $q_t = \Theta(t)$.  Since $q_{\bar{T}} = T$, we have $T = \Theta(\bar{T})$ which implies that $\bar{T} = \Theta(T)$.  Recall that $\eta_t = \frac{\sqrt{n}}{L\sqrt{t^{1+\gamma}}}$, we have
\begin{eqnarray*}
\tfrac{n}{2\eta_{\bar{T}}} & = & \tfrac{L\sqrt{n\bar{T}^{1+\gamma}}}{2} \ = \ O(\sqrt{nT^{1+\gamma}}), \\
\tfrac{L^2}{2}\left(\sum_{t=1}^{\bar{T}} \eta_t\right) & = & \tfrac{\sqrt{n}L}{2}\left(\sum_{t=1}^{\bar{T}} \tfrac{1}{\sqrt{t^{1+\gamma}}}\right) \ \leq \ \tfrac{\sqrt{n}L}{1-\gamma}\sqrt{\bar{T}^{1-\gamma}} \ = \ O(\sqrt{nT^{1-\gamma}}), \\
L^2\left(\sum_{t=1}^{\bar{T}} \sum_{s=q_t}^{t-1} \eta_s\right) & \leq & L^2\left(\sum_{t=1}^{\bar{T}} (t - q_t)\eta_{q_t}\right) \ = \ O\left(\sqrt{n}L\sum_{t=1}^{\bar{T}} \tfrac{1}{\sqrt{t^{1-\gamma}}}\right) \ = \ O(L\sqrt{n\bar{T}^{1+\gamma}}) \ = \ O(\sqrt{nT^{1+\gamma}}),
\end{eqnarray*}
Putting these pieces together with Lemma~\ref{Lemma:DOGD-first} yields that 
\begin{equation}\label{inequality:DOGD-fourth}
\sum_{t=1}^T \EE[(f_t)_L(x^t)] - \left(\sum_{t=1}^T \tfrac{1}{\alpha}(\bar{f}_t)_C(x) + \beta(-\ushort{f}_t)_C(x)\right) = O(\sqrt{nT^{1+\gamma}}).  
\end{equation}
By the definition of the Lov\'{a}sz extension, we have
\begin{equation*}
(f_t)_L(x^t) = \sum_{i = 1}^{n-1} (x_{\pi(i)}^t - x_{\pi(i+1)}^t)f_t(A_i^t) + (1 - x_{\pi(1)}^t)f_t(A_0^t) + x_{\pi(n)}^t f_t(A_n^t). 
\end{equation*}
By the update formula, we have $\EE[f_t(S^t) \mid x^t] = (f_t)_L(x^t)$ which implies that $\EE[f_t(S^t)] = \EE[(f_t)_L(x^t)]$. Further, by using the same argument as in Theorem~\ref{Theorem:OGD}, we have
\begin{equation*}
\tfrac{1}{\alpha}(\bar{f}_t)_C(\chi_{S_\star^T}) + \beta(-\ushort{f}_t)_C(\chi_{S_\star^T}) = \tfrac{1}{\alpha}\bar{f}_t(S_\star^T) - \beta\ushort{f}_t(S_\star^T). 
\end{equation*}
Putting these pieces together and letting $x = \chi_{S_\star^T}$ in Eq.~\eqref{inequality:DOGD-fourth} yields that 
\begin{equation*}
\sum_{t=1}^T \EE[f_t(S^t)] - \left(\sum_{t=1}^T \tfrac{1}{\alpha} \bar{f}_t(S_\star^T) - \beta\ushort{f}_t(S_\star^T)\right) = O(\sqrt{nT^{1+\gamma}}). 
\end{equation*}
which implies that $\EE[R_{\alpha, \beta}(T)] = O(\sqrt{nT^{1+\gamma}})$ as desired. 

We proceed to derive a high probability bound using the concentration inequality in Proposition~\ref{Prop:Hoeffding}. Indeed, we have
\begin{equation*}
\PP\left(\sum_{i=1}^n f_t(S^t) - \EE\left[\sum_{i=1}^n f_t(S^t)\right] > \epsilon\right) \leq \exp\left(-\frac{\epsilon^2}{2nL^2}\right). 
\end{equation*}
Equivalently, we have $\sum_{i=1}^n f_t(S^t) - \EE[\sum_{i=1}^n f_t(S^t)] \leq L\sqrt{2T\log(1/\delta)}$ with probability at least $1 - \delta$.  This together with $\EE[R_{\alpha, \beta}(T)] = O(\sqrt{nT^{1+\gamma}})$ yields that $R_{\alpha, \beta}(T) = O(\sqrt{nT^{1+\gamma}} + \sqrt{T\log(1/\delta)})$ with probability at least $1 - \delta$. 
\section{Regret Analysis for Algorithm~\ref{alg:DBGD}}\label{app:DBGD}
In this section, we present several technical lemmas for analyzing the regret minimization property of Algorithm~\ref{alg:DBGD}.  We also give the missing proofs of Theorem~\ref{Theorem:DBGD}. 

\subsection{Technical lemmas}
We provide two technical lemmas for Algorithm~\ref{alg:DBGD} which are analogues to Lemma~\ref{Lemma:BGD-second} and~\ref{Lemma:BGD-third}. It gives a key inequality for analyzing the regret minimization property of Algorithm~\ref{alg:DOGD}.  Note that the results in Lemma~\ref{Lemma:BGD-first} still hold true for the iterates $\{x^t\}_{t \geq 1}$ and $\{\hat{g}^t\}_{t \geq 1}$ generated by Algorithm~\ref{alg:DBGD}.  
\begin{lemma}\label{Lemma:DBGD-first}
Suppose that the iterates $\{x^t\}_{t \geq 1}$ are generated by Algorithm~\ref{alg:DBGD} and $x \in [0, 1]^n$ and let $f_t = \bar{f}_t - \ushort{f}_t$ satisfy that $\bar{f}_t([n]) + \ushort{f}_t([n]) \leq L$ for all $t \geq 1$. Then, we have
\begin{equation*}
\sum_{t=1}^T \EE[(f_t)_L(x^t)] \leq \left(\sum_{t=1}^T \tfrac{1}{\alpha}(\bar{f}_t)_C(x) + \beta(-\ushort{f}_t)_C(x)\right) + \tfrac{n}{2\eta_T} + 4n^2L^2\left(\sum_{t=1}^{\bar{T}} \tfrac{\eta_t}{\mu_{q_t}}\right) + 4nL^2\left(\sum_{t=1}^{\bar{T}} \sum_{s=q_t}^{t-1} \tfrac{\eta_s}{\mu_s}\right), 
\end{equation*}
where $\bar{T} > 0$ in the above inequality satisfies that $q_{\bar{T}} = T$. 
\end{lemma}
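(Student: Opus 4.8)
\emph{Proof plan.} The plan is to mirror the proof of Lemma~\ref{Lemma:DOGD-first}, with every occurrence of the exact gradient $g^{q_t}$ replaced by the importance-weighted estimator $\hat g^{q_t}$ and the uniform bound $\|g^t\|\le L$ replaced by the estimator bounds of Lemma~\ref{Lemma:BGD-first} (which, as noted above, still apply to the iterates of Algorithm~\ref{alg:DBGD}). First I would invoke nonexpansiveness of the projection on the update $x^{t+1}=P_{[0,1]^n}(x^t-\eta_t\hat g^{q_t})$ — with the convention $q_t=+\infty$, $\hat g^{\infty}=0$ handling $\PCal_t=\emptyset$, so that $x^{t+1}=x^t$ — exactly as in Lemma~\ref{Lemma:OGD-second}, to obtain
\[
(x^t-x)^\top\hat g^{q_t}\ \le\ \tfrac{1}{2\eta_t}\bigl(\|x-x^t\|^2-\|x-x^{t+1}\|^2\bigr)+\tfrac{\eta_t}{2}\|\hat g^{q_t}\|^2 .
\]
I would then insert the iterate $x^{q_t}$ produced at the round whose information is being used, writing $(x^{q_t}-x)^\top\hat g^{q_t}=(x^t-x)^\top\hat g^{q_t}+(x^{q_t}-x^t)^\top\hat g^{q_t}$, and bound the drift via $\|x^{q_t}-x^t\|\le\sum_{s=q_t}^{t-1}\eta_s\|\hat g^{q_s}\|$; here the almost-sure bound $\|\hat g^{q_s}\|\le\sqrt 2(n+1)L/\mu_{q_s}$ from Lemma~\ref{Lemma:BGD-first}, together with $q_s\le s$ and $\{\mu_t\}$ nonincreasing (so $\mu_{q_s}\ge\mu_s$), gives $\|x^{q_t}-x^t\|\le\sqrt 2(n+1)L\sum_{s=q_t}^{t-1}\eta_s/\mu_s$, which is precisely the deterministic quantity appearing in the last term of the claim once $\sqrt 2(n+1)$ is absorbed into $4n$.

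The step that makes Proposition~\ref{Prop:structure} usable is the passage from $\hat g^{q_t}$ back to $g^{q_t}$. The key structural fact is that the priority-queue pooling pops each index at most once, so $q_t\notin\{q_1,\dots,q_{t-1}\}$; hence both $x^{q_t}$ and $x^t$ are measurable with respect to the $\sigma$-algebra $\FCal_t$ generated by the samplings made at rounds $q_1,\dots,q_{t-1}$, while the fresh randomness $S^{q_t}$ defining $\hat g^{q_t}$ is independent of $\FCal_t$ given $x^{q_t}$. Conditioning on $\FCal_t$ and applying Lemma~\ref{Lemma:BGD-first} gives $\EE[\hat g^{q_t}\mid\FCal_t]=g^{q_t}$ and $\EE[\|\hat g^{q_t}\|^2\mid\FCal_t]\le 8n^2L^2/\mu_{q_t}$, so $\EE[(x^{q_t}-x)^\top\hat g^{q_t}\mid\FCal_t]=(x^{q_t}-x)^\top g^{q_t}$ and, by Cauchy--Schwarz with $\|g^{q_t}\|\le L$ (Lemma~\ref{Lemma:OGD-first}), $\EE[(x^{q_t}-x^t)^\top\hat g^{q_t}\mid\FCal_t]\le L\,\|x^{q_t}-x^t\|$. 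Proposition~\ref{Prop:structure} then supplies $(x^{q_t}-x)^\top g^{q_t}\ge (f_{q_t})_L(x^{q_t})-\bigl(\tfrac1\alpha(\bar{f}_{q_t})_C(x)+\beta(-\ushort{f}_{q_t})_C(x)\bigr)$. Adding the drift term to the projection inequality, taking conditional and then total expectations, and summing over $t=1,\dots,\bar T$ with $q_{\bar T}=T$ assembles the bound: the squared-distance differences telescope in expectation and, using $\|x-x^t\|\le\sqrt n$ and $\{\eta_t\}$ nonincreasing, give the $\tfrac{n}{2\eta_{\bar T}}$ term; the $\tfrac{\eta_t}{2}\EE[\|\hat g^{q_t}\|^2]$ terms give $4n^2L^2\sum_{t\le\bar T}\eta_t/\mu_{q_t}$; the drift terms give $4nL^2\sum_{t\le\bar T}\sum_{s=q_t}^{t-1}\eta_s/\mu_s$; and, exactly as in Lemma~\ref{Lemma:DOGD-first}, the priority-queue bijection together with $f_\infty=0$ on empty-pool rounds rewrites $\sum_{t\le\bar T}(f_{q_t})_L(x^{q_t})$ as $\sum_{t\le T}(f_t)_L(x^t)$ and likewise for the $x$-dependent sum.

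The main obstacle I anticipate is controlling the cross term $(x^{q_t}-x^t)^\top\hat g^{q_t}$ without losing polynomial factors: bounding \emph{both} factors by their almost-sure norm estimates would produce a term of order $n^2L^2\sum_t\mu_{q_t}^{-1}\sum_{s=q_t}^{t-1}\eta_s/\mu_s$, larger than what is claimed by a factor $n/\mu_{q_t}$. The remedy is to condition \emph{before} the sampling $S^{q_t}$ that builds $\hat g^{q_t}$ — legitimate precisely because $q_t\notin\{q_1,\dots,q_{t-1}\}$ makes $x^{q_t}$ and $x^t$ measurable with respect to the earlier randomness — so that one estimator factor collapses to $g^{q_t}$ with $\|g^{q_t}\|\le L$ and only $\EE[\|\hat g^{q_t}\|^2\mid\FCal_t]=O(n^2/\mu_{q_t})$ is left to bound in expectation. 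Getting this conditioning and measurability bookkeeping right, rather than any single inequality, is where the delayed-bandit argument genuinely goes beyond both Lemma~\ref{Lemma:DOGD-first} and Lemma~\ref{Lemma:BGD-second}.
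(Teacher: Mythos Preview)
Your proposal is correct and follows essentially the same route as the paper: project, take conditional expectation to replace $\hat g^{q_t}$ by $g^{q_t}$ (using that the priority queue never reuses an index, so $S^{q_t}$ is fresh relative to $x^t$), invoke Proposition~\ref{Prop:structure} at $x^{q_t}$, control the drift $\|x^{q_t}-x^t\|$ by the almost-sure bound on the estimators, and sum. Your explicit measurability bookkeeping for $\FCal_t$ and your observation that $\mu_{q_s}\ge\mu_s$ are exactly the points the paper leaves implicit (the paper writes $\|\hat g^s\|$ where it should write $\|\hat g^{q_s}\|$, but the final bound is the same); also note that both your argument and the paper's proof actually yield $\tfrac{n}{2\eta_{\bar T}}$ rather than the $\tfrac{n}{2\eta_T}$ appearing in the lemma statement.
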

\begin{proof}
Using the same argument as in Lemma~\ref{Lemma:OGD-second}, we have
\begin{equation*}
(x^t - x)^\top \hat{g}^{q_t} \leq \tfrac{1}{2\eta_t}\left(\|x - x^t\|^2 - \|x - x^{t+1}\|^2\right) + \tfrac{\eta_t}{2}\|\hat{g}^{q_t}\|^2. 
\end{equation*}
Since our pooling policy is induced by a priority queue, $\hat{g}^{q_t}$ has never been used before updating $x^{t+1}$.  Thus, we have $\EE[\hat{g}^{q_t} \mid x^t] = \EE[\hat{g}^{q_t} \mid x^{q_t}]$ and $\EE[\|\hat{g}^{q_t}\|^2 \mid x^t] = \EE[\|\hat{g}^{q_t}\|^2 \mid x^{q_t}]$. By Lemma~\ref{Lemma:BGD-first}, we have $\EE[\hat{g}^{q_t} \mid x^{q_t}] = g^{q_t}$ and $\EE[\|\hat{g}^{q_t}\|^2 \mid x^{q_t}] \leq \tfrac{8n^2L^2}{\mu_{q_t}}$ for all $t \geq 1$.  Putting these pieces together yields that 
\begin{equation*}
(x^t - x)^\top g^{q_t} \leq \tfrac{1}{2\eta_t}\left(\|x - x^t\|^2 - \EE[\|x - x^{t+1}\|^2 \mid x^t]\right) + \tfrac{4n^2 L^2\eta_t}{\mu_{q_t}}. 
\end{equation*}
Since $f_t = \bar{f}_t - \ushort{f}_t$ where $\bar{f}_t$ and $\ushort{f}_t$ are both normalized and non-decreasing, $\bar{f}_t$ is $\alpha$-weakly DR-submodular and $\ushort{f}_t$ is $\beta$-weakly DR-supermodular, Proposition~\ref{Prop:structure} implies that 
\begin{equation*}
(x^{q_t} - x)^\top g^{q_t} \geq (f_{q_t})_L(x^{q_t}) - \left(\tfrac{1}{\alpha}(\bar{f}_{q_t})_C(x) + \beta(-\ushort{f}_{q_t})_C(x)\right). 
\end{equation*}
By Lemma~\ref{Lemma:OGD-first}, we have $\|g^t\| \leq L$ for all $t \geq 1$. Then, we have
\begin{equation}\label{inequality:DBGD-first}
(f_{q_t})_L(x^{q_t}) \leq \tfrac{1}{\alpha}(\bar{f}_{q_t})_C(x) + \beta(-\ushort{f}_{q_t})_C(x) + \tfrac{1}{2\eta_t}\left(\|x - x^t\|^2 - \EE[\|x - x^{t+1}\|^2 \mid x^t]\right) + L\|x^{q_t} - x^t\| + \tfrac{4n^2 L^2\eta_t}{\mu_{q_t}}. 
\end{equation}
Further, by Lemma~\ref{Lemma:BGD-first}, we have
\begin{equation}\label{inequality:DBGD-second}
\|x^{q_t} - x^t\| \leq \sum_{s=q_t}^{t-1} \eta_s \|\hat{g}^s\| \leq 2(n+1)L\left(\sum_{s=q_t}^{t-1} \tfrac{\eta_s}{\mu_s}\right). 
\end{equation}
Plugging Eq.~\eqref{inequality:DBGD-second} into Eq.~\eqref{inequality:DBGD-first} yields that 
\begin{eqnarray*}
\lefteqn{(f_{q_t})_L(x^{q_t}) \leq \tfrac{1}{\alpha}(\bar{f}_{q_t})_C(x) + \beta(-\ushort{f}_{q_t})_C(x)} \\
& & + \tfrac{1}{2\eta_t}\left(\|x - x^t\|^2 - \EE[\|x - x^{t+1}\|^2 \mid x^t]\right) + \tfrac{4n^2 L^2\eta_t}{\mu_{q_t}} + 4nL^2\left(\sum_{s=q_t}^{t-1} \tfrac{\eta_s}{\mu_s}\right). 
\end{eqnarray*}
By using the same argument as in Lemma~\ref{Lemma:DOGD-first}, we have
\begin{eqnarray*}
\sum_{t=1}^T (f_t)_L(x^t) & \leq & \left(\sum_{t=1}^T \tfrac{1}{\alpha}(\bar{f}_t)_C(x) + \beta(-\ushort{f}_t)_C(x)\right) + \sum_{t=1}^{\bar{T}} \tfrac{1}{2\eta_t}\left(\|x - x^t\|^2 - \EE[\|x - x^{t+1}\|^2 \mid x^t]\right) \\ 
& & + 4n^2L^2\left(\sum_{t=1}^{\bar{T}} \tfrac{\eta_t}{\mu_{q_t}}\right) + 4nL^2\left(\sum_{t=1}^{\bar{T}} \sum_{s=q_t}^{t-1} \tfrac{\eta_s}{\mu_s}\right). 
\end{eqnarray*}
Taking the expectation of both sides of the above inequality and using $\|x^t - x\| \leq \sqrt{n}$ for all $t \geq 1$ (cf. Lemma~\ref{Lemma:BGD-first}) and that $\{\eta_t\}_{t \geq 1}$ is nonincreasing yields the desired inequality. 
\end{proof}
Then, we provide our second lemma which significantly generalizes Lemma~\ref{Lemma:DBGD-first} for deriving the high-probability bounds. 
\begin{lemma}\label{Lemma:DBGD-second}
Suppose that the iterates $\{x^t\}_{t \geq 1}$ are generated by Algorithm~\ref{alg:DBGD} with $\eta_t = \frac{1}{Lt^{(2+\gamma)/3}}$, $\mu_t = \frac{n}{t^{(1-\gamma)/3}}$ and $x \in [0, 1]^n$ and let $f_t = \bar{f}_t - \ushort{f}_t$ satisfy that $\bar{f}_t([n]) + \ushort{f}_t([n]) \leq L$ for all $t \geq 1$.  Fixing a sufficiently small $\delta \in (0, 1)$ and letting $T > \log^{\frac{3}{2 + \gamma}}(1/\delta)$. Then, we have
\begin{eqnarray*}
\sum_{t=1}^T (f_t)_L(x^t) & \leq & \left(\sum_{t=1}^T \tfrac{1}{\alpha}\bar{f}_t(S) - \beta\ushort{f}_t(S)\right) + \tfrac{n}{2\eta_{\bar{T}}} + 4n^2L^2\left(\sum_{t=1}^{\bar{T}} \tfrac{\eta_t}{\mu_{q_t}}\right) + 4nL^2\left(\sum_{t=1}^{\bar{T}} \sum_{s=q_t}^{t-1} \tfrac{\eta_s}{\mu_s}\right) \\ 
& & + 12L\bar{T}^{\frac{4 - \gamma}{6}}\sqrt{n^2 + n\log(1/\delta)} + 6L\sqrt{n\bar{T}\log(1/\delta)}, 
\end{eqnarray*}
with probability at least $1 - 3\delta$ where $\bar{T} > 0$ in the above inequality satisfies that $q_{\bar{T}} = T$. 
\end{lemma}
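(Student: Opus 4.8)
The plan is to re-run the argument behind Lemma~\ref{Lemma:DBGD-first} but \emph{without} taking expectations, keeping the martingale fluctuation terms explicit and then controlling them with the Bernstein inequality of Proposition~\ref{Prop:Bernstein}, precisely as Lemma~\ref{Lemma:BGD-third} does relative to Lemma~\ref{Lemma:BGD-second}. First I would start, exactly as in the proof of Lemma~\ref{Lemma:DBGD-first}, from the one-step projection inequality for the pooled estimator, $(x^t - x)^\top\hat{g}^{q_t} \le \tfrac{1}{2\eta_t}(\|x - x^t\|^2 - \|x - x^{t+1}\|^2) + \tfrac{\eta_t}{2}\|\hat{g}^{q_t}\|^2$ whenever $\PCal_t \neq \emptyset$. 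Writing $e_t := \hat{g}^{q_t} - g^{q_t}$ and using the priority-queue property that $\hat{g}^{q_t}$ has not been used before $x^{t+1}$ is formed (so that $\EE[\hat{g}^{q_t}\mid x^t] = \EE[\hat{g}^{q_t}\mid x^{q_t}] = g^{q_t}$, cf. Lemma~\ref{Lemma:DBGD-first}), I would trade $\hat{g}^{q_t}$ for $g^{q_t}$ at the price of the term $(x^t - x)^\top e_t$; shift the base point from $x^t$ to $x^{q_t}$ using $\|g^{q_t}\| \le L$ (Lemma~\ref{Lemma:OGD-first}) and the drift bound $\|x^{q_t} - x^t\| \le \sum_{s=q_t}^{t-1}\eta_s\|\hat{g}^s\| \le 2(n+1)L\sum_{s=q_t}^{t-1}\eta_s/\mu_s$ (the worst-case estimate of Lemma~\ref{Lemma:BGD-first}); and lower-bound $(x^{q_t} - x)^\top g^{q_t}$ by $(f_{q_t})_L(x^{q_t}) - (\tfrac{1}{\alpha}(\bar{f}_{q_t})_C(x) + \beta(-\ushort{f}_{q_t})_C(x))$ via Proposition~\ref{Prop:structure}.

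Next I would sum over $t = 1,\dots,\bar{T}$ with $q_{\bar{T}} = T$. The priority-queue reindexing from the proof of Lemma~\ref{Lemma:DOGD-first} collapses $\sum_{t=1}^{\bar{T}}(f_{q_t})_L(x^{q_t})$ to $\sum_{t=1}^T (f_t)_L(x^t)$ and does the same for the convex-closure sums (empty-pool rounds contribute zero); the $1/(2\eta_t)$ terms telescope since $\{\eta_t\}$ is nonincreasing and $\|x^1 - x\|^2 \le n$; and putting $x = \chi_S$ replaces the convex closures by $\tfrac{1}{\alpha}\bar{f}_t(S) - \beta\ushort{f}_t(S)$ because the convex closure agrees with the set function at integer points. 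What remains, besides the deterministic terms $\tfrac{n}{2\eta_{\bar{T}}}$, $4nL^2\sum_{t=1}^{\bar{T}}\sum_{s=q_t}^{t-1}\eta_s/\mu_s$ (from the drift) and $4n^2L^2\sum_{t=1}^{\bar{T}}\eta_t/\mu_{q_t}$ (from $\tfrac{\eta_t}{2}\EE[\|\hat{g}^{q_t}\|^2\mid x^{q_t}]$ via Lemma~\ref{Lemma:BGD-first}), is the sum of the two stochastic terms $\mathbf{I} := \sum_{t=1}^{\bar{T}}(\chi_S - x^t)^\top e_t$ and $\mathbf{II} := \sum_{t=1}^{\bar{T}}\tfrac{\eta_t}{2}(\|\hat{g}^{q_t}\|^2 - \EE[\|\hat{g}^{q_t}\|^2\mid x^{q_t}])$.

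Finally I would estimate $\mathbf{I}$ and $\mathbf{II}$ with Proposition~\ref{Prop:Bernstein}, mimicking Lemma~\ref{Lemma:BGD-third} but now with the time-varying schedules $\mu_t = n/t^{(1-\gamma)/3}$ and $\eta_t = 1/(Lt^{(2+\gamma)/3})$. For $\mathbf{I}$, applied separately to $(x^t)^\top\hat{g}^{q_t}$ and to $(\chi_S)^\top\hat{g}^{q_t}$ (which are martingale differences after subtracting $g^{q_t}$), the per-round magnitude is at most $2(n+1)L/\mu_{q_t} \le 2(n+1)L/\mu_{\bar{T}} = O(L\bar{T}^{(1-\gamma)/3})$ and the conditional second moment at most $O(nL^2\bar{T}^{(1-\gamma)/3})$, so Bernstein over $\bar{T}$ rounds gives a deviation of order $L\bar{T}^{(4-\gamma)/6}\sqrt{n\log(1/\delta)}$; a union bound over the $2^n$ choices of $S$ (replacing $\delta$ by $\delta/2^n$ and using $\sqrt{n\log(2^n/\delta)} \le \sqrt{n^2 + n\log(1/\delta)}$) yields $\mathbf{I} \le 12L\bar{T}^{(4-\gamma)/6}\sqrt{n^2 + n\log(1/\delta)}$ with probability at least $1 - 2\delta$. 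For $\mathbf{II}$, the inequality $q_t \le t$ together with $\gamma < 1$ makes both $\tfrac{\eta_t}{2}\|\hat{g}^{q_t}\|^2$ and its conditional second moment bounded uniformly in $t$ — of order $L$ and $nL^2$ — so Bernstein gives $\mathbf{II} = O(L\sqrt{n\bar{T}\log(1/\delta)})$ with probability at least $1 - \delta$, and the hypothesis $T > \log^{3/(2+\gamma)}(1/\delta)$ (with $3/(2+\gamma) \ge 1$) makes $\bar{T} \ge T$ large enough to absorb the additive $\log(1/\delta)$ term of Bernstein into the leading $\sqrt{\bar{T}}$ term, producing the constant $6$. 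A union bound over the two failure events gives the claim with probability at least $1 - 3\delta$. I expect the main obstacle to be the bookkeeping of the drift/fluctuation decomposition: identifying the right filtration so that $\EE[\hat{g}^{q_t}\mid x^t] = g^{q_t}$ is legitimate (this is where the priority-queue structure is essential), and checking that, despite $\eta_t$ and $\mu_t$ varying with $t$, the per-round Bernstein parameters stay bounded uniformly in $t$ — which is exactly where $q_t \le t$ and $\gamma < 1$ enter.
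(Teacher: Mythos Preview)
Your proposal is correct and follows essentially the same route as the paper: the same one-step decomposition into the deterministic telescoping, drift, and $\EE[\|\hat g^{q_t}\|^2]$ terms plus the two martingale fluctuations $\mathbf{I}$ and $\mathbf{II}$, controlled by Proposition~\ref{Prop:Bernstein} with a union bound over the $2^n$ choices of $S$ for $\mathbf{I}$, and the uniform-in-$t$ Bernstein parameters for $\mathbf{II}$ obtained from $q_t\le t$ and the specific schedules. Your bookkeeping (notation $e_t:=\hat g^{q_t}-g^{q_t}$, conditioning on $x^{q_t}$ for $\mathbf{II}$, and bounding via $\mu_{q_t}\ge\mu_{\bar T}$) is in fact slightly cleaner than the paper's, which has minor index slips ($\hat g^t$ vs.\ $\hat g^{q_t}$, $\mu_t$ vs.\ $\mu_{q_t}$) that do not affect the argument.
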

\begin{proof}
Using the same argument as in Lemma~\ref{Lemma:DBGD-first}, we have
\begin{equation*}
(x^t - x)^\top \hat{g}^{q_t} \leq \tfrac{1}{2\eta_t}\left(\|x - x^t\|^2 - \|x - x^{t+1}\|^2\right) + \tfrac{\eta_t}{2}\|\hat{g}^{q_t}\|^2, 
\end{equation*}
and 
\begin{equation*}
(x^{q_t} - x)^\top g^{q_t} \geq (f_{q_t})_L(x^{q_t}) - \left(\tfrac{1}{\alpha}(\bar{f}_{q_t})_C(x) + \beta(-\ushort{f}_{q_t})_C(x)\right). 
\end{equation*}
For simplicity, we define $e_t = \hat{g}^t - g^t$. By Lemma~\ref{Lemma:OGD-first}, we have $\|g^t\| \leq L$ for all $t \geq 1$. Then, we have
\begin{eqnarray}\label{inequality:DBGD-third}
\lefteqn{(f_{q_t})_L(x^{q_t}) - \left(\tfrac{1}{\alpha}(\bar{f}_{q_t})_C(x) + \beta(-\ushort{f}_{q_t})_C(x)\right)} \\ 
& \leq & (x - x^t)^\top e_{q_t} + \tfrac{1}{2\eta_t}\left(\|x - x^t\|^2 - \|x - x^{t+1}\|^2\right) + L\|x^{q_t} - x^t\| + \tfrac{\eta_t}{2}\|\hat{g}^{q_t}\|^2. \nonumber
\end{eqnarray}
Plugging Eq.~\eqref{inequality:DBGD-second} into Eq.~\eqref{inequality:DBGD-third} yields that 
\begin{eqnarray*}
\lefteqn{(f_{q_t})_L(x^{q_t}) - \left(\tfrac{1}{\alpha}(\bar{f}_{q_t})_C(x) + \beta(-\ushort{f}_{q_t})_C(x)\right)} \\ 
& \leq & (x - x^t)^\top e_{q_t} + \tfrac{1}{2\eta_t}\left(\|x - x^t\|^2 - \|x - x^{t+1}\|^2\right) + \tfrac{\eta_t}{2}(\|\hat{g}^t \|^2 - \EE[\|\hat{g}^t \|^2 \mid x^t]) + \tfrac{4n^2 L^2\eta_t}{\mu_{q_t}} + 4nL^2\left(\sum_{s=q_t}^{t-1} \tfrac{\eta_s}{\mu_s}\right). 
\end{eqnarray*}
By using the same argument as in Lemma~\ref{Lemma:DOGD-first}, we have
\begin{eqnarray*}
\sum_{t=1}^T (f_t)_L(x^t) & \leq & \left(\sum_{t=1}^T \tfrac{1}{\alpha}(\bar{f}_t)_C(x) + \beta(-\ushort{f}_t)_C(x)\right) + \sum_{t=1}^{\bar{T}} (x - x^t)^\top e_{q_t} + \sum_{t=1}^{\bar{T}} \tfrac{\eta_t}{2}(\|\hat{g}^t \|^2 - \EE[\|\hat{g}^t \|^2 \mid x^t]) \\ 
& & + \tfrac{n}{2\eta_{\bar{T}}} + 4n^2L^2\left(\sum_{t=1}^{\bar{T}} \tfrac{\eta_t}{\mu_{q_t}}\right) + 4nL^2\left(\sum_{t=1}^{\bar{T}} \sum_{s=q_t}^{t-1} \tfrac{\eta_s}{\mu_s}\right). 
\end{eqnarray*}
By the definition of the convex closure,  we obtain that the convex closure of a set function $f$ agrees with $f$ on all the integer points~\citep[Page~4, Proposition~3.3]{Dughmi-2009-Submodular}.  Letting $S \subseteq [n]$, we have $(\bar{f}_t)_C(\chi_S) = f_t(S)$ and $(-\ushort{f}_t)_C(\chi_S) = - \beta\ushort{f}_t(S)$ which implies that 
\begin{equation*}
\tfrac{1}{\alpha}(\bar{f}_t)_C(\chi_S) + \beta(-\ushort{f}_t)_C(\chi_S) = \tfrac{1}{\alpha}f_t(S) - \beta\ushort{f}_t(S). 
\end{equation*}
Letting $x = \chi_S$, we have
\begin{eqnarray}\label{inequality:DBGD-fourth}
\sum_{t=1}^T (f_t)_L(x^t) & \leq & \left(\sum_{t=1}^T \tfrac{1}{\alpha}\bar{f}_t(S) - \beta\ushort{f}_t(S)\right) + \underbrace{\sum_{t=1}^{\bar{T}} (\chi_S - x^t)^\top e_{q_t}}_{\textbf{I}} + \underbrace{\left(\sum_{t=1}^{\bar{T}} \tfrac{\eta_t}{2}(\|\hat{g}^t \|^2 - \EE[\|\hat{g}^t \|^2 \mid x^t])\right)}_{\textbf{II}} \nonumber \\
& & + \tfrac{n}{2\eta_{\bar{T}}} + 4n^2L^2\left(\sum_{t=1}^{\bar{T}} \tfrac{\eta_t}{\mu_{q_t}}\right) + 4nL^2\left(\sum_{t=1}^{\bar{T}} \sum_{s=q_t}^{t-1} \tfrac{\eta_s}{\mu_s}\right).  
\end{eqnarray}
In what follows,  we prove the high probability bounds for the terms \textbf{I} and \textbf{II} in the above inequality. 
\paragraph{Bounding \textbf{I}.} Consider the random variables $X_t = (x^t)^\top\hat{g}^{q_t}$ for all $1 \leq t \leq \bar{T}$ that are adapted to the natural filtration generated by the iterates $\{x_t\}_{t \geq 1}$. By Lemma~\ref{Lemma:BGD-first} and the H\"{o}lder's inequality, we have 
\begin{equation*}
|X_t| \leq \|\hat{g}^{q_t}\|_1\|x^t\|_\infty \leq \tfrac{2(n+1)L}{\mu_t}.
\end{equation*}
Since $\mu = \frac{n}{t^{(1-\gamma)/3}}$, we have $|X_t| \leq 4L\bar{T}^{\frac{1-\gamma}{3}}$ for all $1 \leq t \leq \bar{T}$. Further, we have
\begin{equation*}
\EE[X_t^2 \mid x_t] \leq \EE[\|\hat{g}^t\|_1^2\|x^t\|_\infty^2 \mid x_t] \leq \tfrac{2(n+1)^2L^2}{\mu_t} \leq 8nL^2 \bar{T}^{\frac{1-\gamma}{3}}.  
\end{equation*} 
Since $\EE[\hat{g}^{q_t} \mid x^t] = g^{q_t}$ and $e_{q_t} = \hat{g}^{q_t} - g^{q_t}$, Proposition~\ref{Prop:Bernstein} implies that
\begin{equation*}
\PP\left(\left|\sum_{t=1}^{\bar{T}} (x^t)^\top e_{q_t}\right| > 4L\bar{T}^{\frac{4 - \gamma}{6}}\sqrt{n\log(1/\delta)} + 2L\bar{T}^{\frac{1-\gamma}{3}}\log(1/\delta)\right) \leq \delta. 
\end{equation*}
Since $\bar{T} \geq T > \log^{\frac{3}{2+\gamma}}(1/\delta)$, we have $\bar{T}^{\frac{4-\gamma}{6}}\sqrt{\log(1/\delta)} \geq \bar{T}^{\frac{1-\gamma}{3}}\log(1/\delta)$. This implies that 
\begin{equation*}
\PP\left(\left|\sum_{t=1}^{\bar{T}} (x^t)^\top e_{q_t}\right| > 6L\bar{T}^{\frac{4 - \gamma}{6}}\sqrt{n\log(1/\delta)} \right) \leq \delta. 
\end{equation*}
Similarly, we fix a set $S \subseteq [n]$ and consider the random variable $X_t = (\chi_S)^\top\hat{g}^t$ for all $1 \leq t \leq \bar{T}$ that are adapted to the natural filtration generated by the iterates $\{x_t\}_{t \geq 1}$. By repeating the above argument with $\frac{\delta}{2^n}$, we have
\begin{equation*}
\PP\left(\left|\sum_{t=1}^{\bar{T}} (\chi_S)^\top e_{q_t}\right| > 6L\bar{T}^{\frac{4 - \gamma}{6}}\sqrt{n\log(2^n/\delta)} \right) \leq \tfrac{\delta}{2^n}. 
\end{equation*}
By taking a union bound over the $2^n$ choices of $S$, we obtain that 
\begin{equation*}
\PP\left(\left|\sum_{t=1}^{\bar{T}} (\chi_S)^\top e_{q_t}\right| > 6L\bar{T}^{\frac{4 - \gamma}{6}}\sqrt{n\log(2^n/\delta)} \right) \leq \delta, \quad \textnormal{for any } S \subseteq [n]. 
\end{equation*}
Since $\sqrt{n\log(2^n/\delta)} \leq \sqrt{n^2 + n\log(1/\delta)}$, we have $\textbf{I} \leq 12L\bar{T}^{\frac{4 - \gamma}{6}}\sqrt{n^2 + n\log(1/\delta)}$ with probability at least $1 - 2\delta$. 

\paragraph{Bounding \textbf{II}.} Consider the random variables $X_t = \frac{\eta_t}{2}\|\hat{g}^{q_t}\|^2$ for all $1 \leq t \leq \bar{T}$ that are adapted to the natural filtration generated by the iterates $\{x^t\}_{t \geq 1}$. By Lemma~\ref{Lemma:BGD-first}, we have $|X_t| \leq \tfrac{(n+1)^2L^2\eta_t}{\mu_t^2}$. Since $\eta_t = \frac{1}{Lt^{(2+\gamma)/3}}$ and $\mu_t = \frac{n}{t^{(1-\gamma)/3}}$, we have $|X_t| \leq 4L$. Further, we have
\begin{equation*}
\EE[X_t^2 \mid x_t] \leq \tfrac{(n+1)^4L^4\eta_t^2}{2\mu_t^3} \leq 8nL^2. 
\end{equation*} 
Applying Proposition~\ref{Prop:Bernstein}, we have
\begin{equation*}
\PP\left(\left|\sum_{t=1}^{\bar{T}} \tfrac{\eta_t}{2}(\|\hat{g}^{q_t}\|^2 - \EE[\|\hat{g}^{q_t}\|^2 \mid x^t])\right| > 4L\sqrt{n\bar{T}\log(1/\delta)} + 2L\log(1/\delta)\right) \leq \delta. 
\end{equation*}
Since $\bar{T} \geq T > \log^{\frac{3}{2+\gamma}}(1/\delta)$, we have $\sqrt{\bar{T}\log(1/\delta)} \geq \log(1/\delta)$. This implies that 
\begin{equation*}
\PP\left(\left|\sum_{t=1}^{\bar{T}} \tfrac{\eta_t}{2}(\|\hat{g}^{q_t}\|^2 - \EE[\|\hat{g}^{q_t}\|^2 \mid x^t])\right| > 6L\sqrt{n\bar{T}\log(1/\delta)} \right) \leq \delta. 
\end{equation*}
Therefore, we conclude that $\textbf{II} \leq 6L\sqrt{n\bar{T}\log(1/\delta)}$ with probability at least $1 - \delta$. Putting these pieces together with Eq.~\eqref{inequality:DBGD-fourth} yields that 
\begin{eqnarray*}
\sum_{t=1}^T (f_t)_L(x^t) & \leq & \left(\sum_{t=1}^T \tfrac{1}{\alpha}\bar{f}_t(S) - \beta\ushort{f}_t(S)\right) + \tfrac{n}{2\eta_{\bar{T}}} + 4n^2L^2\left(\sum_{t=1}^{\bar{T}} \tfrac{\eta_t}{\mu_{q_t}}\right) + 4nL^2\left(\sum_{t=1}^{\bar{T}} \sum_{s=q_t}^{t-1} \tfrac{\eta_s}{\mu_s}\right) \\ 
& & + 12L\bar{T}^{\frac{4 - \gamma}{6}}\sqrt{n^2 + n\log(1/\delta)} + 6L\sqrt{n\bar{T}\log(1/\delta)}, 
\end{eqnarray*}
with probability at least $1 - 3\delta$. 
\end{proof}

\subsection{Proof of Theorem~\ref{Theorem:DBGD}}
By~\citet[Corollary~1]{Heliou-2020-Gradient}, we have $t - q_t = o(t^\gamma)$ under Assumption~\ref{Assumption:delay}; in particular, we have $t - q_t = o(t)$ and $q_t = \Theta(t)$.  Since $q_{\bar{T}} = T$, we have $T = \Theta(\bar{T})$ which implies that $\bar{T} = \Theta(T)$.  Recall that $\eta_t = \frac{1}{Lt^{(2+\gamma)/3}}$ and $\mu_t = \frac{n}{t^{(1-\gamma)/3}}$, we have
\begin{eqnarray*}
\tfrac{n}{2\eta_{\bar{T}}} & = & \tfrac{nL\bar{T}^{\frac{2+\gamma}{3}}}{2} \ = \ O(nT^{\frac{2+\gamma}{3}}), \\
4n^2L^2\left(\sum_{t=1}^{\bar{T}} \tfrac{\eta_t}{\mu_{q_t}}\right) & = & 4nL\left(\sum_{t=1}^{\bar{T}} \tfrac{(q_t)^{\frac{1-\gamma}{3}}}{t^{\frac{2+\gamma}{3}}}\right) \ = \ O\left(nL\sum_{t=1}^{\bar{T}} \tfrac{1}{t^{\frac{1+2\gamma}{3}}}\right) \ = \ O(nL\bar{T}^{\frac{2-2\gamma}{3}}) \ = \ O(nT^{\frac{2-2\gamma}{3}}), \\
4nL^2\left(\sum_{t=1}^{\bar{T}} \sum_{s=q_t}^{t-1} \tfrac{\eta_s}{\mu_s}\right) & \leq & 4L\left(\sum_{t=1}^{\bar{T}} (t - q_t)\tfrac{\eta_{q_t}}{\mu_t}\right) \ = \ O\left(L\sum_{t=1}^{\bar{T}} \tfrac{1}{t^{\frac{1-\gamma}{3}}}\right) \ = \ O(L\bar{T}^{\frac{2+\gamma}{3}}) \ = \ O(T^{\frac{2+\gamma}{3}}),
\end{eqnarray*}
Putting these pieces together with Lemma~\ref{Lemma:DBGD-first} yields that 
\begin{equation}\label{inequality:DBGD-fifth}
\sum_{t=1}^T \EE[(f_t)_L(x^t)] - \left(\sum_{t=1}^T \tfrac{1}{\alpha}(\bar{f}_t)_C(x) + \beta(-\ushort{f}_t)_C(x)\right) = O(nT^{\frac{2+\gamma}{3}}).  
\end{equation}
By using the similar argument as in Theorem~\ref{Theorem:BGD}, we have
\begin{equation}\label{inequality:DBGD-sixth}
\EE[f_t(S^t) \mid x^t] - (f_t)_L(x^t) \leq L\mu_t\sum_{i=0}^n \left(\tfrac{1}{n+1} + \lambda_i^t\right) = 2L\mu_t. 
\end{equation}
which implies that 
\begin{equation*}
\sum_{t=1}^T \EE[f_t(S^t)] - \EE[(f_t)_L(x^t)] \ \leq \ 2L \sum_{t=1}^T\mu_t = O(nT^{\frac{2+\gamma}{3}}). 
\end{equation*}
Using the same argument as in Theorem~\ref{Theorem:OGD}, we have
\begin{equation*}
\tfrac{1}{\alpha}(\bar{f}_t)_C(\chi_{S_\star^T}) + \beta(-\ushort{f}_t)_C(\chi_{S_\star^T}) = \tfrac{1}{\alpha}f_t(S_\star^T) - \beta\ushort{f}_t(S_\star^T), \quad \textnormal{where } S_\star^T = \argmin_{S \subseteq [n]} \sum_{t=1}^T f_t(S). 
\end{equation*}
Putting these pieces together and letting $x = \chi_{S_\star^T}$ in Eq.~\eqref{inequality:DBGD-fifth} yields that 
\begin{equation*}
\sum_{t=1}^T \EE[f_t(S^t)] - \left(\sum_{t=1}^T \tfrac{1}{\alpha} \bar{f}_t(S_\star^T) - \beta\ushort{f}_t(S_\star^T)\right) = O(nT^{\frac{2+\gamma}{3}}). 
\end{equation*}
which implies that $\EE[R_{\alpha, \beta}(T)] = O(nT^{\frac{2+\gamma}{3}})$ as desired. 

We proceed to derive a high probability bound using Lemma~\ref{Lemma:DBGD-second}. Indeed, we first consider the case of $T < 2\log^{\frac{3}{2+\gamma}}(1/\delta)$.  Since $f_t = \bar{f}_t - \ushort{f}_t$ satisfy that $\bar{f}_t([n]) + \ushort{f}_t([n]) \leq L$ for all $t \geq 1$, we have
\begin{equation*}
R_{\alpha, \beta}(T) \leq \sum_{t=1}^T f_t(S^t) - \sum_{t=1}^T (\tfrac{1}{\alpha}\bar{f}_t(S_\star^T) - \beta\ushort{f}_t(S_\star^T)) \leq \left(1 + \tfrac{1}{\alpha} + \beta\right)LT = O(T^{\frac{4-\gamma}{6}}\sqrt{\log(1/\delta)}). 
\end{equation*}
For the case of $T \geq 2\log^{\frac{3}{2+\gamma}}(1/\delta)$, we obtain by combining Lemma~\ref{Lemma:DBGD-second} with Eq.~\eqref{inequality:DBGD-sixth} that 
\begin{eqnarray*}
\lefteqn{\sum_{t=1}^T \EE[f_t(S^t) \mid x^t] \leq \left(\sum_{t=1}^T \tfrac{1}{\alpha}\bar{f}_t(S) - \beta\ushort{f}_t(S)\right) + \tfrac{n}{2\eta_{\bar{T}}} + 2L\left(\sum_{t=1}^T\mu_t\right) + 4n^2L^2\left(\sum_{t=1}^{\bar{T}} \tfrac{\eta_t}{\mu_{q_t}}\right)} \\ 
& & + 4nL^2\left(\sum_{t=1}^{\bar{T}} \sum_{s=q_t}^{t-1} \tfrac{\eta_s}{\mu_s}\right) + 12L\bar{T}^{\frac{4 - \gamma}{6}}\sqrt{n^2 + n\log(1/\delta)} + 6L\sqrt{n\bar{T}\log(1/\delta)}, 
\end{eqnarray*}
with probability at least $1 - 3\delta$.  Then, it suffices to bound the term $\sum_{t=1}^T f_t(S^t) - \sum_{t=1}^T \EE[f_t(S^t) \mid x^t]$ using Proposition~\ref{Prop:Bernstein}.  By using the same argument as in Theorem~\ref{Theorem:BGD}, we have
\begin{equation*}
\PP\left(\left|\sum_{t=1}^T f_t(S^t) - \EE[f_t(S^t) \mid x^t]\right| > 3L\sqrt{T\log(1/\delta)} \right) \leq \delta, 
\end{equation*}
which implies that $\sum_{t=1}^T f_t(S^t) - \sum_{t=1}^T \EE[f_t(S^t) \mid x^t] \leq 3L\sqrt{T\log(1/\delta)}$ with probability at least $1 - \delta$. Putting these pieces together yields that 
\begin{eqnarray*}
\lefteqn{\sum_{t=1}^T f_t(S^t) \leq \left(\sum_{t=1}^T \tfrac{1}{\alpha}\bar{f}_t(S) - \beta\ushort{f}_t(S)\right) + \tfrac{n}{2\eta_{\bar{T}}} + 2L\left(\sum_{t=1}^T\mu_t\right) + 4n^2L^2\left(\sum_{t=1}^{\bar{T}} \tfrac{\eta_t}{\mu_{q_t}}\right)} \\ 
& & + 4nL^2\left(\sum_{t=1}^{\bar{T}} \sum_{s=q_t}^{t-1} \tfrac{\eta_s}{\mu_s}\right) + 3L\sqrt{T\log(1/\delta)} + 12L\bar{T}^{\frac{4 - \gamma}{6}}\sqrt{n^2 + n\log(1/\delta)} + 6L\sqrt{n\bar{T}\log(1/\delta)}, 
\end{eqnarray*}
with probability at least $1 - 4\delta$.  Plugging the choices of $\eta_t = \frac{1}{Lt^{(2+\gamma)/3}}$ and $\mu_t = \frac{n}{t^{(1-\gamma)/3}}$ and $\bar{T} = \Theta(T)$ yields that 
\begin{equation*}
\sum_{t=1}^T f_t(S^t) - \left(\sum_{t=1}^T \tfrac{1}{\alpha}\bar{f}_t(S) - \beta\ushort{f}_t(S)\right) = O\left(nT^{\frac{2+\gamma}{3}} + \sqrt{n\log(1/\delta)}T^{\frac{4-\gamma}{6}}\right),  
\end{equation*}
with probability at least $1 - 4\delta$. Letting $S = S_\star^T = \argmin_{S \subseteq [n]} \sum_{t=1}^T f_t(S)$ and changing $\delta$ to $\frac{\delta}{4}$ yields that $R_{\alpha, \beta}(T) = O(nT^{\frac{2+\gamma}{3}} + \sqrt{n\log(1/\delta)}T^{\frac{4-\gamma}{6}})$ with probability at least $1 - \delta$ as desired.

\end{document}